\crefname{thm}{\text{Thm}}{\text{Thms}}
\crefname{assm}{\text{Assm}}{\text{Assms}}
\crefname{defn}{\text{Defn}}{\text{Defns}}
\crefname{cor}{\text{Cor}}{\text{Cors}}
\crefname{lemma}{\text{Lem}}{\text{Lems}}
\crefname{prop}{\text{Prop}}{\text{Props}}
\crefname{fact}{\text{Fact}}{\text{Facts}}
\renewcommand{\cite}{\citet}
\newcommand{\sphere}{\mathcal{S}}
\newcommand{\Geg}[2]{\p{C}{#1}_{#2}}
\newcommand{\Zonsph}[3]{Z^{#1,(#2)}_{#3}}
\newcommand{\SphHar}[2]{\mathcal{H}^{#1,(#2)}}
\newcommand{\trsp}{\top}
\newcommand{\onev}{\mathbbm{1}} %
\newcommand{\Vt}[1]{\mathrm{V}_{#1}}
\newcommand{\relu}{\mathrm{relu}}
\newcommand{\erf}{\mathrm{erf}}
\newcommand{\defeq}{\mathbin{\overset{\mathrm{def}}{=}}}
\newcommand{\boolcube}{\text{\mancube}}
\renewcommand{\onev}{\mathbbm{1}} %
\newcommand{\Xx}{\mathcal{X}}
\newcommand{\shiftop}{\mathcal{T}}
\newcommand{\URL}{\url{https://github.com/thegregyang/NNspectra}}
\icmltitlerunning{A Fine-Grained Spectral Perspective on Neural Networks}
\begin{document}

\twocolumn[
\icmltitle{A Fine-Grained Spectral Perspective on Neural Networks}

\begin{icmlauthorlist}
\icmlauthor{Greg Yang}{msft}
\icmlauthor{Hadi Salman}{msft,resident}
\end{icmlauthorlist}

\icmlaffiliation{msft}{{Microsoft Research, Redmond, WA, USA}}
\icmlaffiliation{resident}{Work done as part of the \href{https://www.microsoft.com/en-us/research/academic-program/microsoft-ai-residency-program/}{Microsoft AI Residency Program}}

\icmlcorrespondingauthor{Greg Yang}{\texttt{gregyang@microsoft.com}}

\icmlkeywords{Machine Learning, ICML}

\vskip 0.3in
]

\printAffiliationsAndNotice{}

\begin{abstract}
Are neural networks biased toward simple functions?
Does depth always help learn more complex features?
Is training the last layer of a network as good as training all layers?
How to set the range for learning rate tuning?
These questions seem unrelated at face value, but in this work we give all of them a common treatment from the spectral perspective.
We will study the spectra of the \emph{Conjugate Kernel, CK,} (also called the \emph{Neural Network-Gaussian Process Kernel}), and the \emph{Neural Tangent Kernel, NTK}.
Roughly, the CK and the NTK tell us respectively ``what a network looks like at initialization'' and ``what a network looks like during and after training.''
Their spectra then encode valuable information about the initial distribution and the training and generalization properties of neural networks.
By analyzing the eigenvalues, we lend novel insights into the questions put forth at the beginning, and we verify these insights by extensive experiments of neural networks.
We derive fast algorithms for computing the spectra of CK and NTK when the data is uniformly distributed over the boolean cube, and show this spectra is the same in high dimensions when data is drawn from isotropic Gaussian or uniformly over the sphere.
Code replicating our results is available at \URL.
\end{abstract}

\section{Introduction}

\enlargethispage{\baselineskip}

Understanding the behavior of neural networks and why they generalize has been a central pursuit of the theoretical deep learning community.
Recently, \cite{valle-perezDeepLearningGeneralizes2018arXiv.org} observed that neural networks have a certain ``simplicity bias'' and proposed this as a solution to the generalization question.
One of the ways with which they argued that this bias exists is the following experiment:
they drew a large sample of boolean functions by randomly initializing neural networks and thresholding the output.
They observed that there is a bias toward some "simple" functions which get sampled disproportionately more often.
However, their experiments were only done for relu networks.
Can one expect this ``simplicity bias'' to hold universally, for any architecture?

\emph{A priori}, this seems difficult, as the nonlinear nature seems to present an obstacle in reasoning about the distribution of random networks.
However, this question turns out to be more easily treated if we allow the \emph{width to go to infinity}.
A long line of works starting with \cite{nealBAYESIANLEARNINGNEURAL1995} and extended recently by \cite{leeDeepNeuralNetworks2018,novakBayesianDeepConvolutional2018,yangScalingLimitsWide2019arXiv.org,yang2019tensor} have shown that randomly initialized, infinite-width networks are distributed as Gaussian processes.
These Gaussian processes also describe finite width random networks well \citep{valle-perezDeepLearningGeneralizes2018arXiv.org}.
We will refer to the corresponding kernels as the \emph{Conjugate Kernels} (CK), following the terminology of \cite{danielyDeeperUnderstandingNeural2016arXiv.org}.
Given the CK $K$, the simplicity bias of a wide neural network can be read off quickly from the \emph{spectrum of $K$}:
If the largest eigenvalue of $K$ accounts for most of $\tr K$, then a typical random network looks like a function from the top eigenspace of $K$.

In this paper, we will use this spectral perspective to probe not only the simplicity bias, but more generally, questions regarding how hyperparameters affect the generalization of neural networks.

Via the usual connection between Gaussian processes and linear models with features, the CK can be thought of as the kernel matrix associated to training only the last layer of a wide randomly initialized network.
It is a remarkable recent advance \citep{jacotNeuralTangentKernel2018arXiv.org,allen-zhuLearningGeneralizationOverparameterized2018arXiv.org,allen-zhuConvergenceTheoryDeep2018arXiv.org,duGradientDescentProvably2018arXiv.org} that, under a certain regime, a wide neural network of any depth evolves like a linear model even when training all parameters.
The associated kernel is call the \emph{Neural Tangent Kernel}, which is typically different from CK.
While its theory was initially derived in the infinite width setting, \cite{leeWideNeuralNetworks2019arXiv.org} confirmed with extensive experiment that this limit is predictive of finite width neural networks as well.
Thus, just as the CK reveals information about \emph{what a network looks like at initialization}, NTK reveals information about \emph{what a network looks like after training.}
As such, if we can understand how hyperparameters change the NTK, we can also hope to understand how they affect the performance of the corresponding finite-width network.

\paragraph{Our Contributions}
In this paper, in addition to showing that the simplicity bias is not universal, we will attempt a first step at understanding the effects of the hyperparameters on generalization from a spectral perspective.

At the foundation is a spectral theory of the CK and the NTK on the boolean cube.
In \cref{sec:spectrum}, we show that these kernels, as \emph{integral operators} on functions over the boolean cube, are diagonalized by the natural Fourier basis, echoing similar results for over the sphere \citep{smolaRegularizationDotProductKernels2001NeuralInformationProcessingSystems}.
We also partially diagonalize the kernels over standard Gaussian, and show that, as expected, the kernels over the different distributions (boolean cube, sphere, standard Gaussian) behave very similarly in high dimensions.
However, the spectrum is much easier to compute over the boolean cube:
while the sphere and Gaussian eigenvalues would require integration against a kind of polynomials known as the Gegenbauer polynomials, the boolean ones only require calculating a linear combination of a small number of terms.
For this reason, in the rest of the paper we focus on analyzing the eigenvalues over the boolean cube.

Just as the usual Fourier basis over $\R$ has a notion of frequency that can be interpreted as a measure of complexity, so does the boolean Fourier basis (this is just the \emph{degree}; see \cref{sec:boolcubemaintext}).
While not perfect, we adopt this natural notion of complexity in this work; a ``simple'' function is then one that is well approximated by ``low frequencies.''

This spectral perspective immediately yields that the simplicity bias is not universal (\cref{sec:clarifySimplicityBias}).
In particular, while it seems to hold more or less for relu networks, for sigmoidal networks, the simplicity bias can be made arbitrarily weak by changing the weight variance and the depth.
In the extreme case, the random function obtained from sampling a deep erf network with large weights is distributed like a ``white noise.''
However, there is a very weak sense in which the simplicity bias \emph{does} hold: the eigenvalues of more ``complex'' eigenspaces cannot be bigger than those of less ``complex'' eigenspaces (\cref{thm:weakspectralsimplicitybias}).

Next, we examine how hyperparameters affect the performance of neural networks through the lens of NTK and its spectrum.
To do so, we first need to understand the simpler question of how a kernel affects the accuracy of the function learned by kernel regression.
A coarse-grained theory, concerned with big-O asymptotics, exists from classical kernel literature \citep{yaoEarlyStoppingGradient2007Crossref,raskuttiEarlyStoppingNonparametric2013arXiv.org,weiEarlyStoppingKernelZotero,linOptimalRatesMultipassZotero,scholkopfLearningKernelsSupport2002LibraryofCongressISBN}.
However, the fine-grained details, required for discerning the effect of hyperparameters, have been much less studied.
We make a first attempt at a heuristic, \emph{fractional variance} (i.e. what fraction of the trace of the kernel does an eigenspace contribute), for understanding how a minute change in kernel effects a change in performance.
Intuitively, if an eigenspace has very large fractional variance, so that it accounts for most of the trace, then a ground truth function from this eigenspace should be very easy to learn.

Using this heuristic, we make two predictions about neural networks, motivated by observations in the spectra of NTK and CK, and verify them with extensive experiments.
\begin{itemize*}
    \item
        Deeper networks learn more complex features, but excess depth can be detrimental as well.
        Spectrally, depth can increase fractional variance of an eigenspace, but past an \emph{optimal depth}, it will also decrease it.
        (\cref{sec:optimaldepth})
        Thus, deeper is \emph{not} always better.
    \item
        Training all layers is better than training just the last layer when it comes to more complex features, but the opposite is true for simpler features.
        Spectrally, fractional variances of more ``complex'' eigenspaces for the NTK are larger than the correponding quantities of the CK.
        (\cref{sec:NTKcomplexitybias})
\end{itemize*}

Finally, we use our spectral theory to predict the maximal nondiverging learning rate (``max learning rate'') of SGD (\cref{sec:maxlr}).

In general, we will not only verify our theory with experiments on the theoretically interesting distributions, i.e.\ uniform measures over the boolean cube and the sphere, or the standard Gaussian, but also confirm these findings on real data like MNIST and CIFAR10
\footnote{
The code for computing the eigenvalues and for reproducing the plots of this paper is available at \URL.
}.

For space concerns, we review relevant literature along the flow of the main text, and relegate a more complete discussion of the related research landscape in \cref{sec:relatedworks}.

\section{Kernels Associated to Neural Networks}
\label{sec:kernelReviewMaintext}

As mentioned in the introduction, we now know several kernels associated to infinite width, randomly initialized neural networks.
The most prominent of these are the \emph{neural tangent kernel} (NTK) \citep{jacotNeuralTangentKernel2018arXiv.org} and the \emph{conjugate kernel} (CK) \citep{danielyDeeperUnderstandingNeural2016arXiv.org}, which is also called the \emph{NNGP kernel} \citep{leeDeepNeuralNetworks2018}.
We briefly review them below.
First we introduce the following notation that we will repeatedly use.
\begin{defn}\label{defn:V}
For $\phi: \R \to \R$, write $\Vt \phi$ for the function that takes a PSD (positive semidefinite) kernel function to a PSD kernel of the same domain by the formula
\[
\Vt\phi(K)(x, x') = \EV_{f \sim \Gaus(0, K)} \phi(f(x))\phi(f(x')).
\]
\end{defn}

\paragraph{Conjugate Kernel}
Neural networks are commonly thought of as learning a high-quality embedding of inputs to the latent space represented by the network's last hidden layer, and then using its final linear layer to read out a classification given the embedding.
The conjugate kernel is just the kernel associated to the embedding induced by a random initialization of the neural network.
Consider an MLP with widths $\{n^l\}_l$, weight matrices $\{W^l \in \R^{n^l \times n^{l-1}}\}_l$, and biases $\{b^l \in \R^{n^l}\}_l$, $l = 1, \ldots, L$.
For simplicity of exposition, in this paper, we will only consider scalar output $n^L = 1$.
Suppose it is parametrized by the \emph{NTK parametrization},
i.e. its computation is given recursively as
\begin{align*}
    h^1(x) &= \f{\sigma_w}{\sqrt{n^0}} W^1 x + \sigma_b b^1
    \quad\text{and}\\
    h^l(x) &= \f{\sigma_w}{\sqrt{n^{l-1}}} W^l\phi( h^{l-1}(x)) + \sigma_b b^l
    \label{eqn:MLP}
    \tag{MLP}
\end{align*}
with some hyperparameters $\sigma_w, \sigma_b$ that are fixed throughout training\footnote{SGD with learning rate $\alpha$ in this parametrization is roughly equivalent to SGD with learning rate $\alpha/width$ in the standard parametrization with Glorot initialization; see \cite{leeDeepNeuralNetworks2018}}.
At initialization time, suppose $W^l_{\alpha \beta}, b^l_\alpha \sim \Gaus(0, 1)$ for each $\alpha \in [n^l], \beta\in[n^{l-1}]$.
It can be shown that, for each $\alpha \in [n^l]$, $h^l_\alpha$ is a Gaussian process with zero mean and kernel function $\Sigma^l$ in the limit as all hidden layers become infinitely wide ($n^l \to \infty, l = 1, \ldots, L-1$), where $\Sigma^l$ is defined inductively on $l$ as
\begin{align*}
\Sigma^1(x, x') &\defeq \sigma_w^2 (n^0)^{-1} \la x, x' \ra + \sigma_b^2,\\
\Sigma^l &\defeq \sigma_w^2 \Vt \phi (\Sigma^{l-1})
+\sigma_b^2
\label{eqn:CK}
\tag{CK}
\end{align*}

The kernel $\Sigma^L$ corresponding the the last layer $L$ is the network's \emph{conjugate kernel}, and the associated Gaussian process limit is the reason for its alternative name \emph{Neural Network-Gaussian process kernel}.
In short, if we were to train a linear model with features given by the embedding $x \mapsto h^{L-1}(x)$ when the network parameters are randomly sampled as above, then the CK is the kernel of this linear model.
See \cite{danielyDeeperUnderstandingNeural2016arXiv.org,leeDeepNeuralNetworks2018} and \cref{sec:NTKreview} for more details.

\paragraph{Neural Tangent Kernel}
On the other hand, the NTK corresponds to training the entire model instead of just the last layer.
Intuitively, if we let $\theta$ be the entire set of parameters $\{W^l\}_l \cup \{b^l\}_l$ of \cref{eqn:MLP}, then for $\theta$ close to its initialized value $\theta_0$, we expect
\begin{align*}
h^L(x; \theta) - h^L(x; \theta_0)
\approx
\la \nabla_\theta h^L(x; \theta_0), \theta - \theta_0 \ra
\end{align*}
via a naive first-order Taylor expansion.
In other words, $h^L(x; \theta) - h^L(x; \theta_0)$ behaves like a linear model with feature of $x$ given by the gradient taken w.r.t. the initial network, $\nabla_\theta h^L(x; \theta_0)$, and the weights of this \emph{linear model} are the deviation $\theta - \theta_0$ of $\theta$ from its initial value.
It turns out that, in the limit as all hidden layer widths tend to infinity, this intuition is correct \citep{jacotNeuralTangentKernel2018arXiv.org,leeDeepNeuralNetworks2018,yangScalingLimitsWide2019arXiv.org,yang2019tensor}, and the following inductive formula computes the corresponding infinite-width kernel of this linear model:
\begin{align*}
\Theta^1 &\defeq \Sigma^1,
	\tag{NTK}
    \label{eqn:NTK}\\
\Theta^l(x, x') &\defeq \Sigma^l(x, x') + 
    \sigma_w^2 \Theta^{l-1}(x, x')\Vt{\phi'}(\Sigma^{l-1})(x, x').
\end{align*}

\paragraph{Computing CK and NTK}
While in general, computing $\Vt\phi$ and $\Vt{\phi'}$ requires evaluating a multivariate Gaussian expectation, in specific cases, such as when $\phi = \relu$ or $\erf$, there exists explicit, efficient formulas that only require pointwise evaluation of some simple functions (see \cref{fact:Vrelu,fact:Verf}).
This allows us to evaluate CK and NTK on a set $\Xx$ of inputs in only time $O(|\Xx|^2 L)$.

\newcommand{\loss}{\mathcal{L}}
\paragraph{What Do the Spectra of CK and NTK Tell Us?}
In summary, the CK governs the distribution of a randomly initialized neural network and also the properties of training only the last layer of a network, while the NTK governs the dynamics of training (all parameters of) a neural network.
A study of their spectra thus informs us of the ``implicit prior'' of a randomly initialized neural network as well as the ``implicit bias'' of GD in the context of training neural networks.

In regards to the implicit prior at initialization, we know from \cite{leeDeepNeuralNetworks2018} that a randomly initialized network as in \cref{eqn:MLP} is distributed as a Gaussian process $\Gaus(0, K)$, where $K$ is the corresponding CK, in the infinite-width limit.
If we have the eigendecomposition 
\begin{equation}
K = \sum_{i \ge 1} \lambda_i u_i\otimes u_i
\label{eqn:eigendecomposition}
\end{equation}
with eigenvalues $\lambda_i$ in decreasing order and corresponding eigenfunctions $u_i$, then each sample from this GP can be obtained as 
\[
\sum_{i \ge 1} \sqrt{\lambda_i} \omega_i u_i,\quad
\omega_i \sim \Gaus(0, 1).
\]
If, for example, $\lambda_1 \gg \sum_{i \ge 2}\lambda_i$, then a typical sample function is just a very small perturbation of $u_1$.
We will see that for relu, this is indeed the case (\cref{sec:clarifySimplicityBias}), and this explains the ``simplicity bias'' in relu networks found by \cite{valle-perezDeepLearningGeneralizes2018arXiv.org}.

Training the last layer of a randomly initialized network via full batch gradient descent for an infinite amount of time corresponds to Gaussian process inference with kernel $K$ \citep{leeDeepNeuralNetworks2018,leeWideNeuralNetworks2019arXiv.org}.
A similar intuition holds for NTK: training all parameters of the network (\cref{eqn:MLP}) for an infinite amount of time yields the mean prediction of the GP $\Gaus(0, \text{NTK})$ in expectation; see \cite{leeWideNeuralNetworks2019arXiv.org} and \cref{sec:relation2GP} for more discussion.

Thus, the more the GP prior (governed by the CK or the NTK) is consistent with the ground truth function $f^*$, the more we expect the Gaussian process inference and GD training to generalize well.
We can measure this consistency in the ``alignment'' between the eigenvalues $\lambda_i$ and the squared coefficients $a_i^2$ of $f^*$'s expansion in the $\{u_i\}_i$ basis.
The former can be interpreted as the expected magnitude (squared) of the $u_i$-component of a sample $f \sim \Gaus(0, K)$, and the latter can be interpreted as the actual magnitude squared of such component of $f^*$.
In this paper, we will investigate an even cleaner setting where $f^* = u_i$ is an eigenfunction.
Thus we would hope to use a kernel whose $i$th eigenvalue $\lambda_i$ is as large as possible.

\paragraph{Neural Kernels}
From the forms of the equation \cref{eqn:CK,eqn:NTK} and the fact that $\Vt \phi(K)(x, x')$ only depends on $K(x, x), K(x, x')$, and $K(x', x')$, we see that CK or NTK of MLPs takes the form
\begin{equation}
K(x, y) = \Phi\lp\f{\la x, y \ra}{\|x\|\|y\|}, \f{\|x\|^2}d, \f{\|y\|^2}d\rp
\label{eqn:neuralKernel}
\end{equation}
for some function $\Phi: \R^3 \to \R$.
We will refer to this kind of kernel as \emph{Neural Kernel} in this paper.

\paragraph{Kernels as Integral Operators}
We will consider input spaces of various forms $\Xx \sbe \R^d$ equipped with some probability measure.
Then a kernel function $K$ acts as an \emph{integral operator} on functions $f \in L^2(\Xx)$ by
\[
Kf(x) = (Kf)(x) = \EV_{y \sim \Xx} K(x, y) f(y).
\]
We will use the ``juxtaposition syntax'' $Kf$ to denote this application of the integral operator.
\footnote{
In cases when $\Xx$ is finite, $K$ can be also thought of as a big matrix and $f$ as a vector --- but do not confuse $Kf$ with their multiplication!
If we use $\cdot$ to denote matrix multiplication, then the operator application $Kf$ is the same as the matrix multiplication $K \cdot D \cdot f$ where $D$ is the diagonal matrix encoding the probability values of each point in $\Xx$.
}
Under certain assumptions, it then makes sense to speak of the eigenvalues and eigenfunctions of the integral operator $K$.
While we will appeal to an intuitive understanding of eigenvalues and eigenfunctions in the main text below,
we include a more formal discussion of Hilbert-Schmidt operators and their spectral theory in \cref{sec:HilbertSchmidt} for completeness.
In the next section, we investigate the eigendecomposition of neural kernels as integral operators over different distributions.

\enlargethispage{\baselineskip}
\section{The Spectra of Neural Kernels}
\label{sec:spectrum}

In large dimension $d$, the spectrum of CK (or NTK) is roughly the same when the underlying data distribution is the standard Gaussian or the uniform distribution over the boolean cube or the sphere:
\begin{restatable}{thm}{unifyingSpetra}\label{thm:unifyingSpectra}
Suppose $K$ is the CK or NTK of an MLP with polynomially bounded activation function, and let $\Phi$ be as in \cref{eqn:neuralKernel}.
Let the underlying data distribution be the uniform distribution over the boolean cube $\{\pm 1\}^d$, the uniform distribution over the sphere $\sqrt d \sphere^{d-1}$, or the standard Gaussian distribution $\Gaus(0, I)$.
For any $d, k > 0$, let $\kappa_{d,k}$ be the $k$th largest unique eigenvalue of $K$ over this data distribution.
Then for any $k > 0$, and for sufficiently large $d$ (depending on $k$), $\kappa_{d,k}$ corresponds to an eigenspace of dimension $d^k/k! + o(d^k)$, and
\begin{align*}
\lim_{d\to\infty } d^k \kappa_{d,k} = \Phi^{(k)}(0, 1, 1),
\end{align*}
where $\Phi^{(k)}(t, q, q') = \f {d^k}{dt^k} \Phi(t, q, q').$
\end{restatable}
This is a consequence of \cref{{thm:MuAsymptotics},thm:SphereAsymptotics,{thm:gaussianEigenLimit}}.

We numerically verify this theorem in \cref{sec:allDistsAreSame}.
Note that, in contrast to traditional learning theory literature which typically investigates the decay rate \citep{yaoEarlyStoppingGradient2007Crossref,linOptimalRatesMultipassZotero,weiEarlyStoppingKernelZotero} of the $i$th largest (non-unique) eigenvalue $\lambda_i$ with $i$, we are concerned here only with a constant number $k$ of the top \emph{unique} eigenvalues $\kappa_{d,k}$.
We argue the latter is a more realistic scenario.
This is because these $k$ eigenvalues will altogether account for a subspace of $\approx d^k/k!$ total dimensions, which, for example when $d = 24^2 = 576$ (pixel count of an MNIST image) and $k = 5$, is $\approx 5.2 \times 10^{11}$.
Given that Imagenet, one of the largest public datasets, only has $\approx 1.5 \times 10^7$ samples, it will be difficult in the near future for one to have enough data to even \emph{detect} the smaller eigenvalues $\kappa_{d, k}, k > 5$, based on standard eigenvalue concentration bounds \citep{troppIntroductionMatrixConcentration2015arXiv.org}.

As the underlying data distribution does not affect the spectrum much in high dimension, in the main text we will focus on the case of boolean cube, while we relegate a full theoretical treatment of the sphere (\cref{sec:sphere}) and Gaussian (\cref{sec:isotropicGaussian}) cases to the appendix.
However, we will verify that our empirical findings over the boolean cube carries over the sphere and Gaussian, and even real datasets like MNIST and CIFAR10.

\enlargethispage{\baselineskip}
\subsection{Boolean Cube}
\label{sec:boolcubemaintext}
We first consider a neural kernel $K$ on the boolean cube $\Xx = \boolcube^d \defeq \{\pm 1\}^d$, equipped with the uniform measure.
In this case, since each $x \in \Xx$ has the same norm, $K(x, y) = \Phi\lp\f{\la x, y \ra}{\|x\|\|y\|}, \f{\|x\|^2}d, \f{\|y\|^2}d\rp$ effectively only depends on $\la x, y\ra$, so we will treat $\Phi$ as a single variate function in this section, $\Phi(c) = \Phi(c, 1, 1)$.

\paragraph{Brief review of basic Fourier analysis on the boolean cube $\boolcube^d$ (\cite{odonnellAnalysisBooleanFunctions2014LibraryofCongressISBN}).}
The space of real functions on $\boolcube^d$ forms a $2^d$-dimensional space.
Any such function has a \emph{unique} expansion into a \emph{multilinear polynomial} (polynomials whose monomials do not contain $x^p_i, p \ge 2$, of any variable $x_i$).
For example, the majority function over 3 bits has the following unique multilinear expansion
\[\mathrm{maj}_3: \boolcube^3 \to \boolcube^1,\quad \mathrm{maj}_3(x_1, x_2, x_3) = \f 1 2 (x_1 + x_2 + x_3 - x_1 x_2 x_3).\]
In the language of Fourier analysis, the $2^d$ multilinear monomial functions
\begin{equation}
\chi_S(x) \defeq x^S \defeq \prod_{i \in S} x_i,
\quad \text{for each }S \sbe [d]
\label{eqn:booleanfourierbasis}
\end{equation}
form a Fourier basis of the function space $L^2(\boolcube^d) = \{f: \boolcube^d \to \R\}$, in the sense that their inner products satisfy
\[\EV_{x \sim \boolcube^d} \chi_S(x) \chi_T(x) = \ind(S = T).\]
Thus, any function $f: \boolcube^d \to \R$ can be always written as
\begin{align*}
f(x) = \sum_{S \sbe [d]} \hat f(S) \chi_X(x)
\end{align*}
for a \emph{unique} set of coefficients $\{\hat f(S)\}_{S \sbe [d]}$.
It turns out that $K$ is always diagonalized by this Fourier basis $\{\chi_S\}_{S \sbe [d]}$.
\begin{restatable}{thm}{boolcubeEigen}\label{thm:boolcubeEigen}
On the $d$-dimensional boolean cube $\boolcube^d$, 
for every $S \sbe [d]$, $\chi_S$ is an eigenfunction of $K$ with eigenvalue
\begin{equation}
\mu_{|S|} \defeq \EV_{x \in \boolcube^d} x^S K(x, \onev) = \EV_{x \in \boolcube^d} x^S \Phi\lp\sum_i x_i/d\rp,
\label{eqn:mu}
\end{equation}
where $\onev = (1, \ldots, 1) \in \boolcube^d$.
This definition of $\mu_{|S|}$ does not depend on the choice $S$, only on the cardinality of $S$.
These are all of the eigenfunctions of $K$ by dimensionality considerations.\footnote{Readers familiar with boolean Fourier analysis may be reminded of the \emph{noise operator} $T_\rho, \rho \le 1$ (see Defn 2.45 of \citet{odonnellAnalysisBooleanFunctions2014LibraryofCongressISBN}).
In the language of this work, $T_\rho$ is a neural kernel with eigenvalues $\mu_k = \rho^k$.}
\end{restatable}

Define $\shiftop_\Delta$ to be the shift operator on functions over $[-1, 1]$ that sends $\Phi(\cdot)$ to $\Phi(\cdot - \Delta)$.
Then we can re-express the eigenvalue as follows.
\begin{restatable}{lemma}{muFormula}\label{lemma:muFormula}
With $\mu_k$ as in \cref{thm:boolcubeEigen},
\begin{align*}
    \mu_k
    &=
    2^{-d}(I - \shiftop_\Delta)^k (I + \shiftop_\Delta)^{d-k}\Phi(1)
        \numberthis \label{eqn:shiftopMu}
        \\
    &=
        2^{-d}\sum_{r=0}^d
                C^{d-k,k}_r \Phi\lp \lp \f d 2  - r \rp \Delta\rp
        \numberthis \label{eqn:binomcoefMu}
\end{align*}
where
\begin{equation}
C^{d-k,k}_r \defeq \sum_{j=0} (-1)^{r+j} \binom {d-k} j \binom k {r-j}.
\label{eqn:Ccoef}
\end{equation}
\end{restatable}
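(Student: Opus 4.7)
The plan is to compute $\mu_k$ directly from \cref{eqn:mu} by encoding each boolean coordinate as a single application of the shift operator. By \cref{thm:boolcubeEigen} the value of $\mu_{|S|}$ depends only on $|S|$, so we may take $S = [k]$ and write $x^S = x_1 \cdots x_k$. Reparameterize each coordinate via $x_i = 1 - 2\epsilon_i$ with $\epsilon_i \in \{0,1\}$, so that $\sum_i x_i/d = 1 - \Delta \sum_i \epsilon_i$ for $\Delta = 2/d$, and therefore
\[
\Phi\!\lp\sum_i x_i/d\rp = \shiftop_\Delta^{\sum_i \epsilon_i} \Phi(1) = \prod_{i=1}^d \shiftop_\Delta^{\epsilon_i} \Phi(1),
\]
using that the shift operators commute.

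Substituting and swapping the sum over $x \in \boolcube^d$ for a sum over $\epsilon \in \{0,1\}^d$, we get
\[
\mu_k = 2^{-d} \sum_{\epsilon \in \{0,1\}^d} \prod_{i=1}^k (1-2\epsilon_i) \cdot \prod_{i=1}^d \shiftop_\Delta^{\epsilon_i} \Phi(1).
\]
The sum factors coordinate-wise: for each $i \in [k]$ the factor is $\sum_{\epsilon_i \in \{0,1\}} (1 - 2\epsilon_i)\shiftop_\Delta^{\epsilon_i} = I - \shiftop_\Delta$, while for each $i \in \{k+1, \ldots, d\}$ the factor is $\sum_{\epsilon_i \in \{0,1\}} \shiftop_\Delta^{\epsilon_i} = I + \shiftop_\Delta$. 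Multiplying yields \cref{eqn:shiftopMu}.

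To obtain \cref{eqn:binomcoefMu}, I would expand both $(I - \shiftop_\Delta)^k$ and $(I + \shiftop_\Delta)^{d-k}$ using the binomial theorem and collect terms by the total power of $\shiftop_\Delta$. The coefficient of $\shiftop_\Delta^r$ is the convolution $\sum_j (-1)^j \binom{k}{j}\binom{d-k}{r-j}$; reindexing $j \mapsto r-j$ and noting $(-1)^{-j}=(-1)^j$ identifies this with $C^{d-k,k}_r$ of \cref{eqn:Ccoef}. Since $\shiftop_\Delta^r \Phi(1) = \Phi(1 - r\Delta) = \Phi((d/2 - r)\Delta)$, this gives the stated expression.

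There is no conceptual obstacle here; the proof is essentially a bookkeeping exercise. The only point that requires a small check is matching the indexing in the definition of $C^{d-k,k}_r$ against what drops out of the binomial expansion, which is resolved by the reindexing step above.
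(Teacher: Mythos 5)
Your proof is correct; every step checks out (the coordinate-wise factorization $\sum_{\epsilon_i}(1-2\epsilon_i)\shiftop_\Delta^{\epsilon_i} = I - \shiftop_\Delta$ and $\sum_{\epsilon_i}\shiftop_\Delta^{\epsilon_i} = I + \shiftop_\Delta$ is valid since the operators commute and evaluation at $1$ is linear, and your reindexing $j \mapsto r-j$ does identify the convolution coefficient with $C^{d-k,k}_r$ of \cref{eqn:Ccoef}). The route is the reverse of the paper's: you establish \cref{eqn:shiftopMu} first, directly from \cref{eqn:mu}, by factoring the sum over the cube into a product of single-bit operator sums, and then obtain \cref{eqn:binomcoefMu} by binomial expansion. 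The paper instead proves \cref{eqn:binomcoefMu} first, by collecting the terms of \cref{eqn:mu} according to the number $r$ of coordinates equal to $-1$ and evaluating the signed count $\sum_{x \text{ with } r \text{ minus-ones}} \prod_{i=1}^k x_i = C^{d-k,k}_r$, and only afterwards recovers \cref{eqn:shiftopMu} by recognizing $C^{d-k,k}_r$ as the coefficient of $x^r$ in $(1-x)^k(1+x)^{d-k}$. The two arguments encode the same generating-function identity; yours has the small advantage that the operator form \cref{eqn:shiftopMu} (which is the one actually used for numerically stable computation later in the paper) drops out immediately without the intermediate signed-counting step that the paper leaves as ``easily shown,'' while the paper's order makes the combinatorial meaning of $C^{d-k,k}_r$ more explicit.
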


\cref{eqn:shiftopMu} will be important for computational purposes (see \cref{sec:computingEigenvalues}).
It also turns out $\mu_k$ affords a pretty expression via the Fourier series coefficients of $\Phi$.
As this is not essential to the main text, we relegate its exposition to \cref{sec:appendixBoolCube}.

\begin{figure}
\centering
\includegraphics[width=\linewidth]{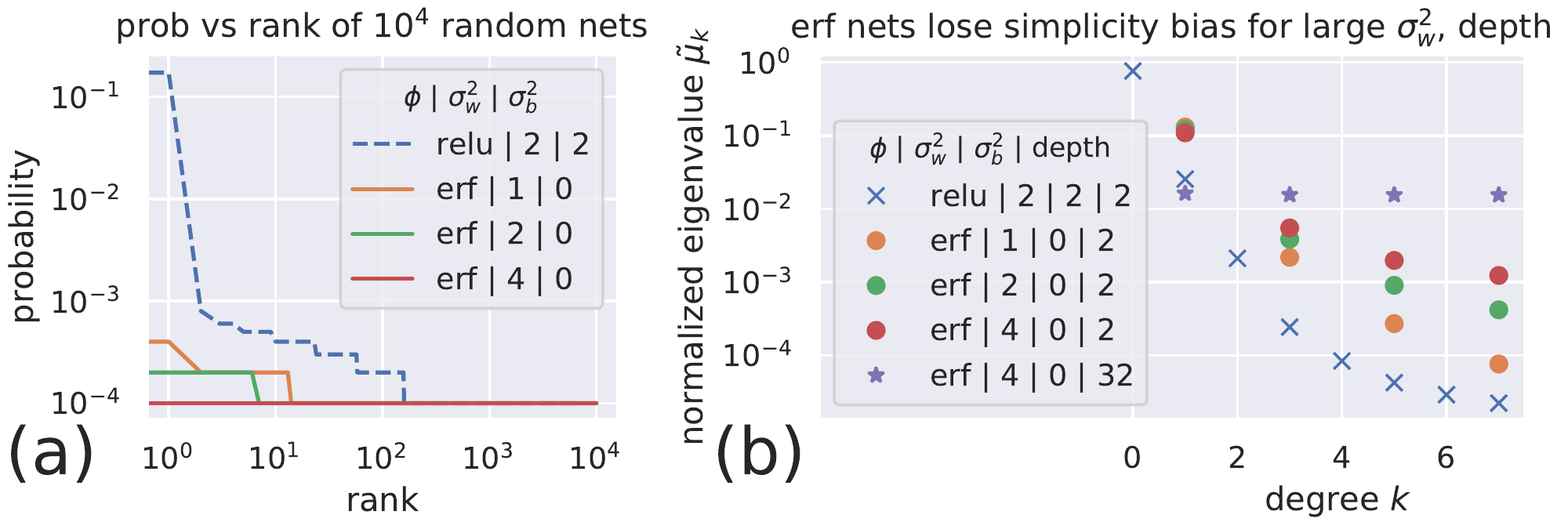}
\caption{
    \textbf{The "simplicity bias" is not so simple.}
\textbf{(a)} Following \cite{valle-perezDeepLearningGeneralizes2018arXiv.org}, we sample $10^4$ boolean functions $\{\pm1\}^7 \to \{\pm1\}$ as follows: for each combination of nonlinearity, weight variance $\sigma_w^2$, and bias variance $\sigma_b^2$ (as used in \cref{eqn:MLP}), we randomly initialize a network of 2 hidden layers, 40 neurons each.
Then we threshold the function output to a boolean output, and obtain a boolean function sample.
We repeat this for $10^4$ random seeds to obtain all samples.
Then we sort the samples according to their empirical probability (this is the x-axis, \emph{rank}), and plot their empirical probability (this is the y-axis, \emph{probability}).
The high values at the left of the relu curve indicates that a few functions get sampled repeatedly, while this is less true for erf.
For erf and $\sigma_w^2 = 4$, no function got sampled more than once.
\textbf{(b)} For different combination of nonlinearity, $\sigma_w^2$, $\sigma_b^2$, and depth, we study the eigenvalues of the corresponding CK.
Each CK has 8 different eigenvalues $\mu_0, \ldots, \mu_7$ corresponding to homogeneous polynomials of degree $0, \ldots, 7$.
We plot them in log scale against the degree.
Note that for erf and $\sigma_b=0$, the even degree $\mu_k$ vanish.
See main text for explanations.
}
\label{fig:simplicitybias1}
\end{figure}

\enlargethispage{2\baselineskip}
\section{Clarifying the ``Simplicity Bias'' of Random Neural Networks}
\label{sec:clarifySimplicityBias}

\begin{figure}[!htb]
\centering
\includegraphics[width=1.00\linewidth]{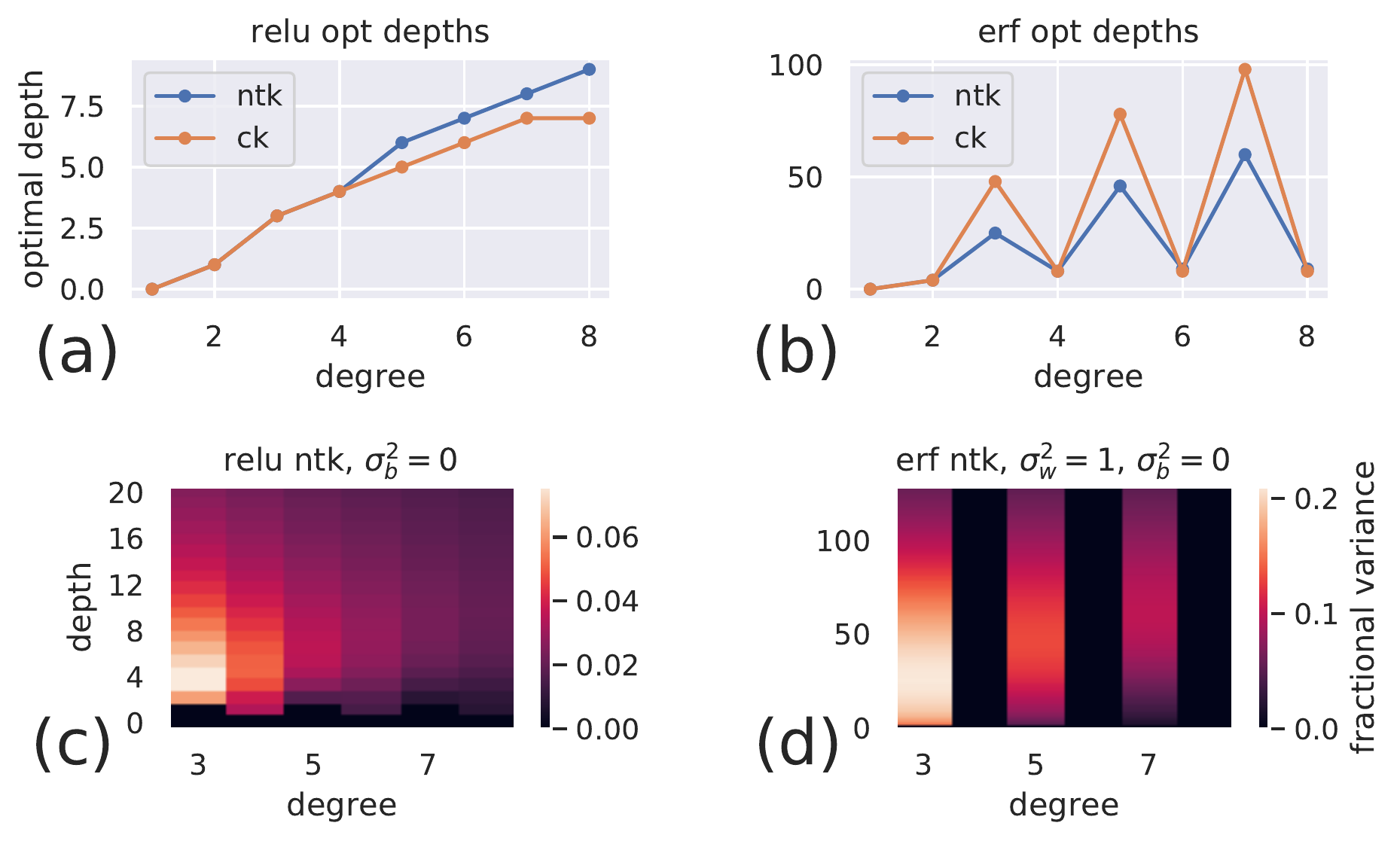}
\caption{\label{fig:optimaldepth}
\textbf{The depth maximizing degree $k$ fractional variance increases with $k$ for both relu and erf.}
For relu \textbf{(a)} and erf \textbf{(b)}, we plot for each degree $k$ the depth such that there exists some combination of other hyperparameters (such as $\sigma_b^2$ or $\sigma_w^2$) that maximizes the degree $k$ fractional variance.
For both relu, $\sigma_b^2=0$ maximizes fractional variance in general, and same holds for erf in the odd degrees (see \cref{sec:phasediagrams}), so we take a closer look at this slice by plotting heatmaps of fractional variance of various degrees versus depth for relu \textbf{(c)} and erf \textbf{(d)} NTK, with bright colors representing high variance.
Clearly, we see the brightest region of each column, corresponding to a fixed degree, moves up as we increase the degree, barring for the even/odd degree alternating pattern for erf NTK.
The pattern for CKs are similar and their plots are omitted.
}
\vskip -0.1in
\end{figure}
As mentioned in the introduction, \citet{valle-perezDeepLearningGeneralizes2018arXiv.org} claims that neural networks are \emph{biased toward simple functions}.
We show that this phenomenon depends crucially on the nonlinearity, the sampling variances, and the depth of the network.
In \cref{fig:simplicitybias1}(a), we have repeated their experiment for $10^4$ random functions obtained by sampling relu neural networks with 2 hidden layers, 40 neurons each, following \citet{valle-perezDeepLearningGeneralizes2018arXiv.org}'s architectural choices\footnote{\citet{valle-perezDeepLearningGeneralizes2018arXiv.org} actually performed their experiments over the $\{0, 1\}^7$ cube, not the $\{\pm1\}^7$ cube we are using here.
This does not affect our conclusion.
See \cref{sec:01booleancube} for more discussion}.
We also do the same for erf networks of the same depth and width, varying as well the sampling variances of the weights and biases, as shown in the legend.
As discussed in \citet{valle-perezDeepLearningGeneralizes2018arXiv.org}, for relu, there is indeed this bias, where a single function gets sampled more than 10\% of the time.
However, for erf, as we increase $\sigma_w^2$, we see this bias disappear, and every function in the sample gets sampled only once.

This phenomenon can be explained by looking at the eigendecomposition of the CK, which is the Gaussian process kernel of the distribution of the random networks as their hidden widths tend to infinity.
In \cref{fig:simplicitybias1}(b), we plot the normalized eigenvalues $\{\mu_k/\sum_{i=0}^7 \binom 7 i \mu_i\}_{k=0}^7$ for the CKs corresponding to the networks sampled in \cref{fig:simplicitybias1}(a).
Immediately, we see that for relu and $\sigma_w^2 = \sigma_b^2 = 2$, the degree 0 eigenspace, corresponding to constant functions, accounts for more than 80\% of the variance.
This means that a typical infinite-width relu network of 2 layers is expected to be almost constant, and this should be even more true after we threshold the network to be a boolean function.
On the other hand, for erf and $\sigma_b = 0$, the even degree $\mu_k$s all vanish, and most of the variance comes from degree 1 components (i.e. linear functions).
This concentration in degree 1 also lessens as $\sigma_w^2$ increases.
But because this variance is spread across a dimension 7 eigenspace, we don't see duplicate function samples nearly as much as in the relu case.
As $\sigma_w$ increases, we also see the eigenvalues become more equally distributed, which corresponds to the flattening of the probability-vs-rank curve in \cref{fig:simplicitybias1}(a).
Finally, we observe that a 32-layer erf network with $\sigma_w^2 = 4$ has all its nonzero eigenvalues (associated to odd degrees) all equal (see points marked by $*$ in \cref{fig:simplicitybias1}(b)).
This means that its distribution is a "white noise" on the space of \emph{odd} functions, and the distribution of boolean functions obtained by thresholding the Gaussian process samples is the \emph{uniform distribution} on \emph{odd} functions.
This is the complete lack of simplicity bias modulo the oddness constraint.

However, from the spectral perspective, there is a weak sense in which a simplicity bias holds for all neural network-induced CKs and NTKs.
\begin{restatable}[Weak Spectral Simplicity Bias]{thm}{weakspectralsimplicitybias}\label{thm:weakspectralsimplicitybias}
Let $K$ be the CK or NTK of an MLP on a boolean cube $\boolcube^d$.
Then the eigenvalues $\mu_k, k = 0, \ldots, d,$ satisfy
\begin{align*}
\mu_0 &\ge \mu_2 \ge \cdots \ge \mu_{2k} \ge \cdots,\\
\mu_1 &\ge \mu_3 \ge \cdots \ge \mu_{2k+1} \ge \cdots.
\numberthis
\label{eqn:weaksimplicitybias}
\end{align*}
\end{restatable}
Even though it's not true that the fraction of variance contributed by the degree $k$ eigenspace is decreasing with $k$, the eigenvalue themselves will be in a nonincreasing pattern across even and odd degrees.
Of course, as we have seen, this is a \emph{very weak} sense of simplicity bias, as it doesn't prevent ``white noise'' behavior as in the case of erf CK with large $\sigma_w^2$ and large depth.
\enlargethispage{\baselineskip}

\begin{figure*}[!htp]
\centering
\includegraphics[width=\linewidth]{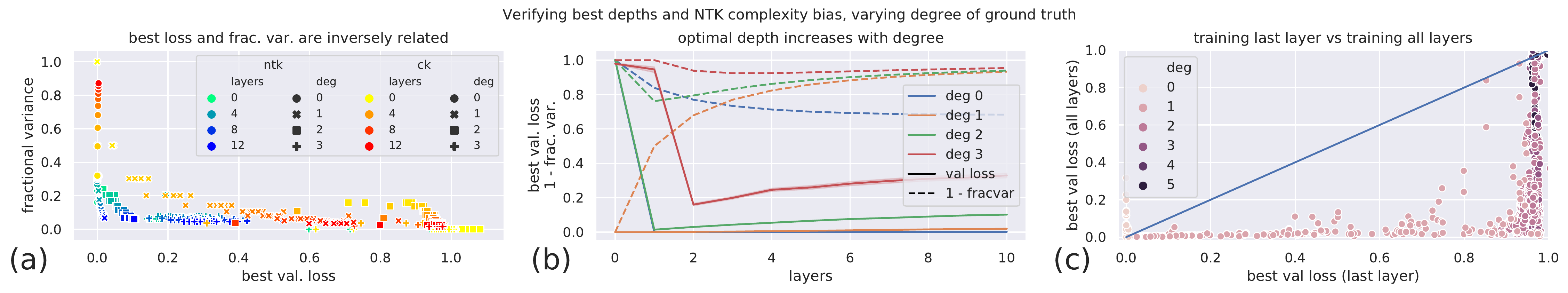}
\caption{
\textbf{(a)}
We train relu networks of different depths against a ground truth polynomial on $\boolcube^{128}$ of different degrees $k$.
We either train only the last layer (marked ``ck'') or all layers (marked ``ntk''), and plot the degree $k$ fractional variance of the corresponding kernel against the best validation loss over the course of training.
We see that the best validation loss is in general inversely correlated with fraction variance, as expected.
However, their precise relationship seems to change depending on the degree, or whether training all layers or just the last.
See \cref{sec:expdetails} for experimental details.
\textbf{(b)}
Same experimental setting as (a), with slightly different hyperparameters, and plotting depth against best validation loss (solid curves), as well as the corresponding kernel's ($1 - $ fractional variance) (dashed curves).
We see that the loss-minimizing depth increases with the degree, as predicted by \cref{fig:optimaldepth}.
Note that we do not expect the dashed and solid curves to match up, just that they are positively correlated as shown by (a).
In higher degrees, the losses are high across all depths, and the variance is large, so we omit them.
See \cref{sec:expdetails} for experimental details.
\textbf{(c)}
Similar experimental setting as (a), but with more hyperparameters, and now comparing training last layer vs training all layers.
See \cref{sec:expdetails} for experimental details.
We also replicate (b) for MNIST and CIFAR10, and moreover both (b) and (c) over the input distributions of standard Gaussian and the uniform measure over the sphere.
See \cref{fig:optimaldepth_sphgabool,fig:optimaldepth_mnistcifar,fig:ckntk_sphga}.
}
\label{fig:verifyTheory}

\includegraphics[width=\linewidth]{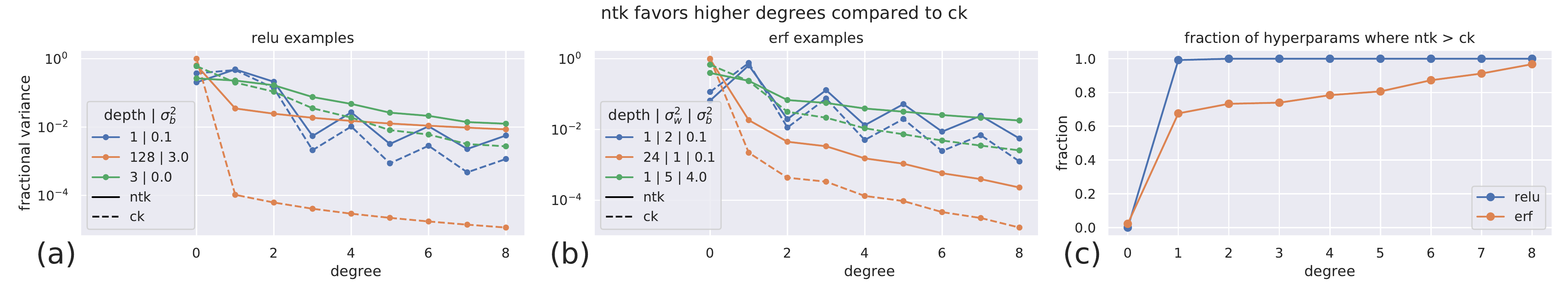}
\caption{
\label{fig:NTKComplexityBias}
\textbf{Across nonlinearities and hyperparameters, NTK tends to have higher fraction of variance attributed to higher degrees than CK.}
In \textbf{(a)}, we give several examples of the fractional variance curves for relu CK and NTK across several representative hyperparameters.
In \textbf{(b)}, we do the same for erf CK and NTK.
In both cases, we clearly see that, while for degree 0 or 1, the fractional variance is typically higher for CK, the reverse is true for larger degrees.
In \textbf{(c)}, for each degree $k$, we plot the \emph{fraction of hyperparameters} where the degree $k$ fractional variance of NTK is greater than that of CK.
Consistent with previous observations, this fraction increases with the degree.
}

\includegraphics[width=\linewidth]{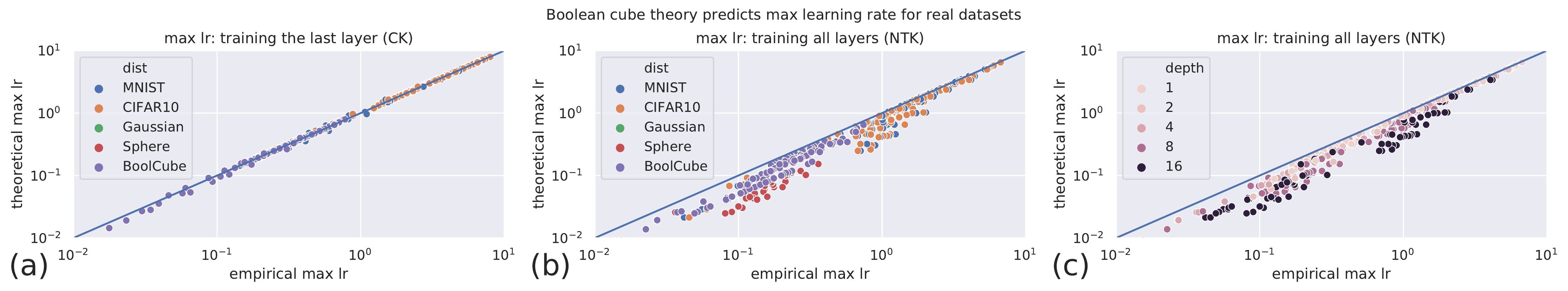}
\caption{\textbf{Spectral theory of CK and NTK over boolean cube predicts max learning rate for SGD over real datasets MNIST and CIFAR10 as well as over boolean cube $\boolcube^{128}$, the sphere $\sqrt{128}\sphere^{128-1}$, and the standard Gaussian $\Gaus(0, I_{128})$.}
In all three plots, for different depth, nonlinearity, $\sigma_w^2$, $\sigma_b^2$ of the MLP, we obtain its maximal nondiverging learning rate (``max learning rate'') via binary search.
We center and normalize each image of MNIST and CIFAR10 to the $\sqrt{d} \sphere^{d-1}$ sphere, where $d = 28^2 = 784$ for MNIST and $d = 3\times 32^2 = 3072$ for CIFAR10.
See \cref{sec:maxlrExpDetails} for more details.
\textbf{(a)} We empirically find max learning rate for training only the last layer of an MLP.
Theoretically, we predict $1/\Phi(0)$ where $\Phi$ corresponds to the \emph{CK} of the MLP.
We see that our theoretical prediction is highly accurate.
Note that the Gaussian and Sphere points in the scatter plot coincide with and hide behind the BoolCube points.
\textbf{(b) and (c)} 
We empirically find max learning rate for training all layers.
Theoretically, we predict $1/\Phi(0)$ where $\Phi$ corresponds to the \emph{NTK} of the MLP.
The points are identical between (b) and (c), but the color coding is different.
Note that the Gaussian points in the scatter plots coincide with and hide behind the Sphere points.
In \textbf{(b)} we see that our theoretical prediction when training all layers is not as accurate as when we train only the last layer, but it is still highly correlated with the empirical max learning rate.
It in general underpredicts, so that half of the theoretical learning rate should always have SGD converge.
This is expected, since the NTK limit of training dynamics is only exact in the large width limit, and larger learning rate just means the training dynamics diverges from the NTK regime, but not necessarily that the training diverges.
In \textbf{(c)}, we see that deeper networks tend to accept higher learning rate than our theoretical prediction.
If we were to preprocess MNIST and CIFAR10 differently, then our theory is less accurate at predicting the max learning rate; see \cref{fig:maxlrpreprocess} for more details.
}
\label{fig:maxlr}
\end{figure*}

\section{Deeper Networks Learn More Complex Features}

\label{sec:optimaldepth}

In the rest of this work, we compute the eigenvalues $\mu_k$ over the 128-dimensional boolean cube ($\boolcube^d$, with $d = 128$) for a large number of different hyperparameters, and analyze how the latter affect the former.
We vary the degree $k \in [0, 8]$, the nonlinearity between relu and erf, the depth (number of hidden layers) from 1 to 128, and $\sigma_b^2 \in [0, 4]$.
We fix $\sigma_w^2 = 2$ for relu kernels, but additionally vary $\sigma_w^2 \in [1, 5]$ for erf kernels.
Comprehensive contour plots of how these hyperparameters affect the kernels are included in \cref{sec:phasediagrams}, but in the main text we summarize several trends we see.

We will primarily measure the change in the spectrum by the \emph{degree $k$ fractional variance}, which is just
\[
\text{degree $k$ fractional variance}
\defeq \f{\binom d k \mu_k}{\sum_{i=0}^d \binom d i \mu_i}.
\]
This terminology comes from the fact that, if we were to sample a function $f$ from a Gaussian process with kernel $K$, then we expect that $r\%$ of the total variance of $f$ comes from degree $k$ components of $f$, where $r\%$ is the degree $k$ fractional variance.
If we were to try to learn a homogeneous degree-$k$ polynomial using a kernel $K$, intuitively we should try to choose $K$ such that its $\mu_k$ is maximized, relative to other eigenvalues.
\cref{fig:verifyTheory}(a) shows that this is indeed the case even with neural networks: over a large number of different hyperparameter settings, degree $k$ fractional variance is inversely related to the validation loss incurred when learning a degree $k$ polynomial.
However, this plot also shows that there does not seem like a precise, clean relationship between fractional variance and validation loss.
Obtaining a better measure for predicting generalization is left for future work.

If $K$ were to be the CK or NTK of a relu or erf MLP, then we find that for higher $k$, the depth of the network helps increase the degree $k$ fractional variance.
In \cref{fig:optimaldepth}(a) and (b), we plot, for each degree $k$, the depth that (with some combination of other hyperparameters like $\sigma_b^2$) achieves this maximum, for respectively relu and erf kernels.
Clearly, the maximizing depths are increasing with $k$ for relu, and also for erf when considering either odd $k$ or even $k$ only.
The slightly differing behavior between even and odd $k$ is expected, as seen in the form of \cref{thm:weakspectralsimplicitybias}.
Note the different scales of y-axes for relu and erf --- the depth effect is much stronger for erf than relu.

For relu NTK and CK, $\sigma_b^2=0$ maximizes fractional variance in general, and the same holds for erf NTK and CK in the odd degrees (see \cref{sec:phasediagrams}).
In \cref{fig:optimaldepth}(c) and \cref{fig:optimaldepth}(d) we give a more fine-grained look at the $\sigma_b^2=0$ slice, via heatmaps of fractional variance against degree and depth.
Brighter color indicates higher variance, and we see the optimal depth for each degree $k$ clearly increases with $k$ for relu NTK, and likewise for odd degrees of erf NTK.
However, note that as $k$ increases, the difference between the maximal fractional variance and those slightly suboptimal becomes smaller and smaller, reflected by suppressed range of color moving to the right.
The heatmaps for relu and erf CKs look similar and are omitted.

We verify this increase of optimal depth with degree in \cref{fig:verifyTheory}(b).
There we have trained relu networks of varying depth against a ground truth multilinear polynomial of varying degree.
We see clearly that the optimal depth is increasing with degree.
We also verify this phenomenon when the input distribution changes to the standard Gaussian or the uniform distribution over the sphere $\sqrt d \sphere^{d-1}$; see \cref{fig:optimaldepth_sphgabool}.

Note that implicit in our results here is a highly nontrivial observation:
Past some point (the \emph{optimal depth}), high depth can be detrimental to the performance of the network, beyond just the difficulty to train, and this detriment can already be seen in the corresponding NTK or CK.
In particular, it's \emph{not} true that the optimal depth is infinite.
We confirm the existence of such an optimal depth even in real distributions like MNIST and CIFAR10; see \cref{fig:optimaldepth_mnistcifar}.
This adds significant nuance to the folk wisdom that ``depth increases expressivity and allows neural networks to learn more complex features.''

\section{NTK Favors More Complex Features Than CK}
\label{sec:NTKcomplexitybias}

We generally find the degree $k$ fractional variance of NTK to be higher than that of CK when $k$ is large, and vice versa when $k$ is small, as shown in \cref{fig:NTKComplexityBias}.
This means that, if we train only the last layer of a neural network (i.e. CK dynamics), we intuitively should expect to learn simpler features better, while, if we train all parameters of the network (i.e. NTK dynamics), we should expect to learn more complex features better.
Similarly, if we were to sample a function from a Gaussian process with the CK as kernel (recall this is just the distribution of randomly initialized infinite width MLPs \citep{leeDeepNeuralNetworks2018}), this function is more likely to be accurately approximated by low degree polynomials than the same with the NTK.

We verify this intuition by training a large number of neural networks against ground truth functions of various homogeneous polynomials of different degrees, and show a scatterplot of how training the last layer only measures against training all layers (\cref{fig:verifyTheory}(c)).
This phenomenon remains true over the standard Gaussian or the uniform distribution on the sphere (\cref{fig:ckntk_sphga}).
Consistent with our theory, the only place training the last layer works meaningfully better than training all layers is when the ground truth is a constant function.
However, we reiterate that fractional variance is an imperfect indicator of performance.
Even though for erf neural networks and $k \ge 1$, degree $k$ fractional variance of NTK is not always greater than that of the CK, we do not see any instance where training the last layer of an erf network is better than training all layers.
We leave an investigation of this discrepancy to future work.

\enlargethispage{\baselineskip}

\section{Predicting the Maximum Learning Rate}

\label{sec:maxlr}

In any setup that tries to push deep learning benchmarks, learning rate tuning is a painful but indispensable part.
In this section, we show that our spectral theory can accurately predict the maximal nondiverging learning rate over \emph{real datasets} as well as toy input distributions, which would help set the correct upper limit for a learning rate search.

By \citet{jacotNeuralTangentKernel2018arXiv.org}, in the limit of large width and infinite data, the function $g: \Xx \to \R$ represented by our neural network evolves like
\begin{equation}
g^{t+1} = g^t - 2 \alpha K (g^t - g^*),\quad
t = 0, 1, 2, \ldots,
\label{eqn:kernelGD}
\end{equation}
when trained under full batch GD (with the entire population) with L2 loss $L(f, g) = \EV_{x \sim \Xx} (f(x) - g(x))^2$, ground truth $g^*$, and learning rate $\alpha$, starting from randomly initialization.
If we train only the last layer, then $K$ is the CK; if we train all layers, then $K$ is the NTK.
Given an eigendecomposition of $K$ as in \cref{eqn:eigendecomposition}, if $g^0 - g^* = \sum_i a_i u_i$ is the decomposition of $g^0$ in the eigenbasis $\{u_i\}_i$, then one can easily deduce that
\begin{equation*}
g^t - g^* = \sum_i a_i (1 - 2 \alpha \lambda_i)^t u_i.
\end{equation*}
Consequently, we must have $\alpha < \lp \max_i \lambda_i\rp^{-1}$ in order for \cref{eqn:kernelGD} to converge \footnote{Note that this is the max learning rate of the infinite-width neural network evolving in the NTK regime, but not necessarily the max learning rate of the finite-wdith neural network, as a larger learning rate just means that the network no longer evolves in the NTK regime.}

When the input distribution is the uniform distribution over $\boolcube^d$, the maximum learning rate is $\max(\mu_0, \mu_1)$ by \cref{thm:weakspectralsimplicitybias}.
By \cref{thm:unifyingSpectra}, as long as the $\Phi$ function corresonding to $K$ has $\Phi(0) \ne 0$, when $d$ is large, we expect $\mu_0 \approx \Phi(0)$ but $\mu_1 \sim d^{-1} \Phi'(0) \ll \mu_0$.
Therefore, we should predict $\f1{\Phi(0)}$ for the maximal learning rate when training on the boolean cube.
However, as \cref{fig:maxlr} shows, this prediction is accurate not only for the boolean cube, but also over the sphere, the standard Gaussian, and even MNIST and CIFAR10!

\enlargethispage{\baselineskip}

\section{Conclusion}

In this work, we have taken a first step at studying how hyperparameters change the initial distribution and the generalization properties of neural networks through the lens of neural kernels and their spectra.
We obtained interesting insights by computing kernel eigenvalues over the boolean cube and relating them to generalization through the \emph{fractional variance} heuristic.
While it inspired valid predictions that are backed up by experiments, fractional variance is clearly just a rough indicator.
We hope future work can refine on this idea to produce a much more precise prediction of test loss.
Nevertheless, we believe the spectral perspective is the right line of research that will not only shed light on mysteries in deep learning but also inform design choices in practice.

\clearpage

\bibliography{ref,extra3}
\bibliographystyle{plainnat}

\newpage

\appendix
\onecolumn

\section{Related Works}
\label{sec:relatedworks}

The Gaussian process behavior of neural networks was found by \citet{nealBAYESIANLEARNINGNEURAL1995} for shallow networks and then extended over the years to different settings and architectures \citep{williamsComputingInfiniteNetworks1997Zotero,lerouxContinuousNeuralNetworks2007,hazanStepsDeepKernel2015arXiv.org,danielyDeeperUnderstandingNeural2016arXiv.org,leeDeepNeuralNetworks2018,matthewsGaussianProcessBehaviour2018,novakBayesianDeepConvolutional2018}.
This connection was exploited implicitly or explicitly to build new models \citep{choKernelMethodsDeep2009GoogleScholar,lawrenceHierarchicalGaussianProcess2007,damianouDeepGaussianProcesses2013,wilsonStochasticVariationalDeep2016,wilsonDeepKernelLearning2016,bradshawAdversarialExamplesUncertainty2017,vanderwilkConvolutionalGaussianProcesses2017,kumarDeepGaussianProcesses2018,blomqvistDeepConvolutionalGaussian2018,borovykhGaussianProcessPerspective2018,garriga-alonsoDeepConvolutionalNetworks2018arXiv.org,novakBayesianDeepConvolutional2018,leeDeepNeuralNetworks2018}.
The Neural Tangent Kernel is a much more recent discovery by \citet{jacotNeuralTangentKernel2018arXiv.org} and later 
\citet{allen-zhuLearningGeneralizationOverparameterized2018arXiv.org,allen-zhuConvergenceTheoryDeep2018arXiv.org,allen-zhuConvergenceRateTraining2018arXiv.org,duGradientDescentProvably2018arXiv.org,aroraFineGrainedAnalysisOptimization2019arxiv.org,zouStochasticGradientDescent2018arXiv.org} came upon the same reasoning independently.
Like CK, NTK has also been applied toward building new models or algorithms \citep{aroraExactComputationInfinitely2019arXiv.org,achiamCharacterizingDivergenceDeep2019arXiv.org}.

Closely related to the discussion of CK and NTK is the signal propagation literature, which tries to understand how to prevent pathological behaviors in randomly initialized neural networks when they are deep \citep{pooleExponentialExpressivityDeep2016a,schoenholzDeepInformationPropagation2017,yangMeanFieldResidual2017,yangDeepMeanField2018openreview.net,haninWhichNeuralNet2018arxiv.org,haninHowStartTraining2018arXiv.org,chenDynamicalIsometryMean2018,yangMeanFieldTheory2019arXiv.org,penningtonResurrectingSigmoidDeep2017NeuralInformationProcessingSystems,hayouSelectionInitializationActivation2018arXiv.org,philippNonlinearityCoefficientPredicting2018arXiv.org}.
This line of work can trace its original at least to the advent of the \emph{Glorot} and \emph{He} initialization schemes for deep networks \citep{glorotUnderstandingDifficultyTraining2010,heDelvingDeepRectifiers2015GoogleScholar}.
The investigation of \emph{forward signal propagation}, or how random neural networks change with depth, corresponds to studying the infinite-depth limit of CK, and the investigation of \emph{backward signal propagation}, or how gradients of random networks change with depth, corresponds to studying the infinite-depth limit of NTK.
Some of the quite remarkable results from this literature includes how to train a 10,000 layer CNN \citep{xiaoTrainingUltradeepCNNs2017} and that, counterintuitively, batch normalization causes gradient explosion \citep{yangMeanFieldTheory2019arXiv.org}.

This signal propagation perspective can be refined via random matrix theory \citep{penningtonResurrectingSigmoidDeep2017NeuralInformationProcessingSystems,penningtonEmergenceSpectralUniversality2018arXiv.org}.
In these works, free probability is leveraged to compute the singular value distribution of the input-output map given by the random neural network, as the input dimension and width tend to infinity together.
Other works also investigate various questions of neural network training and generalization from the random matrix perspective
\citep{penningtonNonlinearRandomMatrix2017GoogleScholar,penningtonGeometryNeuralNetwork2017,penningtonSpectrumFisherInformation2018Zotero}.

\citet{yangScalingLimitsWide2019arXiv.org,yang2019tensor} presents a common framework, known as \emph{Tensor Programs}, unifying the GP, NTK, signal propagation, and random matrix perspectives, as well as extending them to new scenarios, like recurrent neural networks.
It proves the existence of and allows the computation of a large number of infinite-width limits (including ones relevant to the above perspectives) by expressing the quantity of interest as the output of a computation graph and then manipulating the graph mechanically.

Several other works also adopt a spectral perspective on neural networks \citep{candesHarmonicAnalysisNeural1999ScienceDirect,sonodaNeuralNetworkUnbounded2017arXiv.org,eldanPowerDepthFeedforward2016proceedings.mlr.press,barronUniversalApproximationBounds1993IEEEXplore,xuTrainingBehaviorDeep2018arXiv.org,zhangExplicitizingImplicitBias2019arXiv.org,xuFrequencyPrincipleFourier2019arXiv.org,xuUnderstandingTrainingGeneralization2018arXiv.org}; here we highlight a few most relevant to us.
\citet{rahamanSpectralBiasNeural2018arXiv.org} studies the \emph{real Fourier frequencies} of relu networks and perform experiments on real data as well as synthetic ones.
They convincingly show that relu networks learn low frequencies components first.
They also investigate the subtleties when the data manifold is low dimensional and embedded in various ways in the input space.
In contrast, our work focuses on the spectra of the CK and NTK (which indirectly informs the Fourier frequencies of a typical network).
Nevertheless, our results are complementary to theirs, as they readily explain the low frequency bias in relu that they found.
\citet{karakidaUniversalStatisticsFisher2018arXiv.org} studies the spectrum of the Fisher information matrix, which share the nonzero eigenvalues with the NTK.
They compute the mean, variance, and maximum of the eigenvalues Fisher eigenvalues (taking the width to infinity first, and then considering finite amount of data sampled iid from a Gaussian).
In comparison, our spectral results yield all eigenvalues of the NTK (and thus also all nonzero eigenvalues of the Fisher) as well as eigenfunctions.

Finally, we note that several recent works \citep{xieDiverseNeuralNetwork2016arXiv.org,biettiInductiveBiasNeural2019arXiv.org,basriConvergenceRateNeural2019arXiv.org,ghorbaniLinearizedTwolayersNeural2019arXiv.org} studied one-hidden layer neural networks over the sphere, building on \citet{smolaRegularizationDotProductKernels2001NeuralInformationProcessingSystems}'s observation that spherical harmonics diagonalize dot product kernels, with the latter two concurrent to us.
This is in contrast to the focus on boolean cube here, which allows us to study the fine-grained effect of hyperparameters on the spectra, leading to a variety of insights into neural networks' generalization properties.

\section{Universality of Our Boolean Cube Observations in Other Input Distributions}

Using the spectral theory we developed in this paper, we made three observations, that can be roughly summarized as follows:
1) the simplicity bias noted by \cite{valle-perezDeepLearningGeneralizes2018arXiv.org} is not universal;
2) for each function of fixed ``complexity'' there is an optimal depth such that networks shallower or deeper will not learn it as well;
3) training last layer only is better than training all layers when learning ``simpler'' features, and the opposite is true for learning ``complex'' features.

In this section, we discuss the applicability of these observations to distributions that are not uniform over the boolean cube: in particular, the uniform distribution over the sphere $\sqrt d \sphere^{d-1}$, the standard Gaussian $\Gaus(0, I_d)$, as well as realistic data distributions such as MNIST and CIFAR10.

\paragraph{Simplicity bias}
The simplicity bias noted by \cite{valle-perezDeepLearningGeneralizes2018arXiv.org}, in particular \cref{fig:simplicitybias1}, depends on the finiteness of the boolean cube as a domain, so we cannot effectively test this on the distributions above, which all have uncountable support.

\paragraph{Optimal depth}
With regard to the second observation, we can test whether an \emph{optimal depth} exists for learning functions over the distributions above.
Since polynomial degrees remain the natural indicator of complexity for the sphere and the Gaussian (see \cref{sec:sphere,sec:isotropicGaussian} for the relevant spectral theory), we replicated the experiment in \cref{fig:verifyTheory}(b) for these distributions, using the same ground truth functions of polynomials of different degrees.
The results are shown in \cref{fig:optimaldepth_sphgabool}.
We see the same phenomenon as in the boolean cube case, with an optimal depth for each degree, and with the optimal depth increasing with degree.

For MNIST and CIFAR10, the notion of ``feature complexity'' is less clear, so we will not test the hypothesis that ``optimal depth increases with degree'' for these distributions but only test for the existence of the optimal depth for the ground truth marked by the labels of the datasets.
We do so by training a large number of MLPs of varying depth on these datasets until convergence, and plot the results in \cref{fig:optimaldepth_mnistcifar}.
This figure clearly shows that such an optimal depth exists, such that shallower or deeper networks do monotonically worse as the depth diverge away from this optimal depth.

Again, the existence of the optimal depth is not obvious at all, as conventional deep learning wisdom would have one believe that adding depth should always help.

\begin{figure}
\centering
\includegraphics[width=0.7\textwidth]{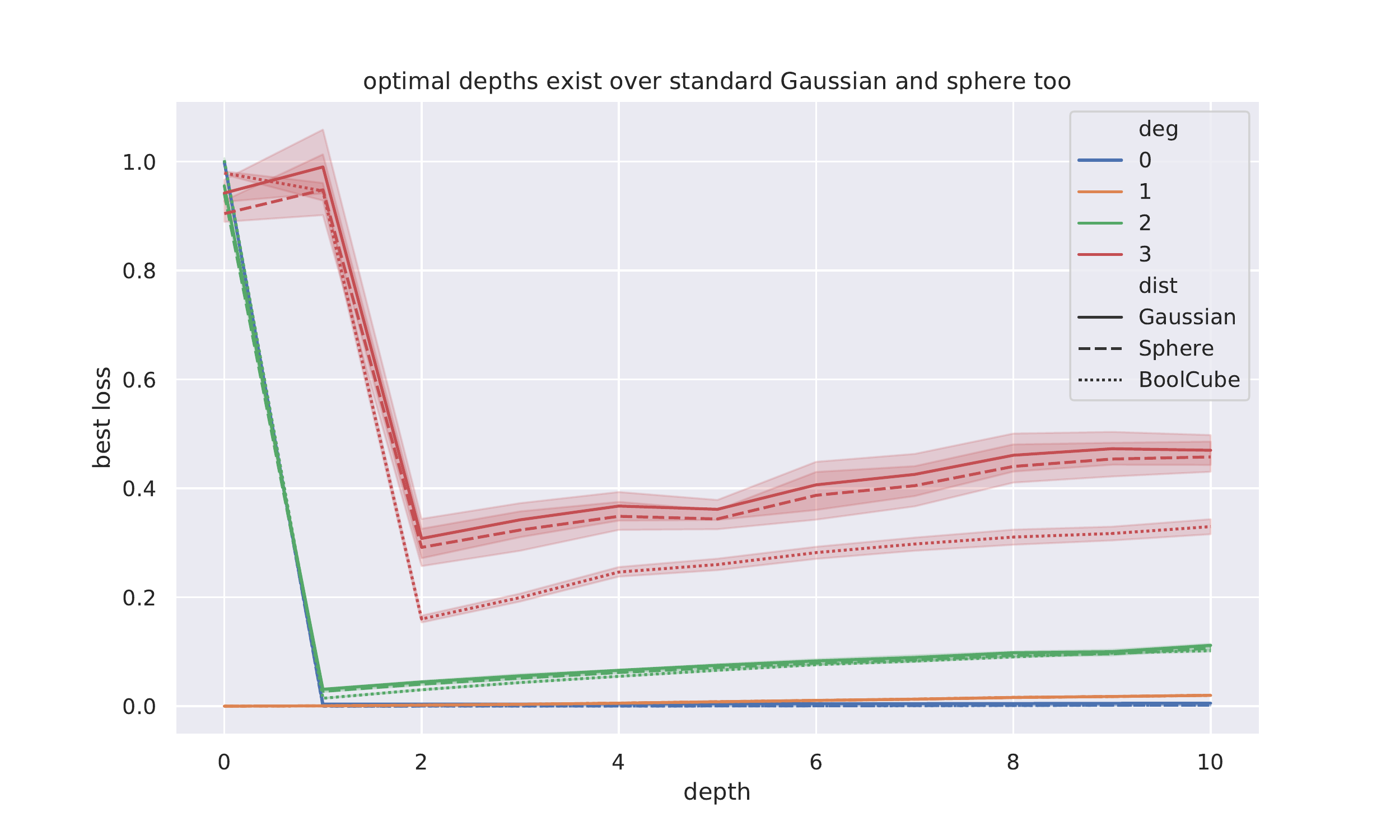}
\caption{\textbf{Optimal depths exist over the standard Gaussian $\Gaus(0, I_d)$ and the uniform distribution over the sphere $\sqrt d \sphere^{d-1}$ as well.}
Here we use the exact same experimental setup as \cref{fig:verifyTheory}(b) (see \cref{sec:expdetails} for details) except that the input distribution is changed from uniform over the boolean cube $\boolcube^{d}$ to standard Gaussian $\Gaus(0, I_d)$ (solid lines) and uniform over the sphere $\sqrt d \sphere^{d-1}$ (dashed lines), where $d = 128$.
We also compare against the results over the boolean cube from \cref{fig:verifyTheory}(b), which are drawn with dotted lines.
Colors indicate the degrees of the ground truth polynomial functions.
The best validation loss for degree 0 to 2 are all very close no matter which distribution the input is sampled from, such that the curves all sit on top of each other.
For degree 3, there is less precise agreement between the validation loss over the different distributions, but the overall trend is unmistakably the same.
We see that for networks deeper or shallower than the optimal depth, the loss monotonically increases as the depth moves away from the optimum.
}
\label{fig:optimaldepth_sphgabool}
\end{figure}

\begin{figure}
\centering
\includegraphics[width=0.7\textwidth]{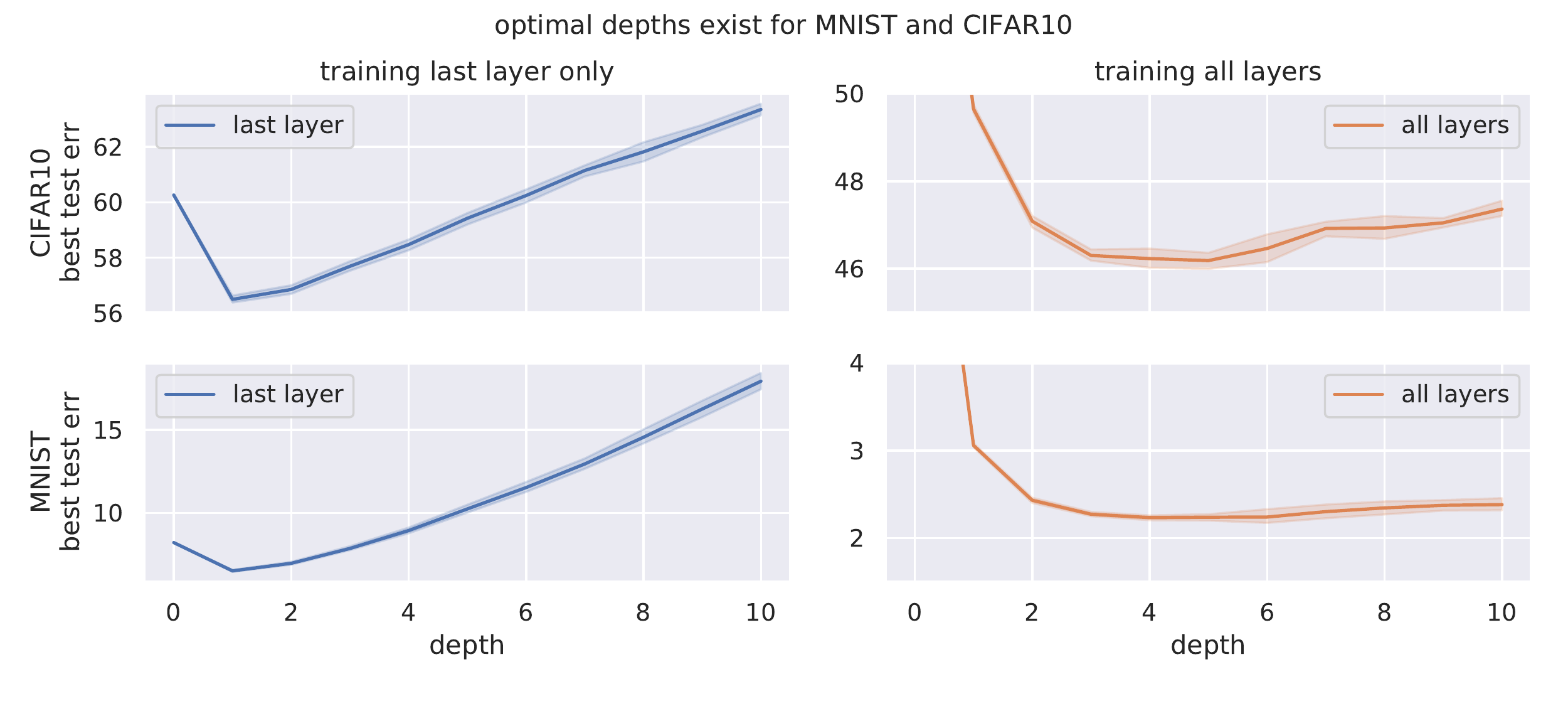}
\caption{\textbf{Optimal depths exist over realistic distributions of MNIST and CIFAR10.}
Here, we trained relu networks with $\sigma_w^2 = 2, \sigma_b^2 = 0$ for all depths from 0 to 10.
We used SGD with learning rate 10 and batch size 256, and trained until convergence.
We record the best test error throughout the training procedure for each depth.
For each configuration, we repeat the randomly initialization and training for 10 random seeds to estimate the variance of the best test error.
The \textbf{rows} demonstrate the best test error over the course of training on CIFAR10 and MNIST, and the \textbf{columns} demonstrate the same for training only the last layer or training all layers.
As one can see, the best depth when training only the last layer is 1, for both CIFAR10 and MNIST.
The best depth when training all layers is around 5 for both CIFAR10 and MNIST.
Performance monotically decreases for networks shallower or deeper than the optimal depth.
Note that we have reached the SOTA accuracy for MLPs reported in \cite{leeDeepNeuralNetworks2018} on CIFAR10, and within 1 point of their accuracy on MNIST.
}
\label{fig:optimaldepth_mnistcifar}
\end{figure}

\paragraph{Training last layer only vs training all layers}
Finally, we repeat the experiment in \cref{fig:verifyTheory}(c) for the sphere and the standard Gaussian, with polynomials of different degrees as ground truth functions.
The results are shown in \cref{fig:ckntk_sphga}.
We see the same phenomenon as in the boolean cube case: for degree 0 polynomials, training last layer works better in general, but for higher degree polynomials, training all layers fares better.

Note that, unlike the sphere and the Gaussian, whose spectral theory tells us that (harmonic) polynomial degree is a natural notion of complexity, for MNIST and CIFAR10 we have much less clear idea of what a ``complex'' or a ``simple'' feature is.
Therefore, we did not attempt a similar experiment on these datasets.

\begin{figure}
\centering
\includegraphics[width=0.8\textwidth]{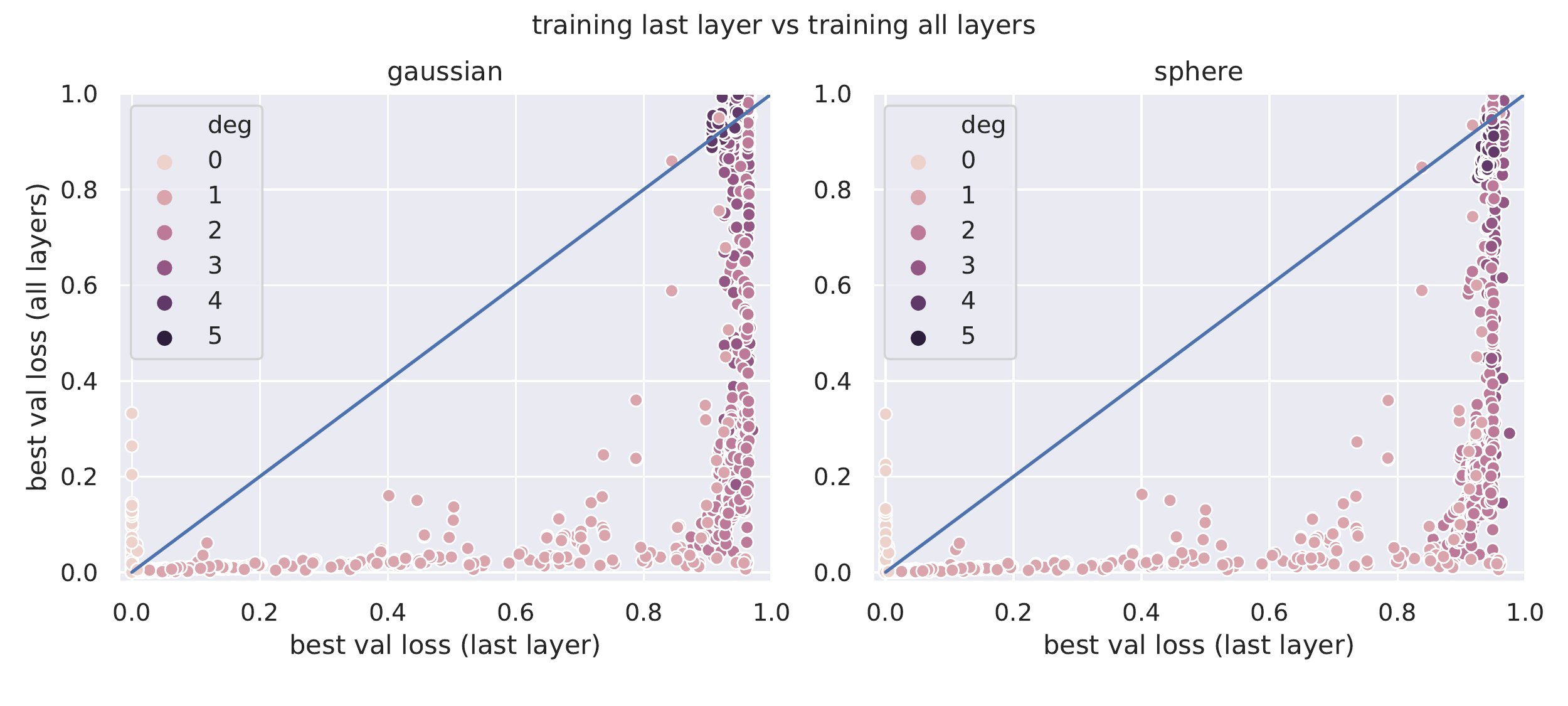}
\caption{\textbf{Just like over the boolean cube: Over the sphere $\sqrt d \sphere^{d-1}$ and the standard Gaussian $\Gaus(0, I_d)$, training only the last layer is better for learning low degree polynomials, but training all layers is better for learning high degree polynomials.}
Here we use the exact same experimental setup as \cref{fig:verifyTheory}(c) (see \cref{sec:expdetails} for details) except that the input distribution is changed from uniform over the boolean cube $\boolcube^{d}$ to standard Gaussian $\Gaus(0, I_d)$ \textbf{(left)} and uniform over the sphere $\sqrt d \sphere^{d-1}$ \textbf{(right)}, where $d = 128$.
}
\label{fig:ckntk_sphga}
\end{figure}

\section{Theoretical vs Empirical Max Learning Rates under Different Preprocessing for MNIST and CIFAR10}

In the main text \cref{fig:maxlr}, on the MNIST and CIFAR10 datasets, we preprocessed the data by centering and normalizing to the sphere (see \cref{sec:maxlrExpDetails} for a precise description).
With this preprocessing, our theory accurately predicts the max learning rate in practice.

In general, if we go by another preprocessing, such as PCA or ZCA, or no preprocessing, our theoretical max learning rate $1/\Phi(0)$ is less accurate but still correlated in general.
The only exception seems to be relu networks on PCA- or ZCA- preprocessed CIFAR10.
See \cref{fig:maxlrpreprocess}.

\begin{figure}
\includegraphics[width=\textwidth]{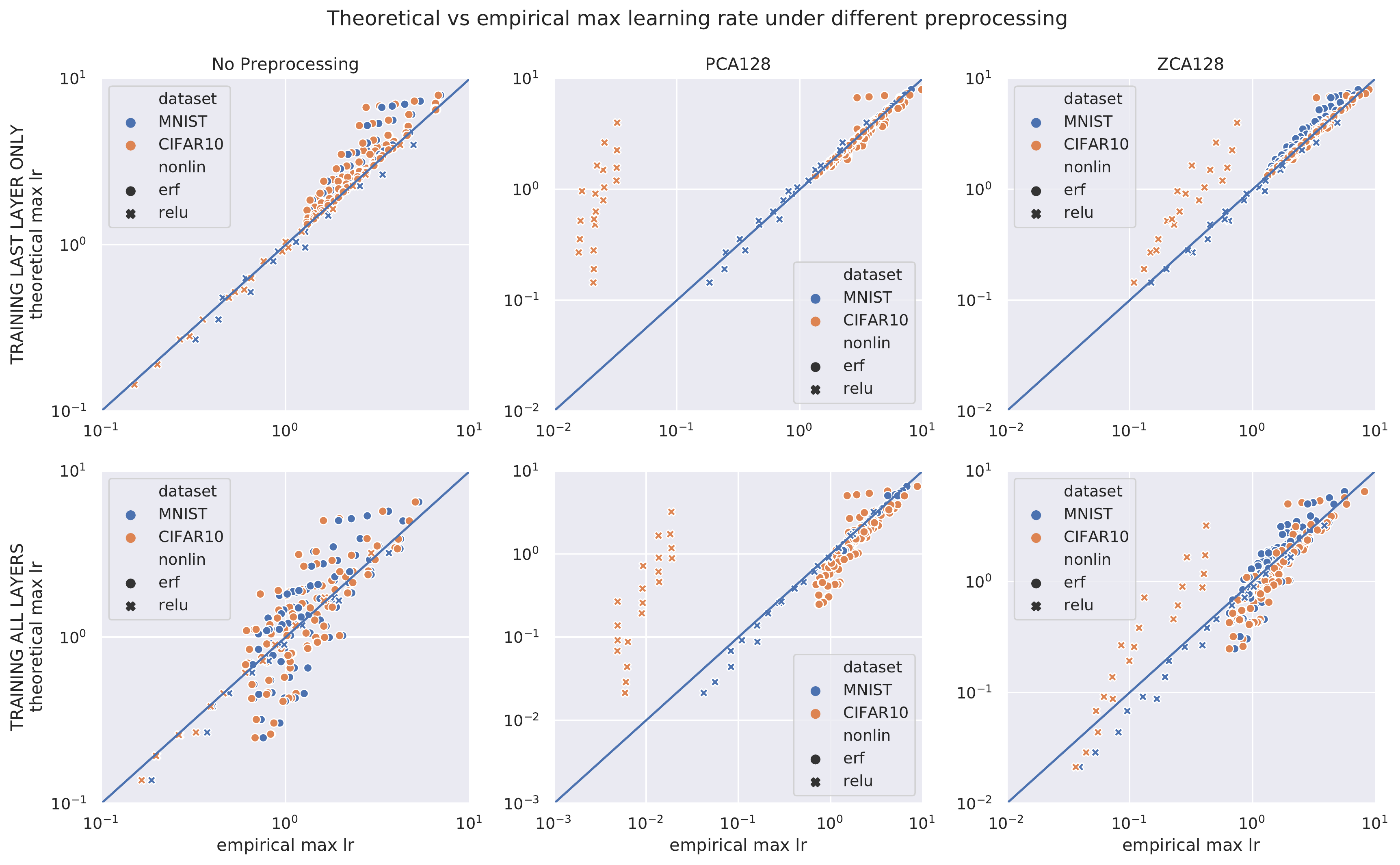}
\caption{
We perform binary search for the empirical max learning rate of MLPs of different depth, activations, $\sigma_w^2$, and $\sigma_b^2$ on MNIST and CIFAR10 preprocessed in different ways.
See \cref{sec:maxlrExpDetails} for experimental details.
The \textbf{first row} compares the theoretical and empirical max learning rates when training only the last layer.
The \textbf{second row} compares the same when training all layers (under NTK parametrization (\cref{eqn:MLP})).
The \textbf{three columns} correspond to the different preprocessing procedures: no preprocessing, PCA projection to the first 128 components (PCA128), and ZCA projection to the first 128 components (ZCA128).
In general, the theoretical prediction is less accurate (compared to preprocessing by centering and projecting to the sphere, as in \cref{fig:maxlr}), though still well correlated with the empirical max learning rate.
The most blatant caveat is the relu networks trained on PCA128- and ZCA128-processed CIFAR10.}
\label{fig:maxlrpreprocess}
\end{figure}

\section{Visualizing the Spectral Effects of \texorpdfstring{$\sigma_w^2, \sigma_b^2$}{weight variance, bias variance}, and Depth}
\label{sec:phasediagrams}

\begin{figure}
\centering
\includegraphics[width=\textwidth]{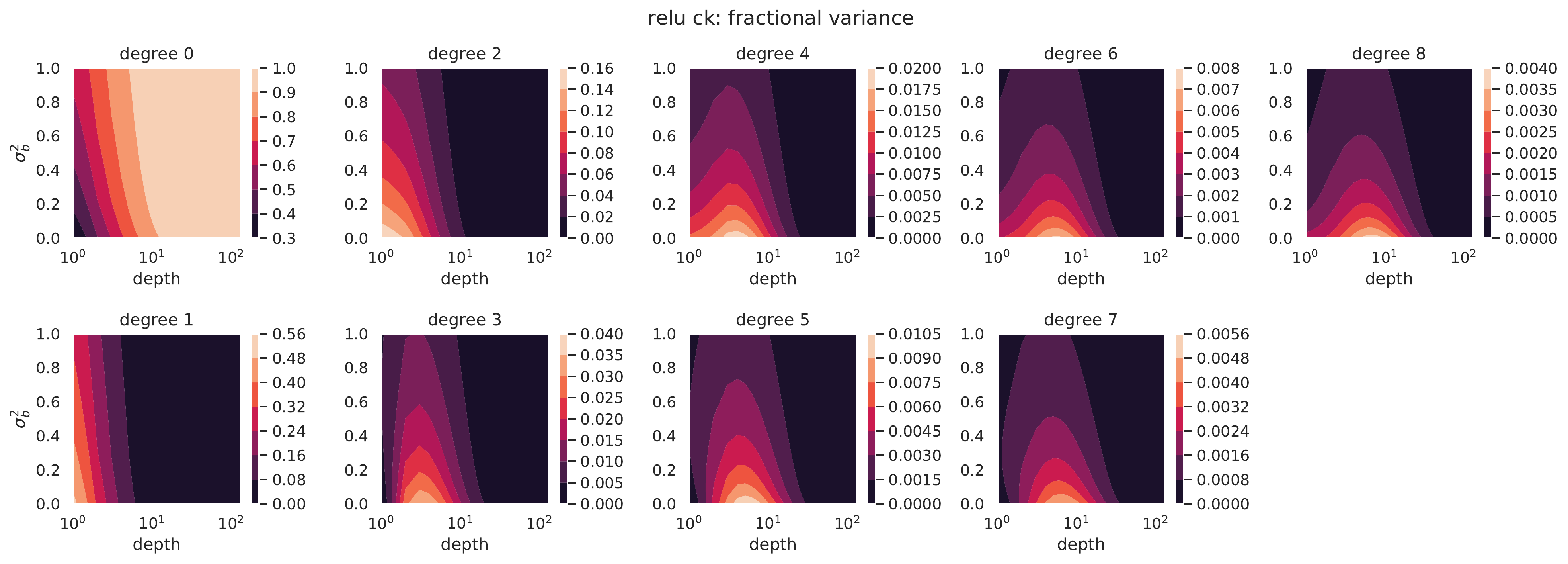}

\includegraphics[width=\textwidth]{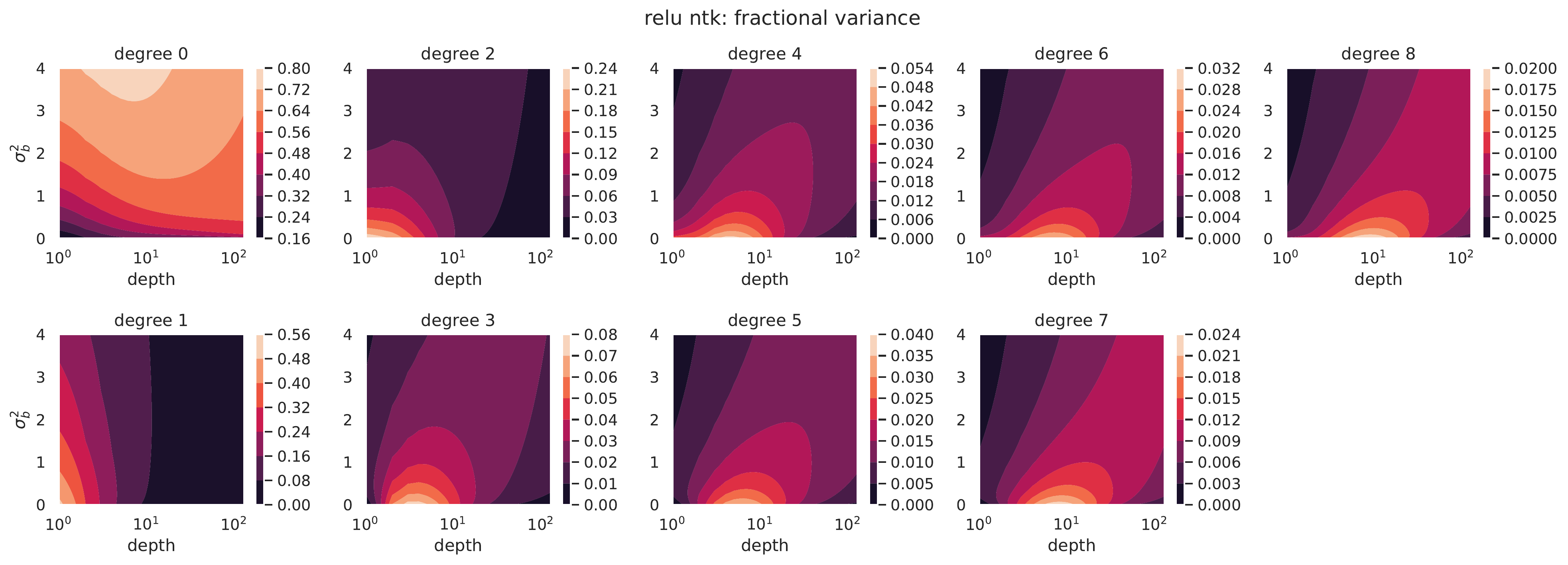}

\caption{
    \textbf{2D contour plots of how fractional variance of each degree varies with $\sigma_b^2$ and depth, fixing $\sigma_w^2 = 2$, for relu CK and NTK}.
    For each degree $k$, and for each selected fractional variance value, we plot the level curve of $(\mathrm{depth}, \sigma_b^2)$ achieving this value.
    The color indicates the fractional variance, as given in the color bars.
}
\label{fig:relu_fracvars}
\end{figure}

While in the main text, we summarized several trends of interest kn several plots, they do not give the entire picture of how eigenvalues and fractional variances vary with $\sigma_w^2, \sigma_b^2$, and depth.
Here we try to present this relationship more completely in a series of contour plots.
\cref{fig:relu_fracvars} shows how varying depth and $\sigma_b^2$ changes the fractional variances of each degree, for relu CK and NTK.
We are fixing $\sigma_w^2 = 2$ in the CK plots, as the fractional variances only depend on the ratio $\sigma_b^2/\sigma_w^2$; even though this is not true for relu NTK, we fix $\sigma_w^2 = 2$ as well for consistency.
For erf, however, the fractional variance will crucially depend on both $\sigma_w^2$ and $\sigma_b^2$, so we present 3D contour plots of how $\sigma_w^2, \sigma_b^2$, and depth changes fractional variance in \cref{fig:contourplotErf3D}.
Complementarily, we also show in \cref{fig:erfntk2D,fig:erfck2D} a few slices of these 3D contour plots for different fixed values of $\sigma_b^2$, for erf CK and NTK.

\begin{figure}[p!]
\centering
\includegraphics[width=0.8\textwidth]{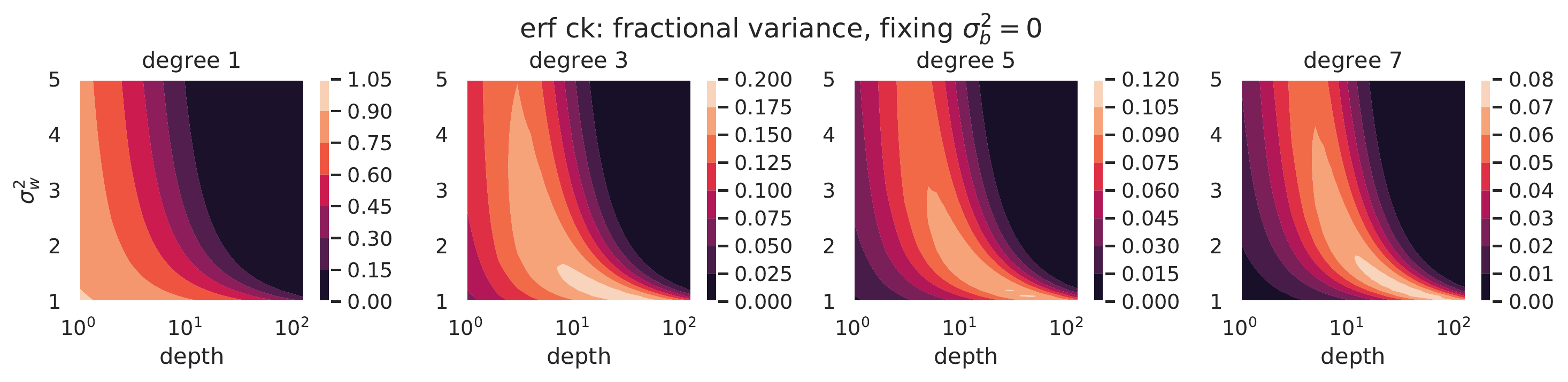}

\includegraphics[width=\textwidth]{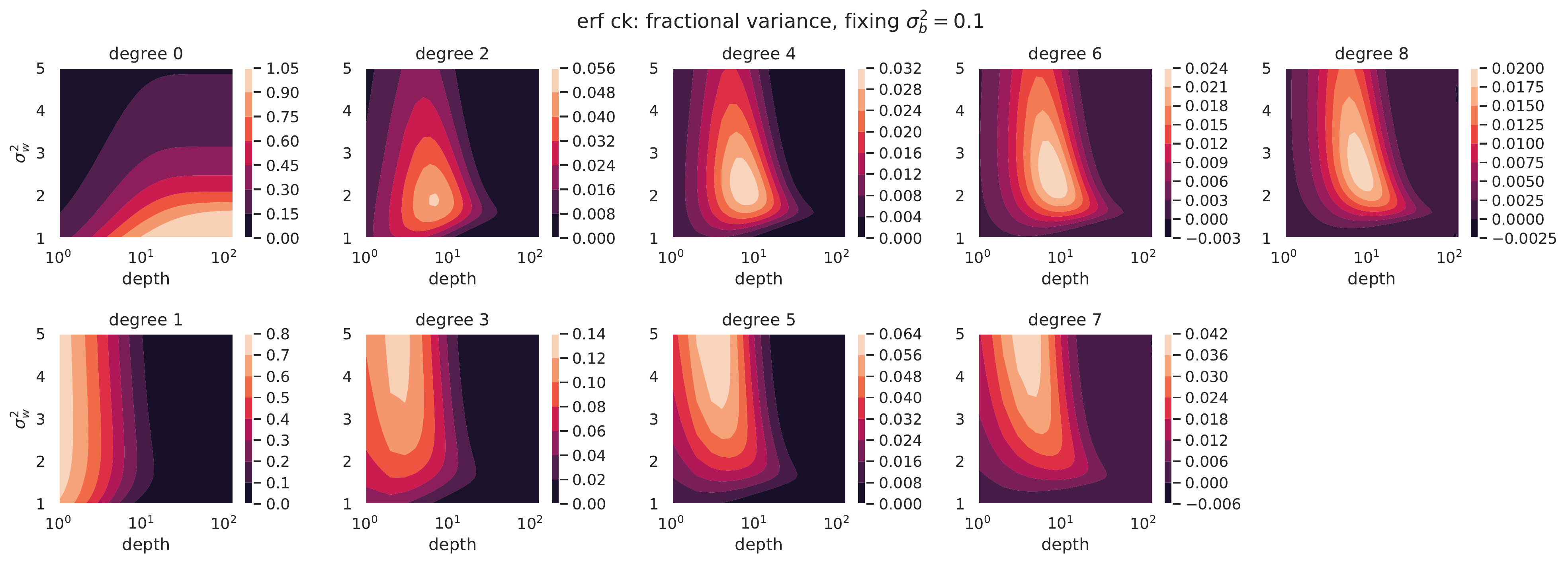}

\includegraphics[width=\textwidth]{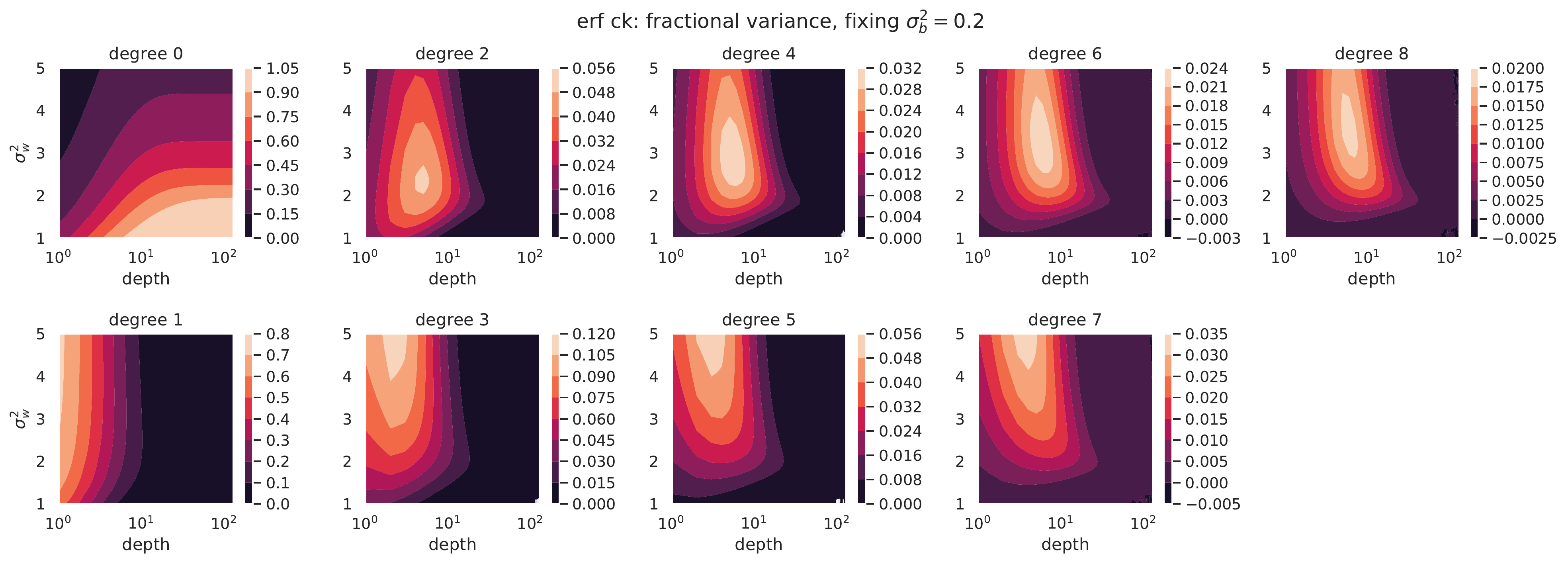}

\includegraphics[width=\textwidth]{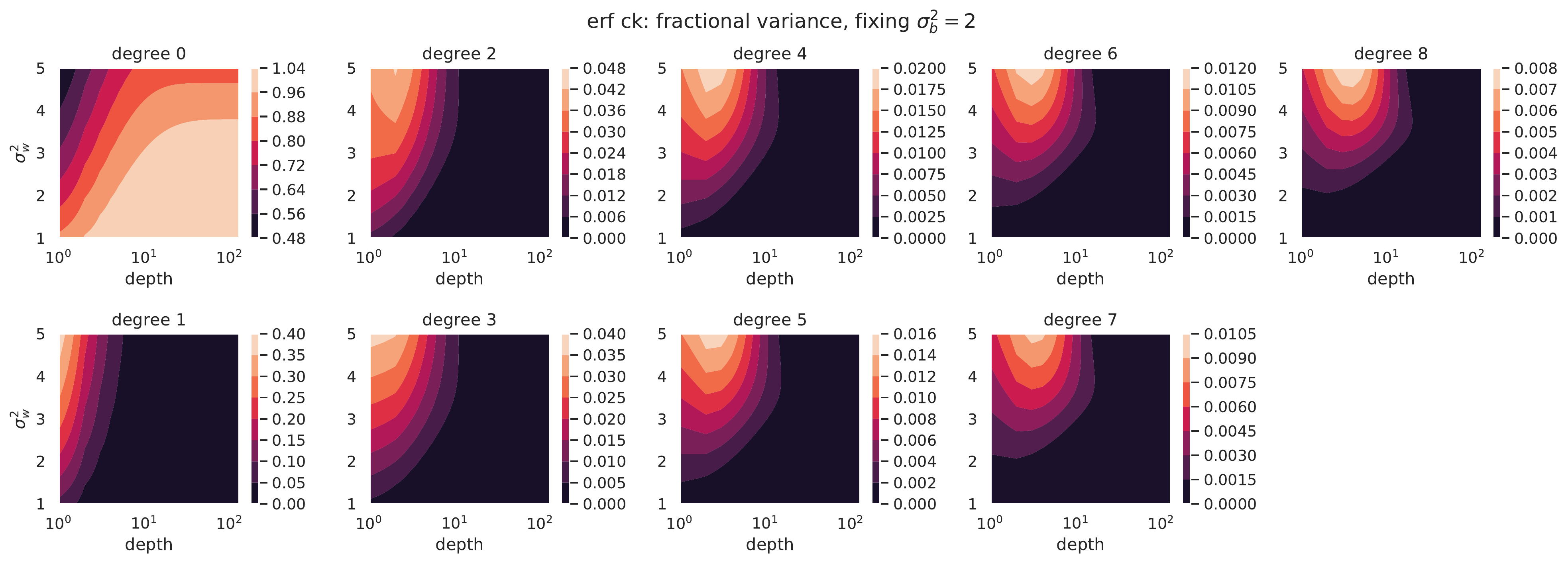}
\caption{
    \textbf{2D contour plots of how fractional variance of each degree varies with $\sigma_w^2$ and depth, for different slices of $\sigma_b^2$, for erf NTK}.
    These plots essentially show slices of the NTK 3D contour plots in \cref{fig:contourplotErf3D}.
    For $\sigma_b = 0$, $\mu_k$ for all even degrees $k$ are 0, so we omit the plots.
    Note the rapid change in the shape of the contours for odd degrees, going from $\sigma_b^2= 0$ to $\sigma_b^2 = 0.1$.
    This is reflected in \cref{fig:contourplotErf3D} as well.
}
\label{fig:erfck2D}
\end{figure}

\begin{figure}[p!]
\centering
\includegraphics[width=0.8\textwidth]{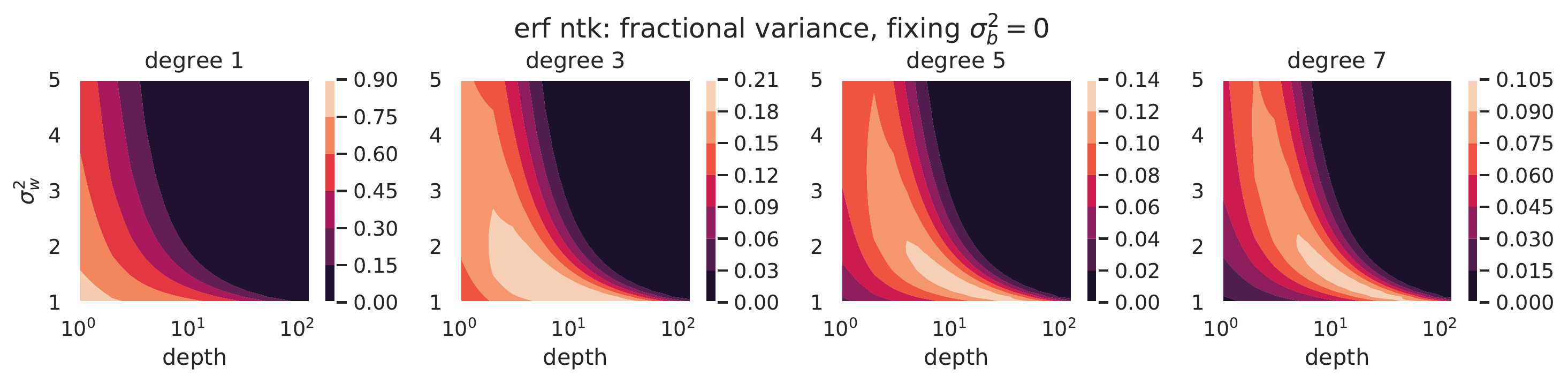}

\includegraphics[width=\textwidth]{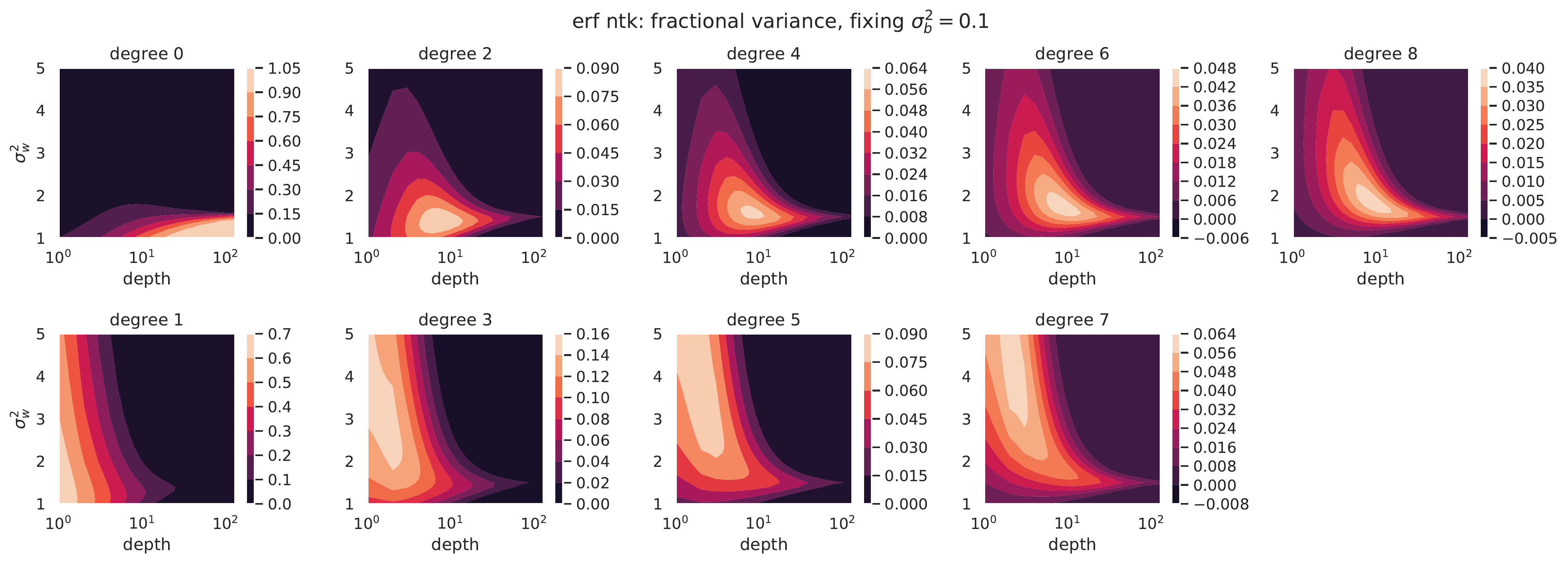}

\includegraphics[width=\textwidth]{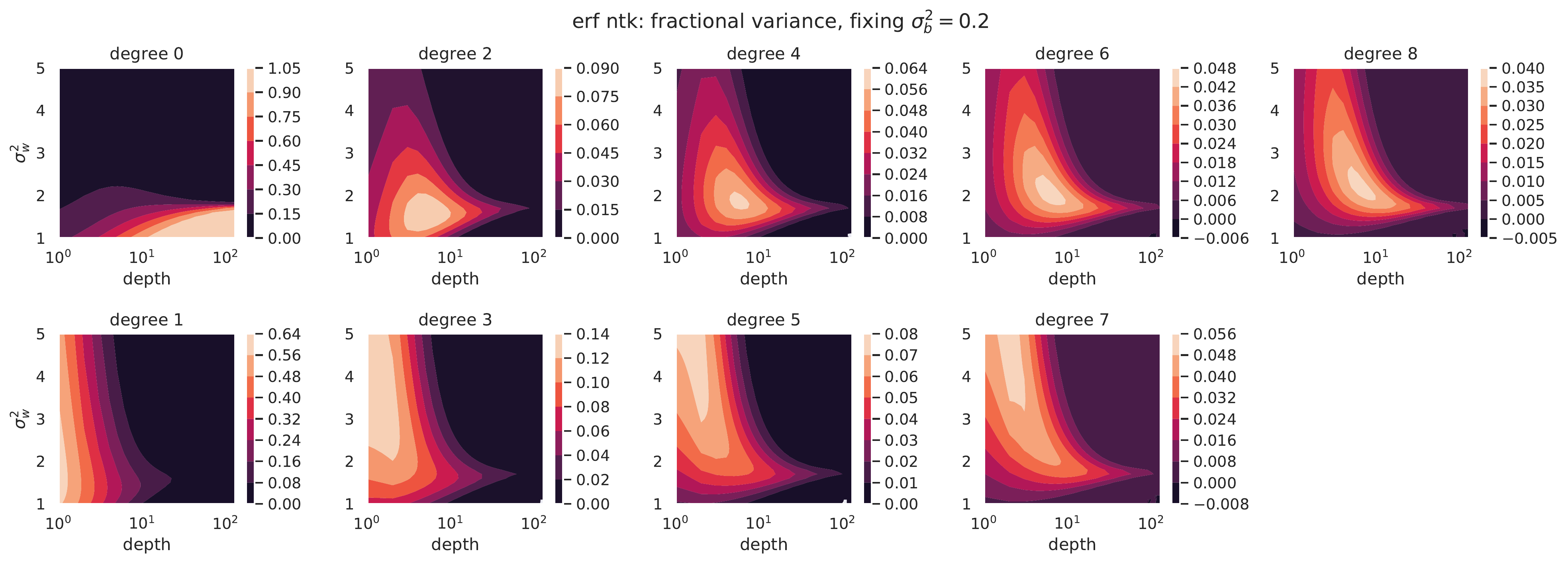}

\includegraphics[width=\textwidth]{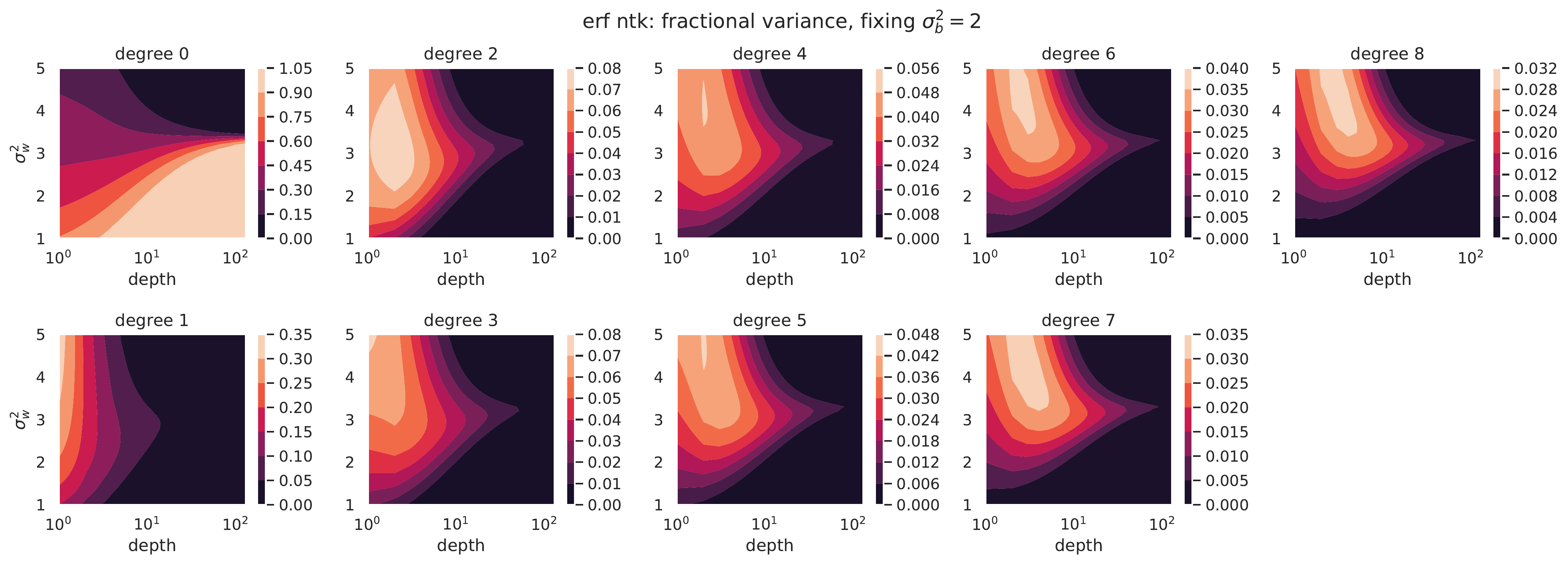}
\caption{
    \textbf{2D contour plots of how fractional variance of each degree varies with $\sigma_w^2$ and depth, for different slices of $\sigma_b^2$, for erf CK}.
    These plots essentially show slices of the CK 3D contour plots in \cref{fig:contourplotErf3D}.
    For $\sigma_b = 0$, $\mu_k$ for all even degrees $k$ are 0, so we omit the plots.
    Note the rapid change in the shape of the contours for odd degrees, going from $\sigma_b^2= 0$ to $\sigma_b^2 = 0.1$.
    This is reflected in \cref{fig:contourplotErf3D} as well.
}
\label{fig:erfntk2D}
\end{figure}
\clearpage

\begin{figure}[p!]
\centering
\includegraphics[width=\textwidth]{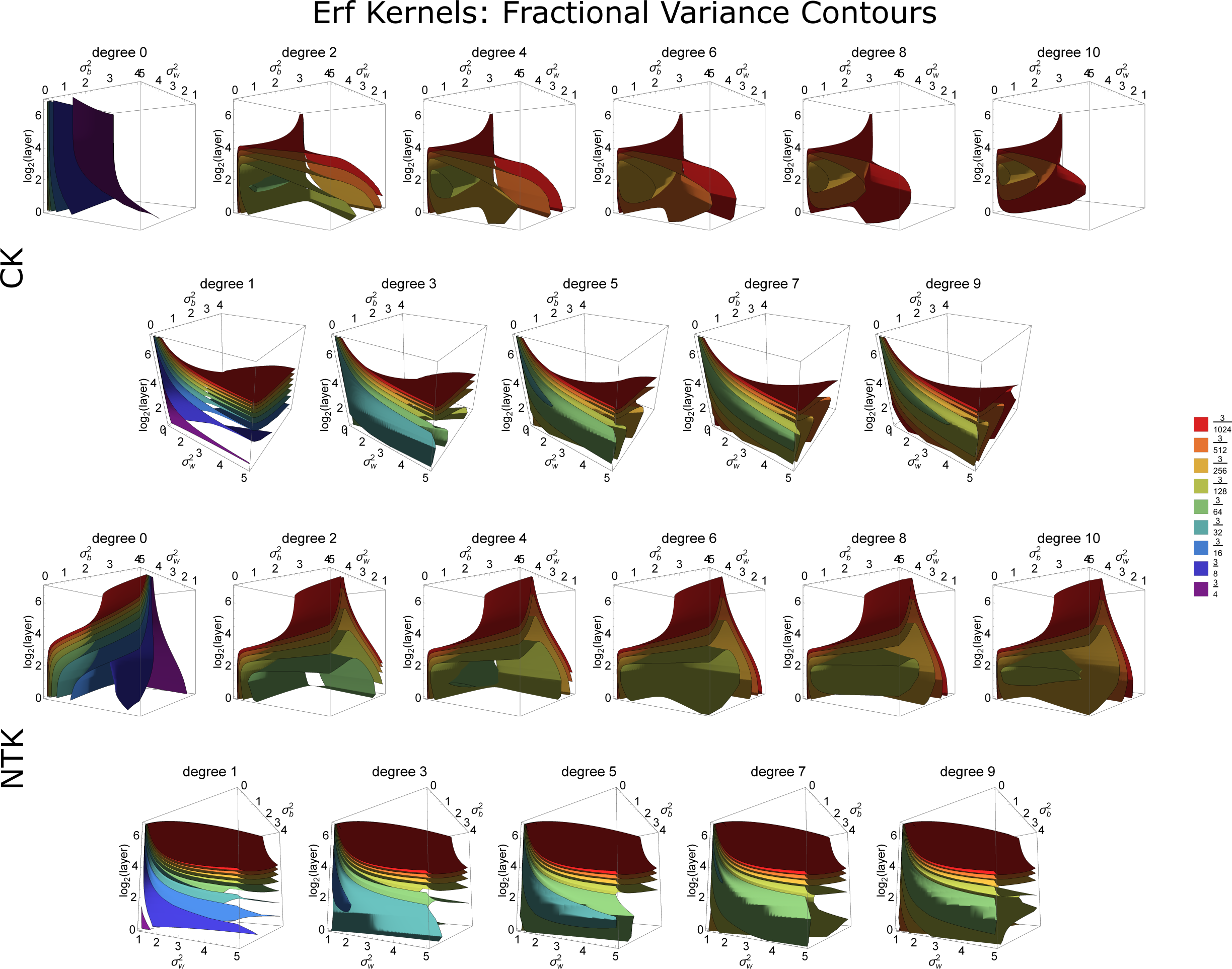}
\caption{\textbf{3D contour plots of how fractional variance of each degree varies with $\sigma_w^2, \sigma_b^2$ and $\log_2(\mathrm{depth})$, for erf CK and NTK}.
For each value of fractional variance, as given in the legend on the right, we plot the level surface in the $(\sigma_w^2, \sigma_b^2, \log_2(\mathrm{depth}))$-space achieving this value in the corresponding color.
The closer to blue the color, the higher the value.
Note that the contour for the highest values in higher degree plots ``floats in mid-air'', implying that there is an optimal depth for learning features of that degree that is not particularly small nor particularly big.}
\label{fig:contourplotErf3D}
\end{figure}
\clearpage

\section{Experimental Details}
\label{sec:expdetails}

\subsection{\texorpdfstring{\cref{fig:verifyTheory}}{Figure 3}}

\cref{fig:verifyTheory}(a), (b) and (c) differ in the set of hyperparameters they involve (to be specified below), but in all of them, we train relu networks against a randomly generated ground truth multilinear polynomial, with input space $\boolcube^{128}$ and L2 loss $L(f) = \EV_{x \in \boolcube^d} (f(x) - f^*(x))^2.$

\paragraph{Training}
We perform SGD with batch size 1000.
In each iteration, we freshly sample a new batch, and we train for a total of 100,000 iterations, so the network potentially sees $10^8$ different examples.
At every 1000 iterations, we validate the current network on a freshly drawn batch of 10,000 examples.
We thus record a total of 100 validation losses, and we take the lowest to be the ``best validation loss.''

\paragraph{Generating the Ground Truth Function}
The ground truth function $f^*(x)$ is generated by first sampling 10 monomials $m_1, \ldots, m_{10}$ of degree $k$, then randomly sampling 10 coefficients $a_1, \ldots, a_{10}$ for them.
The final function is obtained by normalizing $\{a_i\}$ such that the sum of their squares is 1:
\begin{equation}
f^*(x) \defeq \sum_{i=1}^{10} a_i m_i / \sum_{j=1}^{10} a_j^2.
\label{eqn:sampleGT}
\end{equation}

\paragraph{Hyperparameters for \cref{fig:verifyTheory}(a)}

\begin{itemize}
    \item The learning rate is half the theoretical maximum learning rate\footnote{Note that, because the L2 loss here is $L(f) = \EV_{x \in \boolcube^d} (f(x) - f^*(x))^2$, the maximum learning rate is $\lambda_{max}^{-1} = \max(\mu_0, \mu_1)^{-1}$ (see \cref{thm:weakspectralsimplicitybias}).
    If we instead adopt the convention $L(f) = \EV_{x \in \boolcube^d} \f 1 2 (f(x) - f^*(x))^2$, then the maximum learning rate would be $2 \lambda_{max}^{-1} = 2 \max(\mu_0, \mu_1)^{-1}$} $\f 1 2 \max(\mu_0, \mu_1)^{-1}$
    \item Ground truth degree $k \in \{0, 1, 2, 3\}$
    \item Depth $\in \{0, \ldots, 10\}$
    \item activation = relu
    \item $\sigma_w^2 = 2$
    \item $\sigma_b^2 = 0$
    \item width = 1000
    \item 10 random seeds per hyperparameter combination
    \item training last layer (marked ``ck''), or all layers (marked ``ntk'').
    In the latter case, we use the NTK parametrization of the MLP (\cref{eqn:MLP}).
\end{itemize}

\paragraph{Hyperparameters for \cref{fig:verifyTheory}(b)}

\begin{itemize}
    \item The learning rate is half the theoretical maximum learning rate $\f 1 2 \max(\mu_0, \mu_1)^{-1}$
    \item Ground truth degree $k \in \{0, 1, 2, 3\}$
    \item Depth $\in \{0, \ldots, 10\}$
    \item activation = relu
    \item $\sigma_w^2 = 2$
    \item $\sigma_b^2 = 0$
    \item width = 1000
    \item 100 random seeds per hyperparameter combination
    \item training last layer weight and bias only
\end{itemize}

\paragraph{Hyperparameters for \cref{fig:verifyTheory}(c)}
\begin{itemize}
    \item The learning rate $\in \{0.05, 0.1, 0.5\}$
    \item Ground truth degree $k \in \{0, 1, \ldots, 6\}$
    \item Depth $\in \{1, \ldots, 5\}$
    \item activation $\in \{\relu, \erf\}$
    \item $\sigma_w^2 = 2$ for relu, but $\sigma_w^2 \in \{1, 2, \ldots, 5\}$ for erf
    \item $\sigma_b^2 \in \{0, 1, \ldots, 4\}$
    \item width = 1000
    \item 1 random seed per hyperparameter combination
    \item Training all layers, using the NTK parametrization of the MLP (\cref{eqn:MLP})
\end{itemize}

\subsection{Max Learning Rate Experiments}\label{sec:maxlrExpDetails}

Here we describe the experimental details for the experiments underlying \cref{fig:maxlr,fig:maxlrpreprocess}.

\paragraph{Theoretical max learning rate}
For a fixed setup, we compute $\Phi$ according to \cref{eqn:CK} (if only last layer is trained) or \cref{eqn:NTK} (if all layers are trained).
For ground truth problems where the output is $n$-dimensional, the theoretical max learning rate is $n \Phi(0)^{-1}$; in particular, the max learning rates for MNIST and CIFAR10 are 10 times those for boolean cube, sphere, and Gaussian.
This is because the kernel for an multi-output problem effectively becomes 
\begin{align*}
\f 1 n K^{\oplus n} = 
\f 1 n \begin{pmatrix}
K & 0 & 0\\
0 & \ddots& 0\\
0 & 0 & K
\end{pmatrix}
\end{align*}
where the $\f 1 n $ factor is due to the $\f 1 n$ factor in the scaled square loss $\loss(f, f^*) = \EV_{x \sim \Xx} \f 1 n \sum_{i=1}^n (f(x)_i - f^*(x)_i)^2$.
The top eigenvalue for $\f 1 n K^{\oplus n}$ is just $\f 1 n$ times the top eigenvalue for $K$.

\paragraph{Empirical max learning rate}
For a fixed setup, we perform binary search for the empirical max learning rate as in \cref{alg:maxlr}.

\begin{algorithm}[tb]
    \caption{Binary Search for Empirical Max Learning Rate}
    \label{alg:maxlr}
    \begin{algorithmic}
      \STATE $upper \gets 16 \times{}$ theoretical max lr
      \STATE $lower \gets 0$
      \STATE $tol \gets 0.01 \times {}$ theoretical max lr
      \WHILE{$|upper - lower| > tol$}
        \STATE $\alpha \gets (upper + lower)/2$
        \STATE Run SGD with learning rate $\alpha$ for 1000 iterations
        \IF{loss diverges}
            \STATE $upper \gets \alpha$
        \ELSE
            \STATE $lower \gets \alpha$
        \ENDIF
      \ENDWHILE
      \ENSURE $upper$
    \end{algorithmic}
\end{algorithm}

\paragraph{Preprocessing}
In \cref{fig:maxlr}, for MNIST and CIFAR10, we center and project each image onto the sphere $\sqrt{d} \sphere^{d-1}$, where $d = 28 \times 28 = 784$ for MNIST and $d = 3\times 32 \times 32 = 3072$ for CIFAR10.
More precisely, we compute the average image $\bar x$ over the entire dataset, and we preprocess each image $x$ as $\sqrt d \f{x - \bar x}{\|x - \bar x\|}$.
In \cref{fig:maxlrpreprocess}, there are three different preprocessing schemes.
For ``no preprocessing,'' we load the MNIST and CIFAR10 data as is.
In ``PCA128,'' we take the top 128 eigencomponents of the data, so that the data has only 128 dimensions.
In ``ZCA128,'' we take the top 128 eigencomponents but rotate it back to the original space, so that the data still has dimension $d$, where $d = 28 \times 28 = 784$ for MNIST and $d = 3 \times 32 \times 32 = 3072$ for CIFAR10.

\paragraph{Hyperparameters}

\begin{itemize}
\item Target function: For boolean cube, sphere, and standard Gaussian, we randomly sample a degree 1 polynomial as in \cref{eqn:sampleGT}.
For MNIST and CIFAR10, we just use the label in the dataset, encoded as a one-hot vector for square-loss regression.
\item Depth $\in \{1, 2, 4, 8, 16\}$
\item activation $\in \{\relu, \erf\}$
\item $\sigma_w^2 = 2$ for relu, but $\sigma_w^2 \in \{1, 2, \ldots, 5\}$ for erf
\item $\sigma_b^2 \in \{1, \ldots, 4\}$
\item width = 1000
\item 1 random seed per hyperparameter combination
\item Training last layer (CK) or all layers (NTK).
    In the latter case, we use the NTK parametrization of the MLP (\cref{eqn:MLP}).
\end{itemize}

\section{Review of the Theory of Neural Tangent Kernels}
\label{sec:NTKreview}

\subsection{Convergence of Infinite-Width Kernels at Initialization}

\paragraph{Conjugate Kernel}
Via a central-limit-like intuition, each unit $h^l(x)_\alpha$ of \cref{eqn:MLP} should behave like a Gaussian as width $n^{l-1} \to \infty$, as it is a sum of a large number of roughly independent random variables \citep{pooleExponentialExpressivityDeep2016a,schoenholzDeepInformationPropagation2017,yangMeanFieldResidual2017}.
The devil, of course, is in what ``roughly independent'' means and how to apply the central limit theorem (CLT) to this setting.
It can be done, however, \citep{leeDeepNeuralNetworks2018,matthewsGaussianProcessBehaviour2018,novakBayesianDeepConvolutional2018}, and in the most general case, using a ``Gaussian conditioning'' technique, this result can be rigorously generalized to almost any architecture \cite{yangScalingLimitsWide2019arXiv.org,yang2019tensor}.
In any case, the consequence is that, for any finite set $S \sbe \Xx$,
\[
\{h^l_\alpha(x)\}_{x \in S} \quad \text{ converges in distribution to }\quad
\Gaus(0, \Sigma^l(S, S)),
\]
as $\min\{n^1, \ldots, n^{l-1}\} \to \infty$, where $\Sigma^l$ is the CK as given in \cref{eqn:CK}.

\paragraph{Neural Tangent Kernel}
By a slightly more involved version of the ``Gaussian conditioning'' technique, \citet{yangScalingLimitsWide2019arXiv.org} also showed that, for any $x, y \in \Xx$,
\[
\la \nabla_\theta h^L(x), \nabla_\theta h^L(y) \ra
\quad
\text{ converges almost surely to }
\quad
\Theta^L(x, y)
\]
as the widths tend to infinity, where $\Theta^l$ is the NTK as given in \cref{eqn:NTK}.

\subsection{Fast Evaluations of CK and NTK}
For certain $\phi$ like relu or erf, $\Vt\phi$ and $\Vt\phi'$ can be evaluated very quickly, so that both the CK and NTK can be computed in $O(|\Xx|^2 L)$ time, where $\Xx$ is the set of points we want to compute the kernel function over, and $L$ is the number of layers.
\begin{fact}[\cite{choKernelMethodsDeep2009GoogleScholar}]
\label{fact:Vrelu}
For any kernel $K$
\begin{align*}
\Vt\relu(K)(x, x')
    &=
        \f 1 {2\pi} (\sqrt{1-c^2} + (\pi - \arccos c) c) \sqrt{K(x, x) K(x', x')}
        \\
\Vt\relu'(K)(x, x')
    &=
        \f 1 {2\pi} (\pi - \arccos c)
        \\
\end{align*}
where $c = K(x, x')/\sqrt{K(x, x)K(x', x')}$.
\end{fact}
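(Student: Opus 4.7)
}

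The plan is to reduce the computation to a single ``master'' integral in one correlation parameter $c$, evaluate it via polar coordinates, and then obtain the $\relu'$ identity as a by-product (it is essentially the bivariate orthant probability).

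First, I would exploit the positive homogeneity $\relu(\lambda z) = \lambda \relu(z)$ for $\lambda > 0$. Write $a = K(x,x)$, $b = K(x',x')$, and $c = K(x,x')/\sqrt{ab}$. Then $(f(x),f(x'))$ is jointly Gaussian with this $2\times 2$ covariance, and I can represent it as $f(x) = \sqrt{a}\, Z_1$, $f(x') = \sqrt{b}\,(c Z_1 + \sqrt{1-c^2}\, Z_2)$, with $Z_1,Z_2$ i.i.d.\ standard normal. Homogeneity of $\relu$ pulls out $\sqrt{ab}$, reducing the claim to showing
\begin{equation*}
J(c) \defeq \EV\bigl[\relu(Z_1)\relu(c Z_1 + \sqrt{1-c^2}\, Z_2)\bigr] = \tfrac1{2\pi}\bigl(\sqrt{1-c^2} + c(\pi - \arccos c)\bigr).
\end{equation*}

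Next, I would switch to polar coordinates $(Z_1,Z_2) = (r\cos\alpha, r\sin\alpha)$ and set $\theta = \arccos c \in [0,\pi]$, so that $c Z_1 + \sqrt{1-c^2}\, Z_2 = r\cos(\alpha-\theta)$. The integrand becomes $r^2 \cos\alpha \cos(\alpha-\theta)$ restricted to the wedge where both cosines are positive, weighted by $\frac{1}{2\pi} r e^{-r^2/2}$. The radial integral $\int_0^\infty r^3 e^{-r^2/2}\,dr = 2$ factors out, and the angular region is the intersection of $\alpha \in (-\pi/2,\pi/2)$ and $\alpha-\theta \in (-\pi/2,\pi/2)$, namely $\alpha \in (\theta - \pi/2,\, \pi/2)$, of length $\pi - \theta$. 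Using the product-to-sum identity $\cos\alpha\cos(\alpha-\theta) = \tfrac12[\cos\theta + \cos(2\alpha-\theta)]$, the angular integral evaluates in closed form to $\tfrac12[(\pi-\theta)\cos\theta + \sin\theta]$, which after substituting $\cos\theta = c$, $\sin\theta = \sqrt{1-c^2}$ and multiplying by $\tfrac1{\pi}$ yields the claimed expression for $J(c)$. Multiplying by $\sqrt{ab}$ gives the formula for $\Vt\relu(K)$.

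For $\Vt{\relu'}(K)$, I note that $\relu'$ is the Heaviside step function (a.e.), so $\Vt{\relu'}(K)(x,x') = \Pr[f(x) > 0,\ f(x') > 0]$. Applying the same polar reparametrization, this probability is the Gaussian measure of the angular wedge $\alpha \in (\theta - \pi/2,\, \pi/2)$, which (by rotational symmetry of the standard Gaussian in the plane) is simply its length divided by $2\pi$, giving $(\pi - \theta)/(2\pi) = (\pi - \arccos c)/(2\pi)$.

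The only mildly delicate step is the angular bookkeeping: one must verify that the intersection of the two half-planes $\{\cos\alpha > 0\}$ and $\{\cos(\alpha - \theta) > 0\}$ is a single arc of length $\pi - \theta$ for every $\theta \in [0,\pi]$, and that the product-to-sum integral evaluates cleanly at the endpoints using $\sin(\pi - \theta) = \sin\theta$. Once this geometric picture is set up, both identities drop out with no further work, and no appeal to special functions beyond $\arccos$ is needed.
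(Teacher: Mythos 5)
Your derivation is correct, and it is worth noting that the paper itself offers no proof of this statement: it is recorded as a Fact with a citation to \citet{choKernelMethodsDeep2009GoogleScholar}, whose arc-cosine kernel computations are exactly what you have reconstructed. Your route --- pulling out $\sqrt{K(x,x)K(x',x')}$ by positive homogeneity, passing to polar coordinates so the radial factor $\int_0^\infty r^3 e^{-r^2/2}\,dr = 2$ separates from the angular wedge $\alpha \in (\theta-\pi/2,\pi/2)$ of length $\pi - \theta$, and evaluating the angular integral by product-to-sum --- is the standard self-contained argument for the degree-1 arc-cosine kernel, and your observation that the $\Vt{\relu'}$ identity is just the bivariate orthant probability $(\pi-\arccos c)/(2\pi)$ (with no $\sqrt{K(x,x)K(x',x')}$ prefactor, since the Heaviside function is $0$-homogeneous) is exactly right. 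The angular bookkeeping you flag as the delicate point checks out: for $\theta\in[0,\pi]$ the two half-plane conditions intersect in a single arc, and the boundary evaluations via $\sin(\pi-\theta)=\sin\theta$ give $\tfrac12\bigl[(\pi-\theta)\cos\theta+\sin\theta\bigr]$, hence $J(c)=\tfrac1{2\pi}\bigl(\sqrt{1-c^2}+c(\pi-\arccos c)\bigr)$ as claimed. The only caveat, which is a defect of the statement rather than of your proof, is the degenerate case $K(x,x)=0$ or $K(x',x')=0$, where $c$ is undefined; for nondegenerate kernels your appeal to ``$\relu'$ equals the Heaviside step almost everywhere'' is harmless because the Gaussian places no mass at $0$.
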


\begin{fact}[\cite{nealBAYESIANLEARNINGNEURAL1995}]
\label{fact:Verf}
For any kernel $K$,
\begin{align*}
\Vt\erf(K)(x, x')
    &=
        \f 2 {\pi} \arcsin
        \f{K(x, x')}{\sqrt{(K(x, x)+ 0.5)(K(x', x')+0.5)}}
        \\
\Vt\erf'(K)(x, x')
    &=
        \f 4 { 
        \pi
        \sqrt{(1 + 2 K(x, x))(1 + 2 K(x', x')) - 4 K(x, x')^2}}
        .
\end{align*}
\end{fact}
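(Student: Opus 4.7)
The plan is to reduce both identities to bivariate Gaussian integrals by unpacking \cref{defn:V}: since $(f(x),f(x'))$ under $f \sim \Gaus(0,K)$ is a centered bivariate Gaussian with covariance
\[\Sigma = \begin{pmatrix} a & c \\ c & b \end{pmatrix}, \quad a = K(x,x),\ b = K(x',x'),\ c = K(x,x'),\]
we need only evaluate $\EV_{(u,v)\sim\Gaus(0,\Sigma)}[\phi(u)\phi(v)]$ for $\phi \in \{\erf,\erf'\}$.

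First I would dispatch the derivative formula, which is a direct Gaussian computation. Since $\erf'(u) = \tfrac{2}{\sqrt{\pi}} e^{-u^2}$, we have $\Vt{\erf'}(K)(x,x') = \tfrac{4}{\pi}\,\EV[e^{-u^2-v^2}]$. Combining the exponent of the bivariate Gaussian density with $-u^2-v^2$ yields a quadratic form with matrix $\Sigma^{-1}+2I$, so the Gaussian partition function gives
\[\EV[e^{-u^2-v^2}] = \frac{1}{\sqrt{\det(\Sigma)\det(\Sigma^{-1}+2I)}} = \frac{1}{\sqrt{\det(I+2\Sigma)}} = \frac{1}{\sqrt{(1+2a)(1+2b) - 4c^2}},\]
which is exactly the claimed formula for $\Vt{\erf'}$.

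Next I would derive the arcsin formula via Price's theorem, which states that for sufficiently nice $f,g$ and a bivariate centered Gaussian $(u,v)$ with variances $a,b$ and covariance $c$,
\[\frac{\partial}{\partial c}\EV[f(u)g(v)] = \EV[f'(u)g'(v)].\]
Applied with $f = g = \erf$, this reduces computing $\Vt\erf(K)$ to integrating the just-derived $\Vt{\erf'}$ formula in the variable $c$. The initial condition at $c=0$ vanishes, because $\erf$ is odd and $u,v$ are then independent mean-zero Gaussians. Hence
\[\Vt\erf(K)(x,x') = \int_0^{c} \frac{4/\pi}{\sqrt{(1+2a)(1+2b) - 4t^2}}\, dt,\]
and the substitution $t = \tfrac{1}{2}\sqrt{(1+2a)(1+2b)}\sin\theta$ collapses the integrand to the constant $\tfrac{2}{\pi}$, leaving the stated arcsin after noting that $2c/\sqrt{(1+2a)(1+2b)} = c/\sqrt{(a+\tfrac12)(b+\tfrac12)}$.

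The only mildly delicate step is justifying Price's theorem, i.e.\ differentiating under the expectation with respect to the off-diagonal of $\Sigma$; but since $\erf$ is smooth and bounded and $\erf'$ is Schwartz, dominated convergence applies without issue, so this is verification rather than a real obstacle. A self-contained alternative avoids Price's theorem entirely by writing $\erf(u)\erf(v) = \tfrac{4}{\pi}\int_0^u\int_0^v e^{-s^2-t^2}\,ds\,dt$, swapping order of integration, and reducing to the classical bivariate-normal orthant identity $\Pr(U>0,V>0) = \tfrac14 + \tfrac1{2\pi}\arcsin\rho$.
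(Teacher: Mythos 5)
Your derivation is correct. Note that the paper does not prove this statement at all: it is recorded as a \emph{Fact} with a citation to \citet{nealBAYESIANLEARNINGNEURAL1995} (the arcsine kernel for $\erf$ goes back to Williams's infinite-network computation), so there is no in-paper argument to compare against. Your route is essentially the classical one and all steps check out: for $\Vt{\erf'}$, writing $\erf'(u)\erf'(v)=\tfrac4\pi e^{-u^2-v^2}$ and completing the square gives $\EV e^{-u^2-v^2}=\det(I+2\Sigma)^{-1/2}=\bigl((1+2a)(1+2b)-4c^2\bigr)^{-1/2}$, which is exactly the stated formula; for $\Vt\erf$, Price's theorem (justified by dominated convergence, since $\erf$ is bounded and smooth and $\erf'$ is Schwartz) together with the vanishing boundary condition at $c=0$ (independence plus oddness of $\erf$) yields
\[
\EV[\erf(u)\erf(v)]=\f 4\pi\int_0^c \f{dt}{\sqrt{(1+2a)(1+2b)-4t^2}}=\f2\pi\arcsin\f{2c}{\sqrt{(1+2a)(1+2b)}},
\]
and the final rewriting $2c/\sqrt{(1+2a)(1+2b)}=c/\sqrt{(a+\tfrac12)(b+\tfrac12)}$ matches the form in the statement. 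The alternative you sketch via the double-integral representation of $\erf(u)\erf(v)$ and the bivariate orthant probability is also a valid, self-contained substitute for Price's theorem. In short, you have supplied a complete proof of a fact the paper only cites.
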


\begin{fact}
\label{fact:Vexp}
Let $\phi(x) = \exp(x/\sigma)$ for some $\sigma > 0$.
For any kernel $K$,
\begin{align*}
\Vt\phi(K)(x, x')
    &=
        \exp\lp \f{K(x, x) + 2K(x, x') + K(x', x')}{2\sigma^2}\rp.
\end{align*}
\end{fact}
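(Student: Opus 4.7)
The plan is to reduce the computation directly to the moment generating function of a univariate centered Gaussian. By the definition of $\Vt\phi$ in \cref{defn:V},
\begin{align*}
\Vt\phi(K)(x,x')
&= \EV_{f \sim \Gaus(0,K)} \exp(f(x)/\sigma)\exp(f(x')/\sigma) \\
&= \EV_{f \sim \Gaus(0,K)} \exp\bigl((f(x)+f(x'))/\sigma\bigr),
\end{align*}
so the first step is to package the exponent into a single scalar random variable $Z \defeq f(x) + f(x')$.

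The second step is to observe that $Z$ is a linear combination of the jointly Gaussian pair $(f(x), f(x'))$, hence itself Gaussian, with mean $0$ and variance
\[
\mathrm{Var}(Z) = K(x,x) + 2K(x,x') + K(x',x'),
\]
using bilinearity of covariance and the fact that $\mathrm{Cov}(f(x), f(x')) = K(x,x')$.

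The third step is to invoke the standard MGF identity $\EV \exp(tZ) = \exp(t^2 \mathrm{Var}(Z)/2)$ for $Z \sim \Gaus(0, \mathrm{Var}(Z))$, applied at $t = 1/\sigma$. This immediately yields
\[
\Vt\phi(K)(x,x') = \exp\!\left(\frac{K(x,x) + 2K(x,x') + K(x',x')}{2\sigma^2}\right),
\]
as claimed. There is no real obstacle here: the only thing to be careful about is that the covariance of $f(x)$ and $f(x')$ under $f \sim \Gaus(0,K)$ is $K(x,x')$ (rather than, say, $K(x,x')/2$ or a version with a different normalization), which follows directly from the standard definition of a Gaussian process with covariance kernel $K$.
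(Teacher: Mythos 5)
Your proof is correct: the reduction to the moment generating function of the centered Gaussian $Z = f(x)+f(x')$ with variance $K(x,x)+2K(x,x')+K(x',x')$ is exactly the standard computation, and the paper itself states \cref{fact:Vexp} without proof, so there is nothing to diverge from. The one point you flag — that $\mathrm{Cov}(f(x),f(x')) = K(x,x')$ under $f \sim \Gaus(0,K)$ — is indeed the only place a normalization error could creep in, and you handle it correctly.
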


\subsection{Linear Evolution of Neural Network under GD}
Remarkably, the NTK governs the evolution of the neural network function under gradient descent in the infinite-width limit.
First, let's consider how the parameters $\theta$ and the neural network function $f$ evolve under continuous time gradient flow.
Suppose $f$ is only defined on a finite input space $\Xx = \{x^1, \ldots, x^k\}$.
We will visualize 
\begin{align*}
    {\colorbox{blue!10}{$f(\Xx)$}}
        &=
            \colorbox{blue!10}{$
            \begin{matrix}
            f(x^1)\\
            \vdots\\
            f(x^k)
            \end{matrix}
            $}
            ,
            &
    \colorbox{green!10}{$\nabla_{f} \loss$}
        &=
            \colorbox{green!10}{$
            \begin{matrix}
            \pdf{\loss}{f(x^1)}\\
            \vdots\\
            \pdf\loss{f(x^k)}
            \end{matrix}
            $}
            ,
            &
    \colorbox{cyan!10}{$\theta$}
        &=
            \colorbox{cyan!10}{$
            \begin{matrix}
            \theta_1\\
            \vdots\\
            \theta_n
            \end{matrix}
            $}
            ,
            &
    \colorbox{red!10}{$\nabla_\theta f$}
        &=
            \colorbox{red!10}{$
            \begin{matrix}
            \pdf{f(x^1)}{\theta_1}
                &   \cdots
                    &   \pdf{f(x^k)}{\theta_1}
                        \\
            \vdots
                &   \ddots
                    &   \vdots
                        \\
            \pdf{f(x^1)}{\theta_n}
                &   \cdots
                    &   \pdf{f(x^k)}{\theta_n}
            \end{matrix}
            $}
\end{align*}

\newcommand{\NTK}{\Theta}
\newcommand{\train}{\mathrm{train}}
\newcommand{\phz}{\phantom{0}}
(best viewed in color). 
Then under continuous time gradient descent with learning rate $\eta$,
\begin{align*}
    \pd_t \colorbox{cyan!10}{$\theta_t$}
        &=
            -\eta \nabla_{\theta}  \loss(f_t)
        =
            -\eta\ \colorbox{red!10}{$\nabla_\theta f_t$} \cdot \colorbox{green!10}{$\nabla_{f} \loss(f_t)$}
            ,
            \\
    \pd_t {\colorbox{blue!10}{$f_t$}}
        &=
            \colorbox{red!10}{$\nabla_\theta f_t$}^\trsp \cdot \pd_t \colorbox{cyan!10}{$\theta_t$}
        =
            -\eta\ \colorbox{red!10}{$\nabla_\theta f_t$}^\trsp \cdot \colorbox{red!10}{$\nabla_\theta f_t$} \cdot \colorbox{green!10}{$\nabla_{f} \loss(f_t)$}
        =
            -\eta\ \colorbox{orange!10}{$\NTK_t$} \cdot \colorbox{green!10}{$\nabla_{f} \loss(f_t)$}
            \numberthis\label{eqn:NTKDiffEQ}
\end{align*}
where $\NTK_t = \nabla_\theta f_t^\trsp \cdot \nabla_\theta f_t \in \R^{k \times k}$ is of course the (finite width) NTK.
These equations can be visualized as 
\begin{align*}
    \pd_t
    \colorbox{cyan!10}{$\begin{matrix} \phantom 0\\ \phantom 0\\ \phantom 0\end{matrix}$}
        &=
            -\eta\ \colorbox{red!10}{$\begin{matrix} \phz & \phz \\ \phz & \phz \\ \phz & \phz\end{matrix}$}
            \cdot 
            \colorbox{green!10}{$\begin{matrix} \phz \\ \phz \end{matrix}$}
            ,
            &
    \pd_t
    {\colorbox{blue!10}{$\begin{matrix} \phz \\ \phz \end{matrix}$}}
        &=
            \colorbox{red!10}{$\begin{matrix} \phz & \phz & \phz \\ \phz & \phz & \phz\end{matrix}$}
            \cdot
            \pd_t
            \colorbox{cyan!10}{$\begin{matrix} \phantom 0\\ \phantom 0\\ \phantom 0\end{matrix}$}            
        =
            -\eta\ \colorbox{red!10}{$\begin{matrix} \phz & \phz & \phz \\ \phz & \phz & \phz\end{matrix}$}
            \cdot
            \colorbox{red!10}{$\begin{matrix} \phz & \phz \\ \phz & \phz \\ \phz & \phz\end{matrix}$}
            \cdot
            \colorbox{green!10}{$\begin{matrix} \phz \\ \phz \end{matrix}$}
        =
            -\eta\ \colorbox{orange!10}{$\begin{matrix} \phz & \phz \\ \phz & \phz \end{matrix}$}
            \cdot
            \colorbox{green!10}{$\begin{matrix} \phz \\ \phz \end{matrix}$}
\end{align*}
Thus $f$ undergoes kernel gradient descent with (functional) loss $\loss(f)$ and kernel $\NTK_t$.
This kernel $\NTK_t$ of course changes as $f$ evolves, but remarkably, it in fact stays constant for $f$ being an infinitely wide MLP \citep{jacotNeuralTangentKernel2018arXiv.org}:
\begin{equation*}
\pd_t f_t = - \eta \Theta \cdot \nabla_f \loss(f_t),
\tag{Training All Layers}
\end{equation*}
where $\Theta$ is the infinite-width NTK corresponding to $f$.
A similar equation holds for the CK $\Sigma$ if we train only the last layer,
\begin{equation*}
\pd_t f_t = - \eta \Sigma \cdot \nabla_f \loss(f_t).
\tag{Training Last Layer}
\end{equation*}
If $\loss$ is the square loss against a ground truth function $f^*$, then $\nabla_f \loss(f_t) = \f 1 {2k} \nabla_f \|f_t - f^*\|^2 = \f 1 k (f_t - f^*)$, and the equations above become linear differential equations.
However, typically we only have a training set $\Xx^\train \sbe \Xx$ of size far less than $|\Xx|$.
In this case, the loss function is effectively
\[
\loss(f) = \f 1 {2|\Xx^\train|} \sum_{x \in \Xx^\train}(f(x) - f^*(x))^2,
\]
with functional gradient
\[
\nabla_f \loss(f)
    =
        \f 1 {|\Xx^\train|} D^\train \cdot (f - f^*),
\]
where $D^\train$ is a diagonal matrix of size $k \times k$ whose diagonal is 1 on $x \in \Xx^\train$ and 0 else.
Then our function still evolves linearly
\begin{equation}
\pd_t f_t = -\eta (K \cdot D^\train) \cdot (f_t - f^*)
\label{eqn:maskedNTKEvol}
\end{equation}
where $K$ is the CK or the NTK depending on which parameters are trained.

\subsection{Relationship to Gaussian Process Inference.}
\label{sec:relation2GP}

Recall that the initial $f_0$ in \cref{eqn:maskedNTKEvol} is distributed as a Gaussian process $\Gaus(0, \Sigma)$ in the infinite width limit.
As \cref{eqn:maskedNTKEvol} is a linear differential equation, the distribution of $f_t$ will remain a Gaussian process for all $t$, whether $K$ is CK or NTK.
Under suitable conditions, it can be shown that \citep{leeWideNeuralNetworks2019arXiv.org}, in the limit as $t \to \infty$, if we train only the last layer, then the resulting function $f_\infty$ is distributed as a Gaussian process with mean $\bar f_\infty$ given by
\[
\bar f_\infty(x) = \Sigma(x, \Xx^\train) \Sigma(\Xx^\train, \Xx^\train)^{-1} f^*(\Xx^\train)
\]
and kernel $\Var f_\infty$ given by
\[
\Var f_\infty(x, x') = \Sigma(x, x') - \Sigma(x, \Xx^\train) \Sigma(\Xx^\train, \Xx^\train)^{-1} \Sigma(\Xx^\train, x').
\]
These formulas precisely described the posterior distribution of $f$ given prior $\Gaus(0, \Sigma)$ and data $\{(x, f^*(x))\}_{x \in \Xx^\train}$.

If we train all layers, then similarly as $t \to \infty$, the function $f_\infty$ is distributed as a Gaussian process with mean $\bar f_\infty$ given by \citep{leeWideNeuralNetworks2019arXiv.org}
\[
\bar f_\infty(x) = \Theta(x, \Xx^\train) \Theta(\Xx^\train, \Xx^\train)^{-1} f^*(\Xx^\train).
\]
This is, again, the mean of the Gaussian process posterior given prior $\Gaus(0, \Theta)$ and the training data $\{(x, f^*(x))\}_{x \in \Xx^\train}$.
However, the kernel of $f_\infty$ is no longer the kernel of this posterior, but rather is an expression involving both the NTK $\Theta$ and the CK $\Sigma$; see \cite{leeWideNeuralNetworks2019arXiv.org}.

In any case, we can make the following informal statement in the limit of large width
\begin{displayquote}
\emph{Training the last layer (resp.\ all layers) of an MLP infinitely long, in expectation, yields the mean prediction of the GP inference given prior $\Gaus(0, \Sigma)$ (resp.\ $\Gaus(0, \Theta)$).}
\end{displayquote}

\section{A Brief Review of Hilbert-Schmidt Operators and Their Spectral Theory}
\label{sec:HilbertSchmidt}

In this section, we briefly review the theory of Hilbert-Schmidt kernels, and more importantly, to properly define the notion of eigenvalues and eigenfunctions.
A function $K: \Xx^2 \to \R$ is called a \emph{Hilbert-Schmidt} operator if $K \in L^2(\Xx \times \Xx)$, i.e.
\[
\|K\|_{HS}^2 \defeq \EV_{x, y \sim \Xx} K(x, y)^2 < \infty.
\]
$\|K\|_{HS}^2$ is known as the \emph{Hilbert-Schmidt} norm of $K$.
$K$ is called \emph{symmetric} if $K(x, y) = K(y, x)$ and \emph{positive definite (resp. semidefinite)} if 
\[\EV_{x, y \sim \Xx} f(x)K(x, y)f(y) > 0\ \text{(resp. $ \ge 0$)}\quad
\text{for all $f \in L^2(\Xx)$ not a.e.\ zero.}
\]
A spectral theorem (Mercer's theorem) holds for Hilbert-Schmidt operators.
\begin{fact}
If $K$ is a symmetric positive semidefinite Hilbert-Schmidt kernel, then there is a sequence of scalars $\lambda_i \ge 0$ (eigenvalues) and functions $f_i \in L^2(\Xx)$ (eigenfunctions), for $i \in \N$, such that
\[
\forall i,j,\; \la f_i, f_j \ra = \ind(i=j),\quad\text{and}\quad
K(x, y)
    = \sum_{i \in \N} \lambda_i f_i(x)f_i(y)
\]
where the convergence is in $L^2(\Xx\times \Xx)$ norm.
\end{fact}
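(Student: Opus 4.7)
The plan is to realize the kernel $K$ as an integral operator $T_K$ on $L^2(\Xx)$, apply the spectral theorem for compact self-adjoint operators to obtain $(\lambda_i, f_i)$, and then recover the kernel expansion by expanding $K$ itself in the tensor-product orthonormal basis $\{f_i \otimes f_j\}$ of $L^2(\Xx\times\Xx)$.

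First I would define $T_K f(x) \defeq \EV_{y\sim\Xx} K(x,y) f(y)$ for $f \in L^2(\Xx)$, and verify three properties of $T_K$ as an operator on $L^2(\Xx)$. \emph{Boundedness and Hilbert-Schmidt-ness:} by Cauchy-Schwarz and Fubini, $\|T_K\|_{\mathrm{op}} \le \|K\|_{HS} < \infty$, and more strongly $T_K$ is Hilbert-Schmidt in the operator sense (i.e.\ $\sum_i \|T_K e_i\|^2 = \|K\|_{HS}^2$ for any orthonormal basis $\{e_i\}$), hence compact as a norm-limit of finite-rank operators obtained by truncating the expansion of $K$ in an orthonormal basis of $L^2(\Xx\times\Xx)$. \emph{Self-adjointness:} $\la T_K f, g\ra = \EV_{x,y} K(x,y) f(y) g(x) = \la f, T_K g\ra$ by symmetry of $K$. \emph{Positivity:} $\la T_K f, f\ra = \EV_{x,y} f(x) K(x,y) f(y) \ge 0$ by the PSD hypothesis.

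Next I would invoke the classical spectral theorem for compact self-adjoint operators on a separable Hilbert space (Hilbert-Schmidt version): such $T_K$ admits a countable orthonormal system $\{f_i\}_{i\in\N}$ of eigenfunctions with real eigenvalues $\lambda_i$, with $\lambda_i \to 0$ as $i\to\infty$, and the span of the $f_i$ together with $\ker T_K$ fills $L^2(\Xx)$. Positivity of $T_K$ forces $\lambda_i \ge 0$. By adjoining an orthonormal basis of $\ker T_K$ (assigning it eigenvalue $0$) we may assume $\{f_i\}_{i\in\N}$ is a full orthonormal basis of $L^2(\Xx)$ consisting of eigenfunctions, with $T_K f_i = \lambda_i f_i$, $\lambda_i \ge 0$, and $\la f_i, f_j\ra = \ind(i=j)$.

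Finally I would establish the kernel expansion. Since $\{f_i \otimes f_j\}_{i,j\in\N}$, defined by $(f_i\otimes f_j)(x,y) = f_i(x) f_j(y)$, is an orthonormal basis of $L^2(\Xx\times\Xx)$, and $K \in L^2(\Xx\times\Xx)$ by hypothesis, Parseval gives
\[
K(x,y) = \sum_{i,j} c_{ij}\, f_i(x) f_j(y),\qquad c_{ij} = \la K, f_i\otimes f_j\ra,
\]
with convergence in $L^2(\Xx\times\Xx)$. Computing the coefficients using Fubini and the eigenvalue relation,
\[
c_{ij} = \EV_{x,y} K(x,y) f_i(x) f_j(y) = \la T_K f_j, f_i\ra = \lambda_j\, \la f_j, f_i\ra = \lambda_i\, \ind(i=j),
\]
so the double sum collapses to $K = \sum_i \lambda_i\, f_i\otimes f_i$ in $L^2(\Xx\times\Xx)$, which is the claimed identity.

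The main technical hurdle is the compactness/spectral step: one must either cite the compact self-adjoint spectral theorem as a black box or prove it by approximating $T_K$ in operator norm by finite-rank truncations of the $L^2$-expansion of $K$ and then extracting a maximizing eigenfunction via a standard Rayleigh-quotient argument on the unit ball (using weak compactness and continuity of $f \mapsto \la T_K f, f\ra$ under the compactness of $T_K$), iterating on orthogonal complements. Everything else — positivity of $\lambda_i$, orthonormality of the $f_i$, the coefficient computation, and $L^2$ convergence via Parseval — is then routine.
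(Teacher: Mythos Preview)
Your proof is correct and follows the standard textbook route (integral operator $\Rightarrow$ compact self-adjoint $\Rightarrow$ spectral theorem $\Rightarrow$ recover the kernel via the tensor basis). The paper, however, does not prove this statement at all: it is labeled a \emph{Fact} (essentially Mercer's theorem) and is quoted as background in the review of Hilbert--Schmidt operators, with no argument given. So there is nothing to compare against; your write-up simply supplies the omitted standard proof. One minor remark: you invoke the spectral theorem ``on a separable Hilbert space'' and then extend $\{f_i\}$ to a full orthonormal basis indexed by $\N$, which tacitly assumes $L^2(\Xx)$ is separable. This is harmless in the paper's setting (probability measures on $\R^d$ or finite sets), and in any case only the eigenfunctions with $\lambda_i>0$ --- automatically countable for a Hilbert--Schmidt operator --- contribute to the expansion, since $c_{ij}=\la T_K f_j,f_i\ra$ vanishes whenever either eigenvalue is zero.
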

This theorem allows us to speak of the eigenfunctions and eigenvalues, which are important for training and generalization considerations when $K$ is a kernel used in machine learning, as discussed in the main text.

A sufficient condition for $K$ to be a Hilbert-Schmidt kernel in our case (concerning only probability measure on $\Xx$) is just that $K$ is bounded.
All $K$s in this paper satisfy this property.

\section{Eigendecomposition of Neural Kernel on Different Domains}

Below, we will build a spectral theory of CK and NTK over the boolean cube, the sphere, and the standard Gaussian distribution.
The unifying statement we can make over all three different distributions is 
\unifyingSpetra*

Consequently, the fractional variances converge as follows:
\begin{cor}\label{cor:asymptoticfracvar}
Let $K$, $\Phi$, and $\kappa_{d,k}$ be as in \cref{thm:unifyingSpectra}.
Then for every $k$, the fractional variance of the eigenspace corresponding to $\kappa_{d,k}$ converges to
\begin{align*}
(k!)^{-1} \Phi^{(k)}(0, 1, 1) / \Phi(1, 1, 1) = \f{(k!)^{-1}\Phi^{(k)}(0, 1, 1)}{\sum_{j \ge 0} (j!)^{-1}\Phi^{(j)}(0, 1, 1)}
\end{align*}
as the input dimension $d \to \infty$.
\end{cor}

We start with the spectral theory over the boolean cube, continuing the exposition from the main text.
Then we discuss the spectral theory over the sphere and the standard Gaussian.
Long proofs are omitted but can be found in \cref{sec:omittedproofs}.

\subsection{Boolean Cube}
\label{sec:appendixBoolCube}

\begin{figure}
    \centering
    \includegraphics[width=0.5\textwidth]{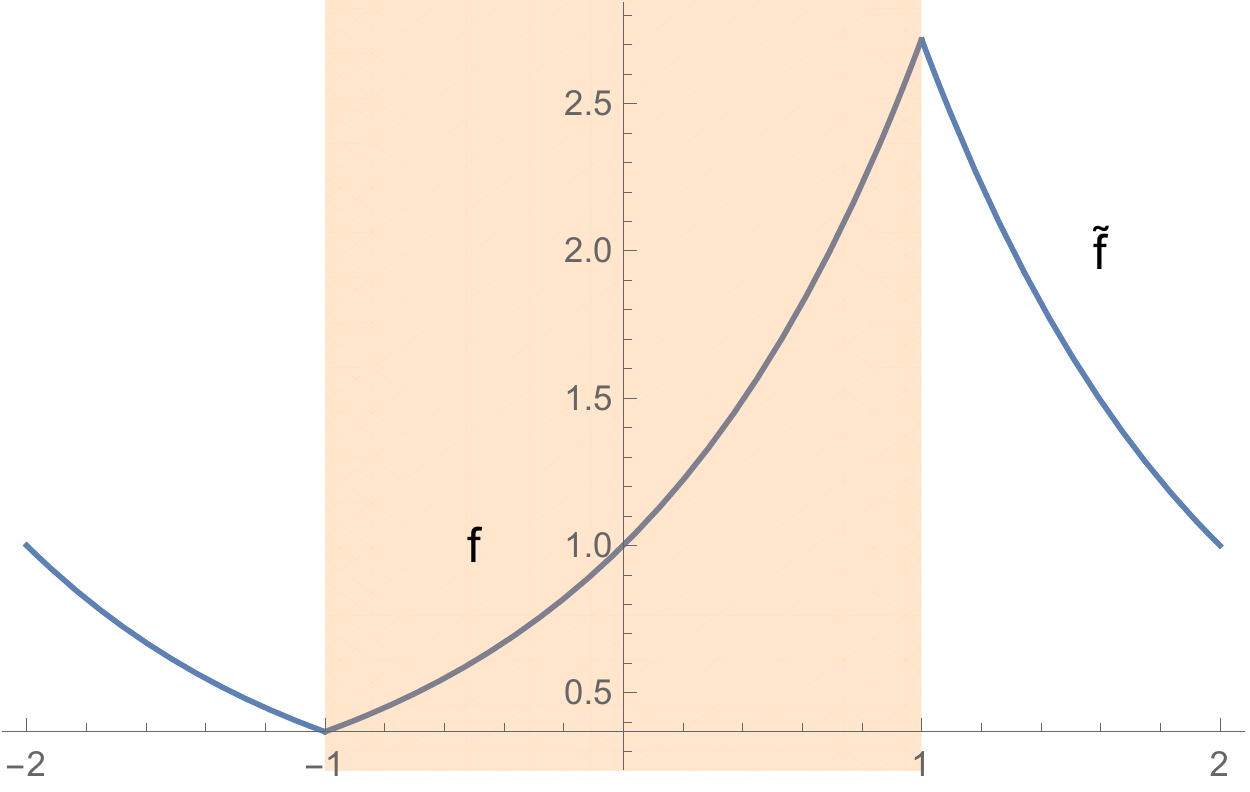}
    \caption{Example of $\tilde f$ given $f$}
    \label{fig:tildef}
\end{figure}

We first recall the main eigendecomposition theorems from the main text.

\boolcubeEigen*

\muFormula*

With a more detailed understanding of $\Phi$, one can show the boolean eigenvalues converge in the $d \to \infty$ limit.
\begin{restatable}{thm}{muAsymptotics}\label{thm:MuAsymptotics}
Let $K$ be the CK or NTK of an MLP on a boolean cube $\boolcube^d$.
Then $K$ can be expressed as $K(x, y) = \Phi(\la x, y \ra/d)$ for some $\Phi: [-1, 1] \to \R$.
If we fix $k$ and let $d \to \infty$, then
\begin{equation*}
\lim_{d \to \infty} d^k \mu_k = \Phi^{(k)}(0),
\end{equation*}
where $\Phi^{(k)}$ denotes the $k$th derivative of $\Phi$.
\end{restatable}

\paragraph{From the Fourier Series Perspective.}
Recall that  $\shiftop_\Delta$ is the shift operator on functions that sends $\Phi(\cdot)$ to $\Phi(\cdot - \Delta)$.
Notice that, if we let $\Phi(t) = e^{\kappa t}$ for some $\kappa \in \C$, then $\shiftop_\Delta \Phi(s) = e^{-\kappa \Delta} \cdot e^{\kappa t}.$
Thus $\Phi$ is an ``eigenfunction'' of the operator $\shiftop_\Delta$ with eigenvalue $e^{-\kappa \Delta}$.
In particular, this implies that
\begin{prop}\label{prop:MuKexponential}
Suppose $\Phi(t) = e^{t/\sigma^2}$, as in the case when $K$ is the CK or NTK of a 1-layer neural network with nonlinearity $\exp(\cdot/\sigma)$, up to multiplicative constant (\cref{fact:Vexp}).
Then the eigenvalue $\mu_k$ over the boolean cube $\boolcube^d$ equals
\begin{equation*}
\mu_k = 2^{-d} (1 - \exp(-\Delta/\sigma^{2}))^k (1 + \exp(-\Delta/\sigma^{2}))^{d-k} \cdot \exp(1/\sigma^2)
\end{equation*}
where $\Delta = 2/d$.
\end{prop}
It would be nice if we can express any $\Phi$ as a linear combination of exponentials, so that \cref{eqn:shiftopMu} simplifies in the fashion of \cref{prop:MuKexponential} --- this is precisely the idea of Fourier series.

We will use the theory of Fourier analysis on the circle, and for this we need to discuss periodic functions.
Let $\tilde \Phi: [-2, 2] \to \R$ be defined as
\[\tilde \Phi (x) = \begin{cases}
\Phi(x)    &   \text{if $x \in [-1, 1]$}\\
\Phi(2-x)  &   \text{if $x \in [1, 2]$}\\
\Phi(-2-x) &   \text{if $x \in [-2, -1]$.}
\end{cases}\]
See \cref{fig:tildef} for an example illustration.
Note that if $\Phi$ is continuous on $[-1, 1]$, then $\tilde \Phi$ is continuous as a periodic function on $[-2, 2]$.

The Fourier basis on functions over $[-2, 2]$ is the collection $\{t \mapsto e^{\f 1 2 \pi i s t}\}_{s \in \Z}$.
Under generic conditions (for example if $\Psi \in L^2[-2, 2]$), a function $\Psi$ has an associated Fourier series $\sum_{s \in \Z} \hat \Psi(s) e^{\f 1 2 \pi i s t}$.
We briefly review basic facts of Fourier analysis on the circle.
Recall the following notion of \emph{functions of bounded variation}.
\begin{defn}
A function $f: [a, b] \to \R$ is said to have \emph{bounded variation} if 
\[
\sup_P \sum_{i=0}^{n_P - 1}|f(x_{i+1}) - f(x_i)| < \infty,
\]
where the supremum is taken over all partitions $P$ of the interval $[a, b]$,
\[
P = \{x_0, \ldots, x_{n_P}\},\quad
x_0 \le x_1 \le \cdots \le x_{n_P}.
\]
\end{defn}
Intuitively, a function of bounded variation has a graph (in $[a, b] \times \R$) of finite length.

\begin{fact}[\citet{katznelsonIntroductionHarmonicAnalysis2004LibraryofCongressISBN}]
\label{fact:BVFourier}
A bounded variation function $f: [-2, 2] \to \R$ that is periodic (i.e. $f(-2) = f(2)$) has a pointwise-convergent Fourier series:
\[
\lim_{T \to \infty}
    \sum_{s \in [-T, T]} \hat \Psi(s) e^{\f 1 2 \pi i s t}
\to
    \Psi(t),\;\forall t \in [-2, 2].
\]
\end{fact}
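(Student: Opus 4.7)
The claim is essentially the classical Dirichlet--Jordan theorem on pointwise convergence of Fourier series for BV functions, and I would follow the standard textbook proof built around the Dirichlet kernel and Jordan's decomposition. After rescaling the period from $4$ to $2\pi$ by $t \mapsto \pi t / 2$, write the partial sum as the convolution
\[
S_T \Psi(t) = \frac{1}{2\pi}\int_{-\pi}^{\pi} \Psi(t+u)\, D_T(u)\, du, \qquad D_T(u) = \frac{\sin((T+\tfrac12)u)}{\sin(u/2)}.
\]
Because $\frac{1}{2\pi}\int_0^\pi D_T = \frac{1}{2}$ for every $T$, splitting $[-\pi,\pi]$ into its two half-intervals around $u=0$ reduces the problem to showing
\[
\frac{1}{2\pi}\int_0^\pi \bigl(\Psi(t+u) - \Psi(t^+)\bigr)\, D_T(u)\, du \to 0,
\]
together with its left-sided analogue. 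At continuity points of $\Psi$ one has $\Psi(t^\pm) = \Psi(t)$, so the symmetric sum converges to $\Psi(t)$ as claimed; this covers the intended application since the $\tilde \Phi$ constructed in the paper is continuous.

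By Jordan's decomposition, any BV $\Psi$ is the difference of two bounded nondecreasing functions, and since $S_T$ is linear it suffices to establish the above limit when $\Psi$ is monotone. For $u$ bounded away from $0$, say on $[\delta,\pi]$, the function $1/\sin(u/2)$ is smooth and the Riemann--Lebesgue lemma immediately kills the oscillatory integral. The delicate piece is the near-zero contribution on $[0,\delta]$, where $D_T(u) \sim 2/u$ is singular. Setting $g(u) = \Psi(t+u) - \Psi(t^+)$, which is nondecreasing with $g(0^+) = 0$, I would invoke the second mean value theorem for integrals (Bonnet's form) to pull $g$ out of the integrand, reducing to an oscillatory integral of the shape $\int_\xi^\delta \sin((T+\tfrac12)u)/u\, du$ for some $\xi \in [0,\delta]$. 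By the classical bound $\sup_{0 \le a \le b}\bigl|\int_a^b \sin(x)/x\, dx\bigr| < \infty$, this residual integral is $O(1)$ uniformly in $T$ and $\xi$, so the entire near-zero contribution is $O(|g(\delta^-)|)$. Letting $\delta \to 0$ and using $g(0^+)=0$ sends this error to zero, completing the argument.

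The main obstacle is precisely the near-zero estimate: the mass of $D_T$ concentrates at $u=0$, absolute-value bounds on the integrand diverge, and Riemann--Lebesgue alone does not apply because $D_T$ is not a fixed $L^1$ function. The whole point of combining Jordan's decomposition (to obtain a monotone $g$) with Bonnet's theorem (to extract a small amplitude $g(\delta^-)$) is to convert this singular behavior into a product of a small factor and a uniformly bounded oscillatory remainder, which is what yields pointwise convergence.
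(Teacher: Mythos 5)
Your argument is correct: it is the standard Dirichlet--Jordan proof (Dirichlet kernel, Jordan decomposition into monotone pieces, Riemann--Lebesgue away from the singularity, and Bonnet's second mean value theorem plus the uniform bound on $\int_a^b \sin(x)/x\,dx$ near it). The paper itself offers no proof of this statement --- it is quoted as a known fact with a citation to Katznelson --- so there is nothing to compare against beyond noting that your proof is exactly the classical argument behind that citation. You also correctly flag the one caveat in the statement as written: for a general BV function the series converges to the average of the one-sided limits, so convergence to $\Psi(t)$ at \emph{every} $t$ requires continuity (or midpoint normalization at jumps), which suffices here since the paper only applies the fact to the continuous function $\tilde\Phi$.
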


From this fact easily follows the following lemma.
\begin{lemma}
Suppose $\Phi$ is continuous and has bounded variation on $[-1, 1]$.
Then $\tilde \Phi$ is also continuous and has bounded variation, and its Fourier Series (on $[-2, 2]$) converges pointwise to $\tilde \Phi$.
\end{lemma}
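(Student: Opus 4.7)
The lemma has three claims about $\tilde\Phi$: continuity (as a periodic function on $[-2,2]$), bounded variation, and pointwise convergence of the Fourier series. The third is immediate from the invoked \cref{fact:BVFourier} once the first two are verified, so the work is entirely in checking the first two, both of which reduce to the piecewise definition of $\tilde\Phi$.

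\textbf{Continuity.} I would check agreement at the three gluing points $x = -1, 1$ and the periodic identification $-2 \sim 2$. At $x = 1$, the middle piece gives $\Phi(1)$ and the right piece gives $\Phi(2-1) = \Phi(1)$; at $x = -1$, the middle piece gives $\Phi(-1)$ and the left piece gives $\Phi(-2-(-1)) = \Phi(-1)$. For periodicity, the left piece at $x=-2$ gives $\Phi(-2-(-2)) = \Phi(0)$, and the right piece at $x=2$ gives $\Phi(2-2) = \Phi(0)$, so $\tilde\Phi(-2) = \tilde\Phi(2)$. Since $\Phi$ is continuous on $[-1,1]$ and each outer piece is a continuous composition of $\Phi$ with an affine map, $\tilde\Phi$ is continuous as a periodic function on $[-2,2]$.

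\textbf{Bounded variation.} Total variation is additive over a partition of the interval into subintervals, so I would bound $V_{[-2,2]}(\tilde\Phi)$ by the sum of the variations on $[-2,-1]$, $[-1,1]$, and $[1,2]$. On $[-1,1]$, $\tilde\Phi = \Phi$, contributing $V_{[-1,1]}(\Phi)$, which is finite by hypothesis. On $[1,2]$, $\tilde\Phi(x) = \Phi(2-x)$; the map $x \mapsto 2-x$ is a bijection $[1,2] \to [0,1]$, and reversing/relabeling the partition points shows $V_{[1,2]}(\tilde\Phi) = V_{[0,1]}(\Phi)$. The same argument on $[-2,-1]$ gives $V_{[-2,-1]}(\tilde\Phi) = V_{[-1,0]}(\Phi)$. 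Hence $V_{[-2,2]}(\tilde\Phi) \le 2\, V_{[-1,1]}(\Phi) < \infty$.

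\textbf{Fourier convergence and assessment.} With continuity and bounded variation established and the periodicity $\tilde\Phi(-2)=\tilde\Phi(2)$ already in hand, \cref{fact:BVFourier} applies verbatim and delivers the pointwise convergence of the Fourier series of $\tilde\Phi$ on $[-2,2]$. I do not anticipate any genuine obstacle: the whole lemma is a bookkeeping exercise transferring the hypotheses on $\Phi$ across the explicit reflection formula. The only subtlety worth stating carefully is the periodic identification at $\pm 2$, which is what forces us to use the specific reflection extension rather than a naive one (e.g.\ extending by zero would destroy continuity at the boundary whenever $\Phi(\pm 1) \ne 0$).
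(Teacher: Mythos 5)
Your proposal is correct and follows the same route as the paper, which simply asserts that $\tilde\Phi$ is continuous and of bounded variation and then invokes \cref{fact:BVFourier}; you have merely filled in the routine verifications (agreement at the gluing points $\pm1$, the periodic identification $\tilde\Phi(-2)=\tilde\Phi(2)=\Phi(0)$, and additivity of total variation giving $V_{[-2,2]}(\tilde\Phi)\le 2V_{[-1,1]}(\Phi)$). No gaps.
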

\begin{proof}
$\tilde \Phi$ is obviously continuous and has bounded variation as well, and from \cref{fact:BVFourier}, we know a periodic continuous function with bounded variation has a pointwise-convergent Fourier Series.
\end{proof}

Certainly, $\shiftop_\Delta$ sends continuous bounded variation functions to continuous bounded variation functions.
Because $\shiftop_\Delta e^{\f 1 2 \pi i s t} = e^{-\f 1 2 \pi i s \Delta} e^{\f 1 2 \pi i s t}$,
\[\shiftop_\Delta \sum_{s\in \Z} \hat \Psi(s) e^{\f 1 2 \pi i s t} = 
\sum_{s\in \Z} \hat \Psi(s)e^{-\f 1 2 \pi i s \Delta} e^{\f 1 2 \pi i s t}\]
whenever both sides are well defined.
If $\Psi$ is continuous and has bounded variation then $\shiftop_\Delta \Psi$ is also continuous and has bounded variation, and thus its Fourier series, the RHS above, converges pointwise to $\shiftop_\Delta \Psi$.

Now, observe
\begin{align*}
    (I - \shiftop_\Delta)^k (I + \shiftop_\Delta)^{d-k}\tilde \Phi(x)
    &=
        \sum_{r=0}^d
                C^{d-k,k}_r \tilde \Phi\lp x - r \Delta\rp
                \\
    (I - \shiftop_\Delta)^k (I + \shiftop_\Delta)^{d-k}\tilde \Phi(1)
    &=
        \sum_{r=0}^d
                C^{d-k,k}_r \Phi\lp \lp \f d 2  - r \rp \Delta\rp
                \\
    &=
        \mu_k
\end{align*}

Expressing the LHS in Fourier basis, we obtain
\begin{thm}
\begin{align*}
    \mu_k
        &=
            \sum_{s\in \Z} i^s (1 - e^{-\f 1 2 \pi i s \Delta})^k (1 + e^{-\f 1 2 \pi i s \Delta})^{d-k} \hat {\tilde \Phi}(s)
\end{align*}
where 
\begin{align*}
    \hat {\tilde \Phi}(s)
        &=
            \f 1 4 \int_{-2}^2 \tilde \Phi(t) e^{-\f 1 2 \pi i s t} \dd t
            \\
        &=
            \f 1 4 \int_{-1}^1 \Phi(t) (e^{-\f 1 2 \pi i s t} + (-1)^s e^{\f 1 2 \pi i s t}) \dd t\\
        &=
            \begin{cases}
                \f 1 2 \int_{-1}^1 \Phi(t) \cos(\f 1 2 \pi s t) \dd t
                    &
                        \text{if $s$ is even}
                        \\
                -\f i 2 \int_{-1}^1 \Phi(t) \sin(\f 1 2 \pi s t) \dd t
                    &    \text{if $s$ is odd}
            \end{cases}
\end{align*}
denote the Fourier coefficients of $\tilde \Phi$ on $[-2, 2]$.
(Here $i$ is the imaginary unit here, not an index).
\end{thm}

\paragraph{Recovering the values of $\Phi$ given the eigenvalues $\mu_0, \ldots, \mu_d$.}

Conversely, given eigenvalues $\mu_0, \ldots, \mu_d$ corresponding to each monomial degree, we can recover the entries of the matrix $K$.

\begin{thm}
For any $x, y \in \boolcube^d$ with Hamming distance $r$,
\begin{align*}
K(x, y)
    &=
        \Phi\lp \lp \f d 2  - r \rp \Delta\rp
    =
        \sum_{k=0}^d C^{d-r, r}_k \mu_k,
\end{align*}
where $C^{d-r, r}_k = \sum_{j=0} (-1)^{k+j} \binom {d-r} j \binom r {k-j}$ as in \cref{eqn:Ccoef}.
\end{thm}

\begin{proof}
Recall that for any $S \sbe [d]$, $\chi_S(x) = x^S$ is the Fourier basis corresponding to $S$ (see \cref{eqn:booleanfourierbasis}).
Then by converting from the Fourier basis to the regular basis, we get
\begin{align*}
\Phi\lp \lp \f d 2  - r \rp \Delta\rp
    &=
        K(x, y)
        \quad
        \text{for any $x, y \in \boolcube^d$ with Hamming distance $r$}
        \\
    &=
        \sum_{k=0}^d \mu_k \sum_{|S|=k} \chi_S(x) \chi_S(y)
        .
\end{align*}
If $x$ and $y$ differ on a set $T \sbe [d]$, then we can simplify the inner sum
\begin{align*}
\Phi\lp \lp \f d 2  - r \rp \Delta\rp
    &=
        \sum_{k=0}^d \mu_k \sum_{|S|=k} (-1)^{|S \cap T|}
    =
        \sum_{k=0}^d \mu_k C^{d-r, r}_k.
\end{align*}
\end{proof}

\begin{remk}
If we let $\shiftop$ be the operator that sends $\mu_{\bullet} \mapsto \mu_{\bullet + 1}$, then we have the following operator expression
\begin{align*}
\Phi\lp \lp \f d 2  - r \rp \Delta\rp
    &=
        [(1 + \shiftop)^{d-r} (1 - \shiftop)^{r} \mu]_0
\end{align*}
\end{remk}

\begin{remk}
The above shows that the matrix $C = \{C^{d-r, r}_k\}_{k, r=0}^d$ satisfies
\[C^2 = 2^d I.\]
\end{remk}

\subsubsection{Computing the Boolean Eigenvalues}
\label{sec:computingEigenvalues}

Focusing on the boolean cube eigenvalues has the benefit of being much easier to compute.

Each eigenvalue over the sphere and the standard Gaussian requires an integration of $\Phi$ against a Gegenbauer polynomial.
In high dimension $d$, these Gegenbauer polynomials varies wildly in a sinusoidal fashion, and blows up toward the boundary (see \cref{fig:GegExamples}).
As such, it is difficult to obtain a numerically stable estimate of this integral in an efficient manner when $d$ is large.

In contrast, we have multiple ways of computing boolean cube eigenvalues, via \cref{eqn:shiftopMu,eqn:binomcoefMu}.
In either case, we just take some linear combination of the values of $\Phi$ at a grid of points on $[-1, 1]$, spaced apart by $\Delta = 2/d$.
While the coefficients $C^{d-k,k}_r$ (defined in \cref{eqn:Ccoef}) are relatively efficient to compute, the change in the sign of $C^{d-k,k}_r$ makes this procedure numerically unstable for large $d$.
Instead, we use \cref{eqn:shiftopMu} to isolate the alternating part to evaluate in a numerically stable way:
Since $\mu_k = \lp \f{I + \shiftop_\Delta}2\rp^{d-k}\lp \f{I - \shiftop_\Delta}2\rp^k \Phi(1)$, we can evaluate $\tilde \Phi \defeq\lp \f{I - \shiftop_\Delta}2\rp^k \Phi$ via $k$ finite differences, and then compute

\begin{equation}
\lp \f{I + \shiftop_\Delta}2\rp^{d-k} \tilde\Phi(1)
= \f1{2^{d-k}} \sum_{r=0}^{d-k}\binom{d-k}r \tilde \Phi(1 - r \Delta).
\label{eqn:finiteaddition}
\end{equation}

When $\Phi$ arises from the CK or the NTK of an MLP, all derivatives of $\Phi$ at 0 are nonnegative (\cref{thm:PhiTaylorExpansion}).
Thus intuitively, the finite difference $\tilde \Phi$ should be also all nonnegative, and this sum can be evaluated without worry about floating point errors from cancellation of large terms.

A slightly more clever way to improve the numerical stability when $2k \le d$ is to note that 
\[
\lp {I + \shiftop_\Delta}\rp^{d-k} \lp {I - \shiftop_\Delta}\rp^{k} \Phi(1)
= \lp {I + \shiftop_\Delta}\rp^{d-2k} \lp {I - \shiftop_\Delta^2}\rp^{k} \Phi(1)
= \lp {I + \shiftop_\Delta}\rp^{d-2k} \lp {I - \shiftop_{2\Delta}}\rp^{k} \Phi(1).
\]
So an improved algorithm is to first compute the $k$th finite difference $(I - \shiftop_{2\Delta})^{k}$ with the larger step size $2\Delta$, then compute the sum $(I + \shiftop_\Delta)^{d - 2k}$ as in \cref{eqn:finiteaddition}.

\subsection{Sphere}
\label{sec:sphere}

\begin{figure}
\centering
\includegraphics[width=0.6\textwidth]{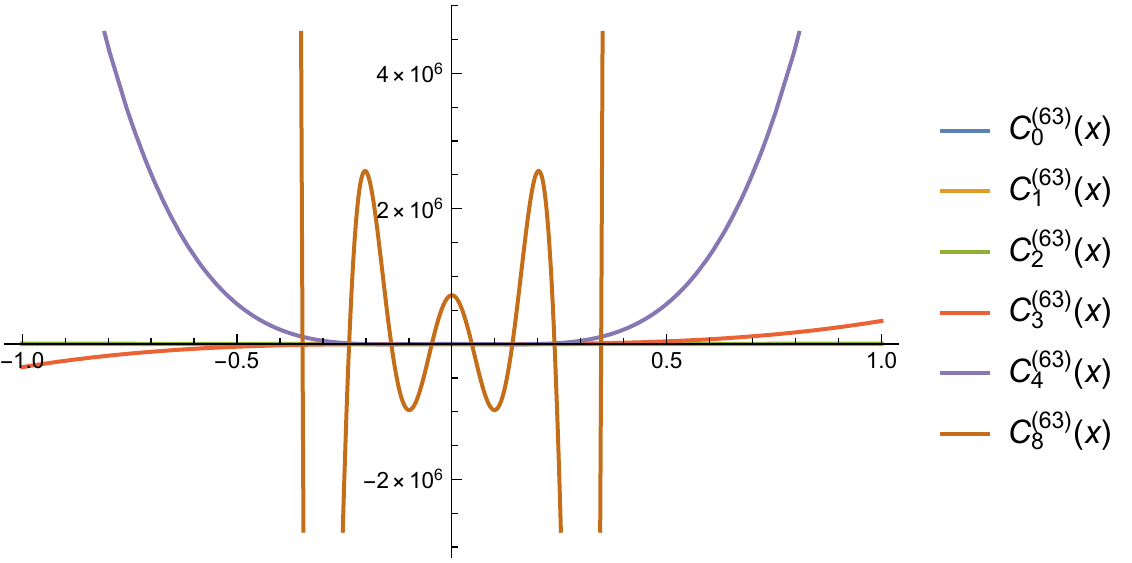}
\caption{Examples of Gegenbauer Polynomials for $d = 128$ (or $\alpha = 63$).}
\label{fig:GegExamples}
\end{figure}

Now let's consider the case when $\Xx = \sqrt d \sphere^{d-1}$ is the radius-$\sqrt d$ sphere in $\R^d$ equipped with the uniform measure.
Again, because $x \in \Xx$ all have the same norm, we will consider $\Phi$ as a univariate function with $K(x, y) = \Phi(\la x, y\ra/\|x\|\|y\|) = \Phi(\la x, y\ra/d)$.

As is long known \citep{schoenbergPositiveDefiniteFunctions1942ProjectEuclid,gneitingStrictlyNonstrictlyPositive2013arXiv.org,xuStrictlyPositiveDefinite1992JSTOR,
smolaRegularizationDotProductKernels2001NeuralInformationProcessingSystems}, $K$ is diagonalized by spherical harmonics.
We review these results briefly below, as we will build on them to deduce spectral information of $K$ on isotropic Gaussian distributions.

\paragraph{Review: spherical harmonics and Gegenbauer polynomials.}
\emph{Spherical harmonics} are $L^2$ functions on $\sphere^{d-1}$ that are eigenfunctions of the Laplace-Beltrami operator $\Delta_{\sphere^{d-1}}$ of $\sphere^{d-1}$.
They can be described as the restriction of certain homogeneous polynomials in $\R^d$ to $\sphere^{d-1}$.
Denote by $\SphHar {d-1} l$ the space of spherical harmonics of degree $l$ on sphere $S^{d-1}$.
Then we have the orthogonal decomposition $L^2(\sphere^{d-1}) \cong \bigoplus_{l=0}^\infty \SphHar {d-1} l$.
It is a standard fact that $\dim \SphHar {d-1} l = \binom{d-1+l}{d-1} - \binom{d-3+l}{d-1}.$

There is a special class of spherical harmonics called \emph{zonal harmonics} that can be represented as $x \mapsto p(\la x, y \ra)$ for specific polynomials $p: \R \to \R$, and that possess a special \emph{reproducing property} which we will describe shortly.
Intuitively, the value of any zonal harmonics only depends on the ``height'' of $x$ along some fixed axis $y$, so a typical zonal harmonics looks like \cref{fig:zonalharmonics}.
The polynomials $p$ must be one of the Gegenbauer polynomials.
\emph{Gegenbauer polynomials} $\{\Geg \alpha l (t)\}_{l = 0}^\infty$ are orthogonal polynomials with respect to the measure $(1 - t^2)^{\alpha-\f 1 2}$ on $[-1, 1]$ (see \cref{fig:GegExamples} for examples), and here we adopt the convention that
\begin{align}
    \int_{-1}^1 C_n^{(\alpha)}(t)C_l^{(\alpha)}(t)(1-t^2)^{\alpha-\frac{1}{2}}\,\dd t = \frac{\pi 2^{1-2\alpha}\Gamma(n+2\alpha)}{n!(n+\alpha)[\Gamma(\alpha)]^2} \ind(n=l).
    \label{eqn:gegennorm}
\end{align}
Then for each (oriented) axis $y \in \sphere^{d-1}$ and degree $l$, there is a unique zonal harmonic
\[\Zonsph {d-1} l y \in \SphHar {d-1} l,\quad \Zonsph {d-1} l y(x) \defeq c_{d, l}^{-1}\Geg {\f{d-2}2} l (\la x, y \ra)\]
for any $x, y \in \sphere^{d-1}$, where $c_{d, l} = 
    \f{d-2}{d+2l-2}$.
Very importantly, they satisfy the following
\begin{fact}[Reproducing property \citep{suetinUltrasphericalPolynomialsEncyclopedia}]
\label{fact:reproducingGeg}
For any $f \in \SphHar {d-1} m,$
\begin{align*}
    \EV_{z \sim \sphere^{d-1}} \Zonsph {d-1} l y(z) f(z)
        &=
            f(y) \ind(l=m)
            \\
    \EV_{z \sim \sphere^{d-1}} \Zonsph {d-1} l y(z) \Zonsph {d-1} m x(z)
        &=
            \Zonsph {d-1} l y(x) \ind(l=m)
        =
            c_{d, l}^{-1}\Geg {\f{d-2}2} l (\la x, y\ra) \ind(l=m)
\end{align*}
\end{fact}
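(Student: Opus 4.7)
The plan is to reduce both identities to two standard ingredients: the orthogonality of the spaces $\SphHar{d-1}l$ for distinct $l$, and the addition formula for spherical harmonics. I would split on whether $l \ne m$ or $l = m$.

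For the case $l \ne m$, I would first note that $\Zonsph{d-1}ly$, being a scalar multiple of $x \mapsto \Geg{(d-2)/2}l(\la x,y\ra)$, itself lies in $\SphHar{d-1}l$ --- this is the standard fact that a Gegenbauer polynomial evaluated at an inner product gives a zonal spherical harmonic, provable by checking directly that it is annihilated by the Laplace--Beltrami operator on $\sphere^{d-1}$ up to the correct eigenvalue. Since $\SphHar{d-1}l$ and $\SphHar{d-1}m$ are distinct eigenspaces of this self-adjoint operator, they are orthogonal in $L^2(\sphere^{d-1})$, so both integrals on the left vanish and the two identities reduce to $0 = 0$.

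For $l = m$, the key tool is the addition formula: for any $L^2$-orthonormal basis $\{Y_i\}_{i=1}^{N(d,l)}$ of $\SphHar{d-1}l$, one has $\sum_i Y_i(x) Y_i(y) = c_{d,l}^{-1} \Geg{(d-2)/2}l(\la x, y \ra) = \Zonsph{d-1}ly(x)$. Granting this, I would expand any $f \in \SphHar{d-1}l$ as $f = \sum_i \alpha_i Y_i$ and compute
\begin{equation*}
\EV_{z \sim \sphere^{d-1}} \Zonsph{d-1}ly(z)\, f(z) = \sum_{i,j} Y_i(y)\,\alpha_j\, \EV_z Y_i(z) Y_j(z) = \sum_i \alpha_i Y_i(y) = f(y),
\end{equation*}
using $L^2$-orthonormality in the middle step. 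The second identity is then immediate: apply the first to $f = \Zonsph{d-1}lx \in \SphHar{d-1}l$ to obtain $\Zonsph{d-1}lx(y)$, which equals $\Zonsph{d-1}ly(x)$ by the symmetry $\la x,y\ra = \la y,x\ra$ inherited from the Gegenbauer representation.

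The main obstacle is pinning down the exact constant $c_{d,l} = (d-2)/(d+2l-2)$, since rotational invariance alone only determines the right-hand side of the addition formula up to an $(d,l)$-dependent scalar. To fix it I would set $x = y$ in the addition formula and integrate in $y$ over $\sphere^{d-1}$: the left side integrates to $N(d,l) = \dim \SphHar{d-1}l$ by orthonormality, while the right becomes $c_{d,l}^{-1}\Geg{(d-2)/2}l(1)$. The desired constant then falls out by combining the classical evaluation $\Geg\alpha l(1) = \binom{l+2\alpha-1}l$, the dimension formula $N(d,l) = \binom{d-1+l}{d-1} - \binom{d-3+l}{d-1}$, and the normalization \cref{eqn:gegennorm} (which is what fixes the normalization of $\{Y_i\}$ relative to $\Geg{(d-2)/2}l$ in the first place). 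This is a routine but fiddly manipulation of binomial coefficients and Gamma functions, and is really the only nontrivial bookkeeping in the whole argument.
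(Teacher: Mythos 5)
Your proposal is correct, but note that the paper does not prove this statement at all: it is stated as a classical Fact and attributed to \citet{suetinUltrasphericalPolynomialsEncyclopedia}, so there is no internal proof to compare against. Your route --- orthogonality of the eigenspaces $\SphHar{d-1}{l}$ of the Laplace--Beltrami operator for $l \ne m$, and the addition formula $\sum_i Y_i(x) Y_i(y) = c_{d,l}^{-1} \Geg{\f{d-2}2}{l}(\la x, y \ra)$ for $l = m$, followed by expanding $f$ in an orthonormal basis --- is the standard textbook derivation, and the specialization $f = \Zonsph{d-1}{m}{x}$ together with the symmetry $\la x, y \ra = \la y, x \ra$ correctly yields the second identity. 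Your constant bookkeeping also checks out: setting $x = y$ and averaging over the sphere gives $c_{d,l}^{-1} = \dim \SphHar{d-1}{l} \big/ \Geg{\f{d-2}2}{l}(1)$, and indeed
\[
\frac{\dim \SphHar{d-1}{l}}{\binom{l+d-3}{l}} = \frac{d+2l-2}{d-2} = c_{d,l}^{-1},
\]
consistent with \cref{fact:gegpoly1}. Two minor remarks: the pinning of the constant only needs $\Geg{\f{d-2}2}{l}(1)$ and the dimension formula, not the norm identity \cref{eqn:gegennorm} (the latter fixes the Gegenbauer normalization convention, which is already implicit in the evaluation at $1$); and your separate Laplace--Beltrami verification that $\Zonsph{d-1}{l}{y} \in \SphHar{d-1}{l}$ is redundant once you invoke the addition formula, since the latter exhibits it as a linear combination of the $Y_i$. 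Neither point is a gap.
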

Gegenbauer polynomials also satisfy other properties \cref{fact:gegpoly1,fact:rodriguesFormula} described later that we will use to build a spectral theory over the sphere.

By a result of \cite{schoenbergPositiveDefiniteFunctions1942ProjectEuclid}, we have the following eigendecomposition of $K$ on the sphere.
\newcommand{\aeeq}{\overset{\textit{a.e.}}{=}}
\begin{restatable}[Schoenberg]{thm}{sphereEigens}\label{thm:sphereEigens}
Suppose $\Phi: [-1, 1] \to \R$ is in $L^2((1-t^2)^{\f{d-1}2 - 1})$, so that it has the Gegenbauer expansion
\[\Phi(t) \aeeq \sum_{l=0}^\infty a_l c_{d,l}^{-1} \Geg {\f{d-2}2} l (t).\]
Then $K$ has eigenspaces $\SphHar {d-1} l _{\sqrt d} \defeq \{f(x/\sqrt{d}): f \in \SphHar {d-1} l\}$ with corresponding eigenvalues $a_l$.
Since $\bigoplus_{l=0}^\infty\SphHar {d-1} l _{\sqrt d}$ is an orthogonal decomposition of $L^2(\sqrt d \sphere^{d-1})$, this describes all eigenfunctions of $K$ considered as an operator on $L^2(\sqrt d \sphere^{d-1})$.
\end{restatable}
For completeness, we include the proof of this theorem in \cref{sec:omittedproofs}.
\begin{figure}
    \centering
    \includegraphics[width=0.5\textwidth]{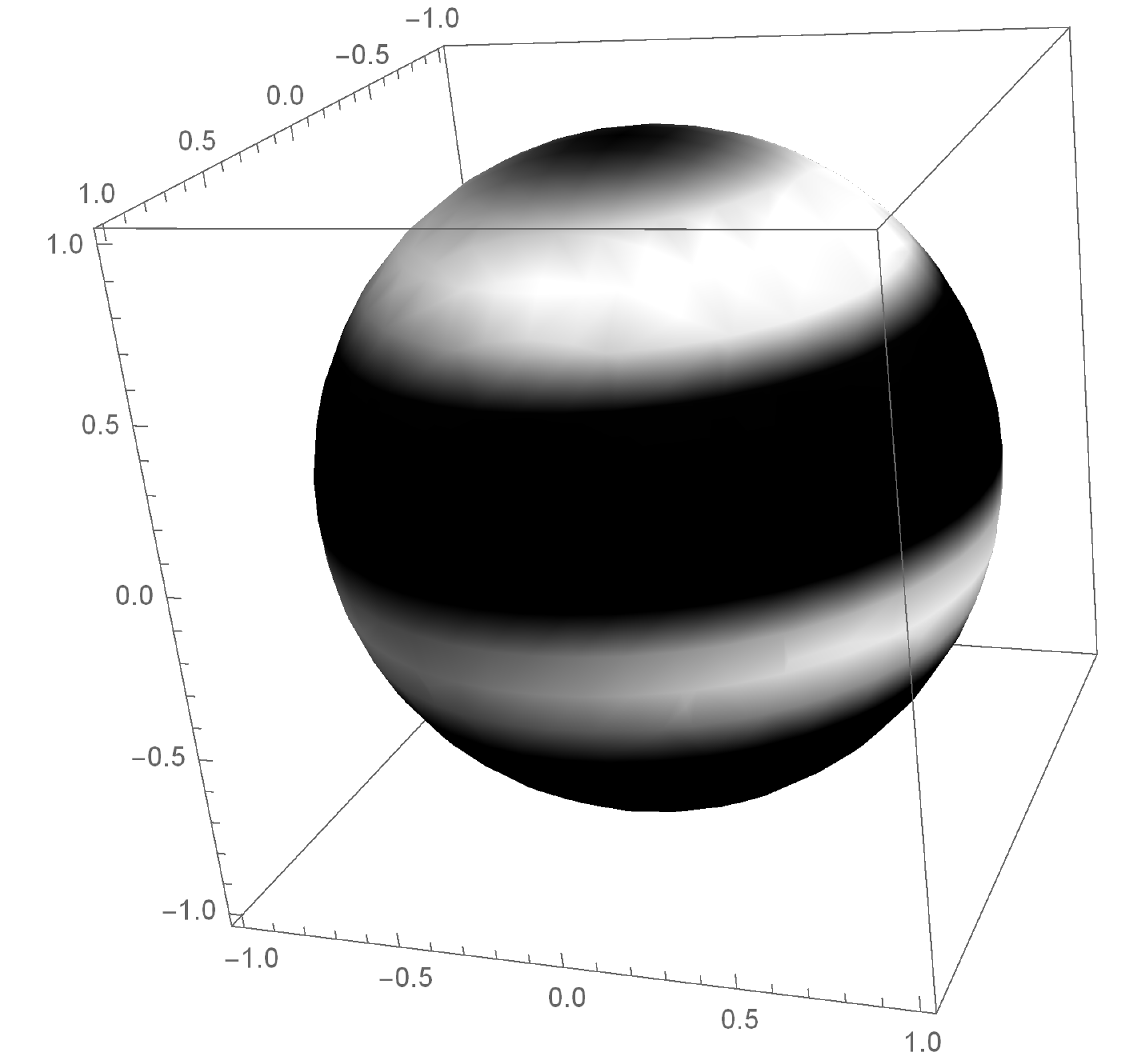}
    \caption{Visualization of a zonal harmonic, which depends only on the ``height'' of the input along a fixed axis.
    Color indicates function value.}
    \label{fig:zonalharmonics}
\end{figure}

As the dimension $d$ of the sphere tends to $\infty$, the eigenvalues in fact simplify to the derivatives of $\Phi$ at 0:
\begin{restatable}{thm}{SphereAsymptotics}\label{thm:SphereAsymptotics}
Let $K$ be the CK or NTK of an MLP on the sphere $\sqrt d \sphere^{d-1}$.
Then $K$ can be expressed as $K(x, y) = \Phi(\la x, y \ra/d)$ for some smooth $\Phi: [-1, 1] \to \R$.
Let $a_\ell$ denote $K$'s eigenvalues on the sphere (as in \cref{thm:sphereEigens}).
If we fix $\ell$ and let $d \to \infty$, then
\begin{equation*}
\lim_{d \to \infty} d^\ell a_\ell = \Phi^{(\ell)}(0),
\end{equation*}
where $\Phi^{(\ell)}$ denotes the $\ell$th derivative of $\Phi$.
\end{restatable}
This theorem is the same as \cref{thm:MuAsymptotics} except that it concerns the sphere rather than the boolean cube.

\subsection{Isotropic Gaussian}
\label{sec:isotropicGaussian}

Now let's consider $\Xx = \R^d$ equipped with standard isotropic Gaussian $\Gaus(0, I)$, so that $K$ behaves like
\[
K f(x)
    = \EV_{y \sim \Gaus(0, I)} K(x, y)f(y)
    = \EV_{y \sim \Gaus(0, I)} \Phi\lp\f{\la x, y\ra}{\|x\|\|y\|}, \f{\|x\|^2}{d}, \f{\|y\|^2}{d}\rp f(y)
\]
for any $f \in L^2(\Gaus(0, I))$.
In contrast to the previous two sections, $K$ will essentially depend on the effect of the norms $\|x\|$ and $\|y\|$ on $\Phi$.

Note that an isotropic Gaussian vector $z \sim \Gaus(0, I)$ can be sampled by independently sampling its direction $v$ uniformly from the sphere $\sphere^{d-1}$ and sampling its magnitude $r$ from a chi distribution $\chi_d$ with $d$ degrees of freedom.
Proceeding along this line of logic yields the following spectral theorem:
\begin{restatable}{thm}{gaussianEigens}\label{thm:gaussianEigens}
A function $K: (\R^d)^2 \to \R$ of the form
\[K(x, y) = \Phi\lp \f{\la x, y \ra}{\|x\|\|y\|}, \f{\|x\|^2}{d}, \f{\|y\|^2}d\rp\]
forms a positive semidefinite Hilbert-Schmidt operator on $L^2(\Gaus(0, I))$ iff $\Phi$ can be decomposed as
\begin{equation}
\Phi(t, q, q') = \sum_{l=0}^\infty A_l(q, q') c_{d,l}^{-1} \Geg {\f{d-2}2} l (t)
\label{eqn:isotropicGaussianSpectrum}
\end{equation}
satisfying the norm condition
\begin{align}
\sum_{l=0}^\infty \|A_l\|^2 c^{-2}_{d, l} \|\Geg {\f{d-2}2} l\|^2
=
\sum_{l=0}^\infty \|A_l\|^2 c^{-2}_{d, l} \frac{\pi 2^{3-d}\Gamma(l+d-2)}{l!(l+\f{d-2}2)\Gamma\lp\f{d-2}2 \rp^2}  < \infty,
\label{eqn:HSnormGaussianDecomp}
\end{align}
where
\begin{itemize}
\item $c_{d, l} = \f{d-2}{d+2l-2}$
\item $\Geg {\f{d-2}2} l (t)$ are Gegenbauer polynomials as in \cref{sec:sphere}, with $\|\Geg {\f{d-2}2} l\| = \sqrt{\int_{-1}^1 \Geg {\f{d-2}2} l(t)^2 (1 - t^2)^{\f {d-1}2 -1} \dd t}$ denoting the norm of $\Geg {\f{d-2}2} l$ in $L^2((1-t^2)^{\f{d-1}2 - 1})$.
\item and $A_l$ are positive semidefinite Hilbert-Schmidt kernels on $L^2(\f 1 d \chi^2_d)$, the $L^2$ space over the probability measure of a $\chi^2_d$-variable divided by $d$, and with $\|A_l\|$ denoting the Hilbert-Schmidt norm of $A_l$.
\end{itemize}
In addition, $K$ is positive definite iff all $A_l$ are.
\end{restatable}

See \cref{sec:omittedproofs} for a proof.
As a consequence, $K$ has an eigendecomposition as follows under the standard Gaussian measure in $d$ dimensions.
\begin{cor}[Eigendecomposition over Gaussian]\label{cor:gaussianFineEigens}
Suppose $K$ and $\Phi$ are as in \cref{thm:gaussianEigens} and $K$ is a positive semidefinite Hilbert-Schmidt operator, so that \cref{eqn:isotropicGaussianSpectrum} holds, with Hilbert-Schmidt kernels $A_l$.
Let $A_l$ have eigendecomposition
\begin{equation}
A_l(q, q') = \sum_{i=0}^\infty a_{li}u_{li}(q) u_{li}(q')
\label{eqn:gaussianFineEigens}
\end{equation}
for eigenvalues $a_{li} \ge 0$ and eigenfunctions $u_{li} \in L^2(\f 1 d \chi^2_d)$ with $\EV_{q \sim \f 1 d \chi^2_d} u_{li}(q)^2 = 1$.
Then $K$ has eigenvalues $\{a_{li}: l, i \in [0, \infty)\}$, and each eigenvalue $a_{li}$ corresponds to the eigenspace
\begin{align*}
u_{li} \otimes \SphHar {d-1} l \defeq
\left\{u_{li}\lp\f{\|x\|^2}{d}\rp f\lp \f x {\|x\|}\rp: f \in \SphHar {d-1} l\right\},
\end{align*}
where $\SphHar {d-1} l$ is the space of degree $l$ spherical harmonics on the unit sphere $\sphere^{d-1}$ of ambient dimension $d$.
\end{cor}

For certain simple $F$, we can obtain $\{A_l\}_{l \ge0}$ explicitly.
For example, suppose $K$ is \emph{degree-$s$ positive-homogeneous}, in the sense that, for $a, b > 0$,
\[
K(a x, b y) = (ab)^s K(x, y).
\]
This happens when $K$ is the CK or NTK of an MLP with degree-$s$ positive-homogeneous.
Then it's easy to see that $\Phi(t, q, q') = (qq')^s \bar\Phi(t)$ for some $\bar\Phi: [-1, 1] \to \R$, and
\[
\Phi(t, q, q') = \sum_{l=0}^\infty (qq')^s a_l c_{d,l}^{-1} \Geg {\f{d-2}2} l (t)
\]
where $\{a_l\}_l$ are the Gegenbauer coefficients of $\bar \Phi$,
\[
\bar\Phi(t) = \sum_{l=0}^\infty a_l c_{d,l}^{-1} \Geg {\f{d-2}2} l (t).
\]

We can then conclude with the following theorem.
\begin{thm}\label{thm:isotropicGaussianExplicitDecomposition}
Suppose $K: (\R^d)^2 \to \R$ is a kernel given by
\[
K(x, y) = R(\|x\|/d)R(\|x\|/d)\bar\Phi(\la x, y \ra/\|x\|\|y\|)
\]
for some functions $R: [0, \infty) \to \R$, $\bar \Phi: [-1, 1] \to \R$.
Let
\[
\bar\Phi(t) = \sum_{l=0}^\infty a_l c_{d,l}^{-1} \Geg {\f{d-2}2} l (t)
\]
be the Gegenbauer expansion of $\bar\Phi$.
Also define
\[\lambda \defeq \sqrt{\EV_{q \sim \f 1 d \chi_d^2} R(q)^2}.\]
Then over the standard Gaussian in $\R^d$, $K$ has the following eigendecomposition
\begin{itemize}
\item
    For each $l \ge 0$, 
    \[
    \inv \lambda R \otimes \SphHar {d-1} l
    =
    \{\inv\lambda R(\|x\|^2/d) f(x/\|x\|): f \in \SphHar {d-1} l \}
    \]
    is an eigenspace with eigenvalue $\lambda^2 a_l$.
\item
    For any $S \in L^2(\f 1 d \chi_d^2)$ that is orthogonal to $R$, i.e.
    \[
    \EV_{q \sim \f 1 d \chi_d^2} S(q) R(q) = 0,
    \]
    the function
    \[
    S(\|x\|^2/d)f(x/\|x\|)
    \]
    for any $f \in L^2(\sphere^{d-1})$ is in the null space of $K$.
\end{itemize}
\end{thm}
\begin{proof}
The $A_l$ in \cref{eqn:isotropicGaussianSpectrum} for $K$ are all equal to
\[
A_l(q, q') = \lambda^2 \f{R(q)}{\lambda} \f{R(q')}{\lambda}.
\]
This is a rank 1 kernel (on $L^2(\f 1 d \chi_d^2)$), with eigenfunction $R/\lambda$ and eigenvalue $\lambda^2$.
The rest then follows straightforwardly from \cref{thm:sphereEigens}.
\end{proof}

A common example where \cref{thm:isotropicGaussianExplicitDecomposition} applies is when $K$ is the CK of an MLP with relu, or more generally  degree-$s$ positive homogeneous activation functions, so that the $R$ in \cref{thm:isotropicGaussianExplicitDecomposition} is a polynomial.

In general, we cannot expect $K$ can be exactly diagonalized in a natural basis, as $\{A_l\}_{l \ge 0}$ cannot even be simultaneously diagonalizable.
We can, however, investigate the ``variance due to each degree of spherical harmonics'' by computing
\begin{equation}
a_l \defeq \EV_{q \sim \inv d \chi^2_d}
    A_l(q, q)
\label{eqn:Altrace}
\end{equation}
which is the coefficient of Gegenbauer polynomials in
\begin{equation}
\hat \Phi_d(t) \defeq 
\EV_{q \sim \inv d \chi^2_d} \Phi(t, q, q)
    =
        \sum_{l=0}^\infty a_l c_{d,l}^{-1} \Geg {\f{d-2}2} l (t).
\label{eqn:hatPhi}
\end{equation}

\begin{prop}
Assume that $\Phi(t, q, q')$ is continuous in $q$ and $q'$.
Suppose for any $d$ and any $t \in [-1, 1],$ the random variable $\Phi(t, q, q')$ with $q, q' \sim \inv d \chi^2_d$ has a bound $|\Phi(t, q, q')| \le Y$ for some random variable $Y$ with $\EV |Y| < \infty$.
Then for every $t \in [-1, 1]$,
\begin{align*}
\lim_{d \to \infty} |\hat \Phi_d(t) - \Phi(t, 1, 1)| = 0.
\end{align*}
\end{prop}
\begin{proof}
By the strong law of large number, $\inv d \chi^2_d$ converges to 1 almost surely.
Because $\Phi(t, q, q')$ is continuous in $q$ and $q'$, almost surely we have $\Phi(t, q, q') \to \Phi(t, 1, 1)$ almost surely.
Since $\Phi$ is bounded by $Y$, by dominated convergence, we have
\begin{align*}
\hat \Phi_d(t) - \Phi(t, 1, 1) =
\EV_{q, q' \sim \inv d \chi^2_d} \Phi(t, q, q') - \Phi(t, 1, 1)
    &\to
        0
\end{align*}
as desired.
\end{proof}

Finally, in the limit of large input dimension $d$, the top eigenvalues of $K$ over the standard Gaussian converge to the derivatives of $\Phi$.

\begin{restatable}{thm}{gaussianEigenLimit}\label{thm:gaussianEigenLimit}
Suppose $K$ is the CK or NTK of an MLP with polynomially bounded activation function.
For every degree $l$, $K$ over $\Gaus(0, I_d)$ has an eigenvalue $a_{l0}$ at spherical harmonics degree $l$ (in the sense of \cref{cor:gaussianFineEigens}) with
\begin{align*}
a_{l0} \in \left[
\EV_{q, q' \sim \f 1 d \chi_d^2}A_l(q, q'),
\EV_{q \sim \f 1 d \chi_d^2}A_l(q, q)
\right],
\end{align*}
where $A_l$ is as in \cref{eqn:gaussianFineEigens}.
Furthermore,
\begin{align*}
\lim_{d \to \infty} d^l a_{l0} =
\lim_{d \to \infty}d^l \EV_{q, q' \sim \f 1 d \chi_d^2}A_l(q, q') = 
\lim_{d \to \infty} d^l\EV_{q \sim \f 1 d \chi_d^2}A_l(q, q) 
= \Phi^{(l)}(0, 1, 1).
\end{align*}
Here $\Phi^{(l)}$ is the $l$th derivative of $\Phi(t, q, q')$ against $t$.
\end{restatable}

\subsection{In High Dimension, Boolean Cube \texorpdfstring{$\approx$}{\textasciitilde{}} Sphere \texorpdfstring{$\approx$}{\textasciitilde{}} Standard Gaussian}
\label{sec:allDistsAreSame}

We have already shown theoretically that the eigenvalues over the three distributions are close in high dimension (\cref{thm:unifyingSpectra}).
In this section, we numerically verify this, and also detail our justification of the limit where the degree $k$ is fixed while the input dimension $d \to \infty$.

For the same neural kernel $K$ with $K(x, y) = \Phi(\la x, y \ra/\|x\|\|y\|, \|x\|^2/d, \|y\|^2/d)$, let $\hat\Phi_d$ be as defined in \cref{eqn:hatPhi}, and let $\Phi_\sphere \defeq \Phi(t, 1, 1)$.
Thus $\Phi_\sphere$ is the univariate $\Phi$ that we studied in \cref{sec:sphere} in the context of the sphere.

\paragraph{Empirical verification of the spectral closeness}
As $d \to \infty$, $\f 1 d \chi_d^2 $ converges almost surely to 1, and each $A_l$ in \cref{eqn:isotropicGaussianSpectrum} converges weakly to the simple operator that multiplies the input function by $A_l(1, 1)$.
We verify in \cref{fig:GaussianApproximatesSphere} that, for different erf kernels, $\hat \Phi_d$ approximates $\Phi_\sphere$ when $d$ is large, which would then imply that their spectra become identical for large $d$.
Note that this is tautologically true for relu kernels with no bias $\sigma_b=0$ because they are positive-homogeneous.

Next, we compute the eigenvalues of $K$ on the boolean cube and the sphere, as well as the eigenvalues of the kernel $\hat K(x, y) \defeq \hat \Phi_d(\la x, y \ra/\|x\|\|y\|)$ on the sphere, which ``summarizes'' the eigenvalues of $K$ on the standard Gaussian distribution.
In \cref{fig:BoolSphereGaussianEigens}, we compare them for different dimensions $d$ up to degree 5 (there, ``Gaussian'' means $\hat K$ on the sphere).
We see that by dimension 128, all eigenvalues shown are very close to each other for each data distribution.

\begin{figure}
\includegraphics[width=\textwidth]{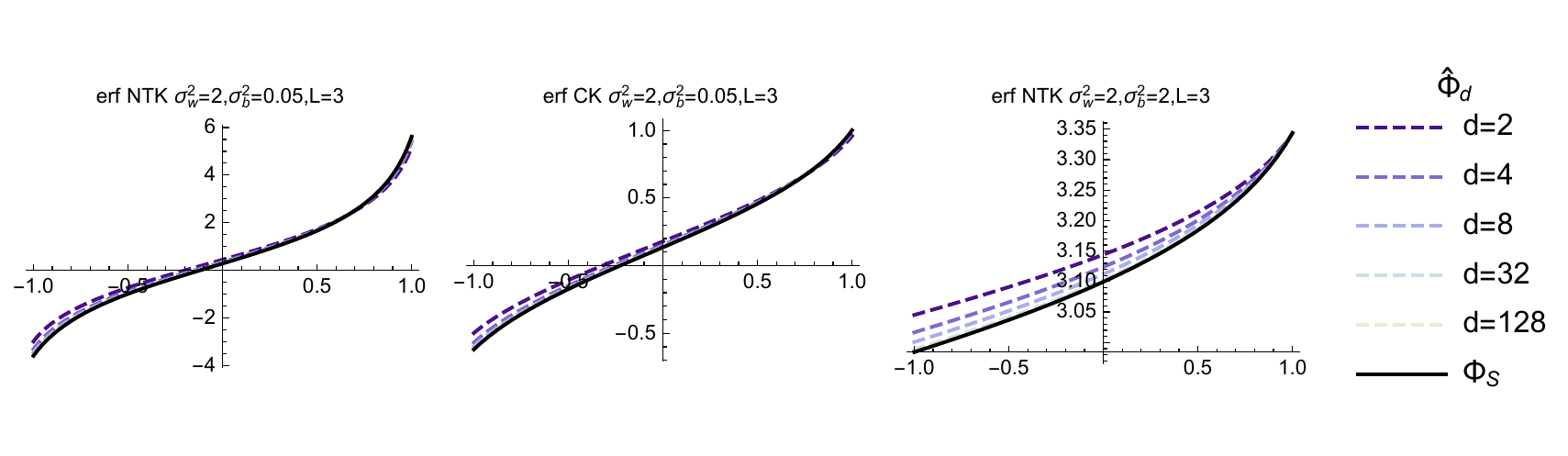}
\caption{\textbf{In high dimension $d$, $\hat \Phi_d$ as defined in \cref{eqn:hatPhi} approximates $\Phi$ very well.}
We show for 3 different erf kernels that the function $\Phi$ defining $K$ as an integral operator on the sphere is well-approximated by $\hat \Phi_d$ when $d$ is moderately large ($d \ge 32$ seems to work well).
This suggests that the eigenvalues of $K$ as an integral operator on the standard Gaussian should approximate those of $K$ as an integral operator on the sphere.
}
\label{fig:GaussianApproximatesSphere}
\end{figure}

\begin{figure}
\centering
\includegraphics[width=0.7\textwidth]{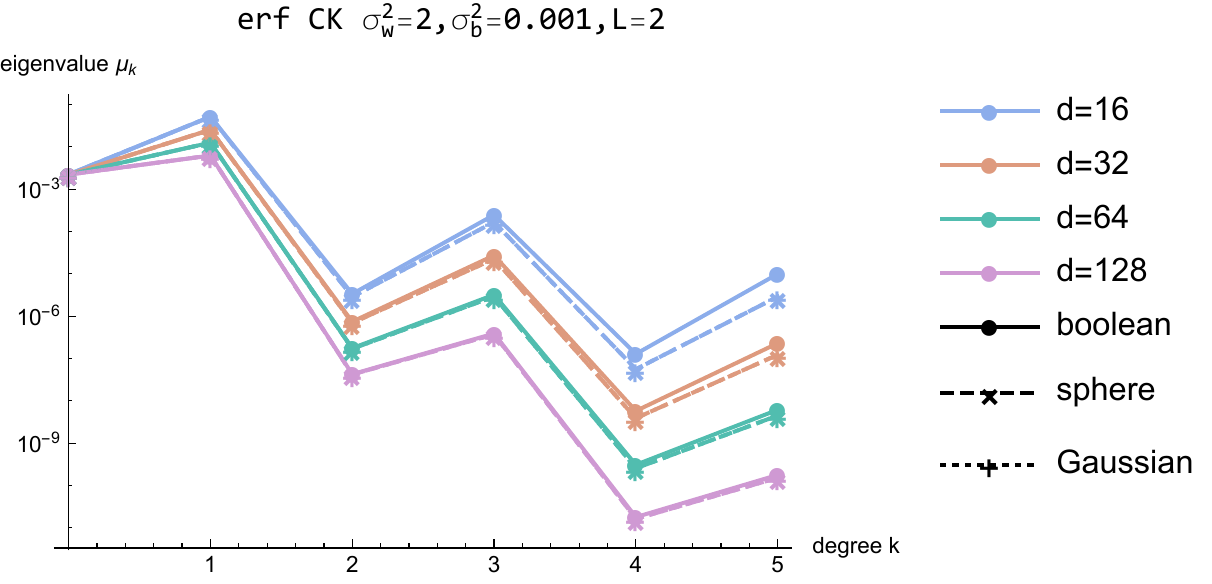}

\caption{
\textbf{In high dimension $d$, the eigenvalues are very close for the kernel over the boolean cube, the sphere, and standard Gaussian.}
We plot the eigenvalues $\mu_k$ of the erf CK, with $\sigma_w^2 = 2, \sigma_b^2 = 0.001$, depth 2, over the boolean cube, the sphere, as well as kernel on the sphere induced by $\hat \Phi_d$ (\cref{eqn:hatPhi}).
We do so for each degree $k \le 5$ and for dimensions $d=16, 32, 64, 128$.
We see that by dimension $d = 128$, the eigenvalues shown are already very close to each other.
}
\label{fig:BoolSphereGaussianEigens}
\end{figure}

\paragraph{Practically speaking, only the top eigenvalues matter.}
Observe that, in the empirical and theoretical results above, we only verify that the top eigenvalues ($\mu_k$, $a_k$, or $a_{k0}$ for $k$ small compared to $d$) are close when $d$ is large.
While this result may seem very weak at face value, in practice, the closeness of these top eigenvalues is the only thing that matters.
Indeed, in machine learning, we will only ever have a finite number, say $N$, of training samples to work with.
Thus, we can only use a finite $N \times N$ submatrix of the kernel $K$.
This submatrix, of course, has only $N$ eigenvalues.
Furthermore, if these samples are collected in an iid fashion (as is typically assumed), then these eigenvalues approximate the largest $N$ eigenvalues (top $N$ counting multiplicity) of the kernel $K$ itself \citep{troppIntroductionMatrixConcentration2015arXiv.org}.
As such, the smaller eigenvalues of $K$ can hardly be detected in the training sample, and cannot affect the machine learning process very much.

Let's discuss a more concrete example:
\cref{fig:BoolSphereGaussianEigens} shows that the boolean cube eigenvalues $\mu_k$ are very close to the sphere eigenvalues $a_k$ for all $k \le 5$.
Over the boolean cube, $\mu_0, \ldots, \mu_5$ cover eigenspaces of total dimension $\binom d 0 + \cdots + \binom d 5$, which is 275,584,033 when $d = 128$.
We need at least that many samples to be able to even detect the eigenvalue $\mu_6$ and the possible difference between it and the sphere eigenvalue $a_6$.
But note in comparison, Imagenet, one of the most common large datasets in use today, has only about 15 million samples, 10 times less than the number above.

Additionally, in this same comparison, $d = 128$ dramatically pales compared to Imagenet's input dimension $3\times256^2 = 196608$, and even to the those of the smaller common datasets like CIFAR10 ($d = 3\times 32^2 = 3072$) and MNIST ($d = 24^2 = 576$) ---
if we were to even use the input dimension of MNIST above, then $\mu_0, \ldots, \mu_5$ would cover eigenspaces of 523 \emph{billion} total dimensions!
Hence, it is quite practically relevant to consider the effect of large $d$ on the eigenvalues, while keeping $k$ small.
Again, we remark that even when one fixes $k$ and increases $d$, the dimension of eigenspaces affected by our limit theorems \cref{thm:MuAsymptotics,thm:SphereAsymptotics,thm:gaussianEigenLimit} increases like $\Theta(d^k)$, which implies one needs an increasing number $\Theta(d^k)$ of training samples to see the difference of eigenvalues in higher degrees $k$.

Finally, from the perspective of fractional variance, we also can see that only the top $k$ spectral closeness matters:
By \cref{cor:asymptoticfracvar}, for any $\epsilon > 0$, there is a $k$ such that the total fractional variance of degree 0 to degree $k$ (corresponding to eigenspaces of total dimension $\Theta(d^k)$) sums up to more than $1 - \epsilon$, for the cube, the sphere, and the standard Gaussian simultaneously, when $d$ is sufficiently large.
This is because the asymptotic fractional variance is completely determined by the derivatives of $\Phi$ at $t=0$.

\section{Basic Properties of Neural Kernel Functions}

We derive a few facts about the $\Phi$ function associated to a neural kernel $K$ (as defined in \cref{eqn:neuralKernel}).
These will be crucial for proving the spectral theorems we stated earlier.

Below, by \emph{nontrivial activation function} we mean any $\phi: \R \to \R$ that is not zero almost everywhere.
\begin{thm}\label{thm:PhiTaylorExpansion}
Let $K$ be the CK or NTK of an MLP with nontrivial activation function with domain $\sqrt d \sphere^{d-1} \sbe \R^d$.
Then $K(x, y) = \Phi\lp\f{\la x, y \ra}{\|x\| \|y\|}\rp$ where $\Phi: [-1, 1] \to \R$ has a Taylor series expansion around 0
\begin{equation}
\Phi(c) = a_0 + a_1 c + a_2 c^2 + \cdots
\label{eqn:Phitaylorseries}
\end{equation}
with the properties that $a_i \ge 0$ for all $i$ and $\sum_i a_i = \Phi(1) \le \infty$, so that \cref{eqn:Phitaylorseries} is absolutely convergent on $c \in [-1, 1]$.
\end{thm}
\begin{proof}
We first prove this statement for the CK of an $L$ layer MLP with nonlinearity $\phi$.
Let $\Phi^l$ be the corresponding $\Phi$-function for the CK $\Sigma^L$ of the $L$-layer MLP.
It suffices to prove this by induction on depth $L$.
For $L = 1$, $\Sigma^1(x, x') = \sigma_w^2 (n^0)^{-1} \la x, x'\ra + \sigma_b^2$ by \cref{eqn:CK}, so clearly $\Phi^1(c) = a_0 + a_1 c$ where $a_0$ and $a_1$ are nonnegative.

Now for the inductive step, suppose $\Sigma^{l-1} $ satisfies the property that $\Phi^{l-1}$ has Taylor expansion of the desired form.
We seek to show the same for $\Sigma^l$ and $\Phi^l$.
First notice that it suffices to show this for $\Vt\phi(\Sigma^{l-1})$, since multiplication by $\sigma_w^2$ and addition of $\sigma_b^2$ preserves the property.
But
\[
\Vt\phi(\Sigma^{l-1})(x, x')
    =
        \EV \phi(z)\phi(z')
\]
where
\[
(z, z') \sim \Gaus\lp 0,
    \begin{pmatrix}
    \Sigma^{l-1}(x, x) & \Sigma^{l-1}(x, x')\\
    \Sigma^{l-1}(x, x') & \Sigma^{l-1}(x', x')
    \end{pmatrix}
    \rp
    =
    \Gaus\lp 0,
    \begin{pmatrix}
    \Phi^{l-1}(1) & \Phi^{l-1}(c)\\
    \Phi^{l-1}(c) & \Phi^{l-1}(1)
    \end{pmatrix}
    \rp
    .
\]
Using the notation of \cref{lemma:hatphiTaylorexpansion}, we can express this as 
\begin{equation}
\Vt\phi(\Sigma^{l-1})(x, x')
=
\Phi^l(1) \hat \phi\lp \f{\Phi^{l-1}(c)}{\Phi^{l-1}(1)}\rp
.
\end{equation}
Substituting the Taylor expansion of $\f{\Phi^{l-1}(c)}{\Phi^{l-1}(1)}$ into the Taylor expansion of $\hat \phi$ given by \cref{lemma:hatphiTaylorexpansion} gives us a Taylor series whose coefficients are all nonnegative, since those of $\f{\Phi^{l-1}(c)}{\Phi^{l-1}(1)}$ and $\hat \phi$ are nonnegative as well.
In addition, plugging in 1 for $c$ in this Taylor series shows that the sum of the coefficients equal $\hat \phi(1) = 1$, so the series is absolutely convergent for $c \in [-1, 1]$.
This proves the inductive step.

For the NTK, the proof is similar, except we now also have a product step where we multiply
$\Vt{\phi'}(\Sigma^{l-1})$
with $\Theta^{l-1}$,
and we simply just need to use the fact that product of two Taylor series with nonnegative coefficients is another Taylor series with nonnegative coefficients, and that the resulting radius of convergence is the minimum of the original two radii of convergence.

\end{proof}

\begin{thm}\label{thm:PhiTaylorExpansionVariableNorm}
Let $K$ be the CK or NTK of an MLP with nontrivial activation function with domain $\R^d \setminus \{0\}$.
Then $K(x, y) = \Phi\lp\f{\la x, y \ra}{\|x\| \|y\|}, \f{\|x\|^2}{d}, \f{\|y\|^2}{d}\rp$ where $\Phi: [-1, 1] \times (\R^+)^2 \to [-1, 1] \times (\R^+)^2$ is 1) continuous jointly in all 3 arguments, 2) $\Phi(t, q, q')$ is analytic in $t$ for any fixed $q, q' \in \R^+$, and 3) $\f {d^r}{dt^r} \Phi(t, q, q')$ is continuous in $(t, q, q') \in (-1, 1) \in (\R^+)^2$ for any $r \ge 0$.

If, furthermore, the MLP's activation function is polynomially bounded, then the $\Phi$ corresponding to its CK is polynomially bounded as well.
Likewise, if the MLP's activation function has a polynomially bounded derivative, then the $\Phi$ corresponding to its NTK is polynomially bounded as well.
\end{thm}
\begin{proof}
We first show the result for the CK.
Let 
\[\Lambda(t, q, q') = \lp
\f{\sigma_w^2 t \sqrt{qq'} + \sigma^2_b}{\sqrt{(\sigma_w^2 q + \sigma^2_b)(\sigma_w^2 q' + \sigma^2_b)}},
\sigma_w^2 q + \sigma^2_b,
\sigma_w^2 q' + \sigma^2_b\rp.\]
Note that, since $\sigma_w > 0$, $\Lambda$ is continuous on $[-1, 1] \times (\R^+)^2$, and is trivially analytic in $t$ for fixed $q, q' \in \R+$.

Let $\phi$ be the nonlinearity of the MLP.
We can write $\Phi$ as a composition of $\hat \phi$ as defined in \cref{lemma:hatphi3Analytic} and $\Lambda$ as follows:
\begin{align*}
\Phi(t, q, q') = (\hat \phi \circ \Lambda)^L (t, q, q')
\end{align*}
where the superscript $L$ denotes $L$-fold repeated composition.
Then claim 1, 2, and 3 follow from the same properties of $\Lambda$ and $\hat \phi$ (by \cref{lemma:hatphi3Analytic}) and the fact that they are preserved under composition.
Likewise, when $\phi$ is polynomially bounded, $\hat \phi$ is as well (by \cref{lemma:hatphi3Analytic}).
Because the composition of polynomially bounded functions is polynomially bounded, this implies that $\Phi$ is polynomially bounded, as desired.

The proof for the NTK case is similar, and just requires noting that the properties 1 and 2, along with the polynomially-bounded property, are preserved under multiplication and summation as well.
\end{proof}

\begin{lemma}\label{lemma:hatphiTaylorexpansion}
Consider any $\sigma > 0$ and any nontrival $\phi: \R \to \R$ that is square-integrable against $\Gaus(0, \sigma^2)$.
Then the function
\begin{equation*}
\hat \phi: [-1, 1] \to [-1, 1],\quad
\hat \phi(c) \defeq
    \f{\EV\phi(x)\phi(y)}
    {\EV \phi(x)^2},\quad
\text{where $(x, y) \sim \Gaus\left(0, 
\begin{pmatrix}
\sigma^2 & \sigma^2 c\\
\sigma^2 c & \sigma^2
\end{pmatrix}\right)$},
\end{equation*}
has a Taylor series expansion around 0
\begin{equation}
\hat \phi(c) = a_0 + a_1 c + a_2 c^2 + \cdots
\label{eqn:hatphitaylorseries} 
\end{equation}
with the properties that $a_i \ge 0$ for all $i$ and $\sum_i a_i = \hat\phi(1) = 1$, so that \cref{eqn:hatphitaylorseries} is absolutely convergent on $c \in [-1, 1]$.
\end{lemma}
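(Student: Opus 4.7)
The plan is to expand $\phi$ in the orthonormal basis of (appropriately scaled) Hermite polynomials for $L^2(\Gaus(0,\sigma^2))$, and then exploit the classical fact that Hermite polynomials diagonalize the bivariate Gaussian expectation operator in terms of the correlation $c$. This immediately exhibits $\hat\phi(c)$ as a power series in $c$ whose coefficients are manifestly nonnegative.

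\textbf{Step 1: Hermite expansion.} Let $H_n$ denote the probabilists' Hermite polynomials, which form an orthogonal basis of $L^2(\Gaus(0,1))$ with $\EV_{u \sim \Gaus(0,1)} H_m(u)H_n(u) = n!\,\ind(m=n)$. Since $\phi \in L^2(\Gaus(0,\sigma^2))$, the function $u \mapsto \phi(\sigma u)$ lies in $L^2(\Gaus(0,1))$ and so admits an expansion
\begin{equation*}
\phi(\sigma u) = \sum_{n=0}^\infty b_n H_n(u), \qquad \sum_{n=0}^\infty b_n^2\, n! \;=\; \EV_{x \sim \Gaus(0,\sigma^2)} \phi(x)^2 < \infty.
\end{equation*}

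\textbf{Step 2: Correlated Gaussian identity.} If $(u,v)$ is jointly standard Gaussian with correlation $c \in [-1,1]$, the classical Mehler identity gives $\EV H_m(u) H_n(v) = \ind(m=n)\, n!\, c^n$. Changing variables $x = \sigma u,\, y = \sigma v$ (so that $(x,y)$ has the covariance specified in the lemma) and using orthogonality termwise,
\begin{equation*}
\EV \phi(x)\phi(y) \;=\; \sum_{m,n \ge 0} b_m b_n \EV H_m(u) H_n(v) \;=\; \sum_{n \ge 0} b_n^2\, n!\, c^n.
\end{equation*}
The interchange of summation and expectation is justified by the Cauchy--Schwarz bound $|\EV H_m(u) H_n(v)| \le n!$ combined with $\sum b_n^2 n! < \infty$, which makes the series absolutely convergent uniformly in $c \in [-1,1]$.

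\textbf{Step 3: Assemble and verify properties.} Nontriviality of $\phi$ ensures $\EV \phi(x)^2 = \sum_n b_n^2 n! > 0$, so we may divide, obtaining
\begin{equation*}
\hat\phi(c) \;=\; \sum_{n \ge 0} a_n c^n, \qquad a_n \defeq \frac{b_n^2\, n!}{\sum_{m \ge 0} b_m^2\, m!} \;\ge\; 0.
\end{equation*}
Each $a_n \ge 0$ by construction, $\sum_n a_n = 1$ by design, and $\hat\phi(1) = 1$ follows either from setting $c=1$ or from the fact that at $c=1$ we have $x=y$ a.s. so $\EV \phi(x)\phi(y) = \EV \phi(x)^2$. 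Absolute convergence on $[-1,1]$ is immediate from $\sum_n a_n |c|^n \le \sum_n a_n = 1$.

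\textbf{Main obstacle.} The only nontrivial point is justifying the termwise computation of $\EV \phi(x)\phi(y)$: one must argue convergence of the double Hermite series in $L^2$ of the bivariate Gaussian (whose density depends on $c$, and degenerates at $c = \pm 1$). The cleanest route is to first prove the identity for $|c| < 1$, where the bivariate density is bounded in terms of the marginal and dominated convergence applies directly, and then extend to $c = \pm 1$ by continuity of both sides (the right-hand side converges uniformly on $[-1,1]$ by the Weierstrass $M$-test using $|a_n c^n| \le a_n$, and the left-hand side is continuous in $c$ by dominated convergence applied to $(x,y) = \sigma(u, cu + \sqrt{1-c^2}\,w)$ for independent $u,w \sim \Gaus(0,1)$).
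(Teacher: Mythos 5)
Your proof is correct and takes essentially the same route as the paper's: rescale to unit variance, expand $\phi(\sigma\cdot)$ in Hermite polynomials, and use the identity $\EV H_m(u)H_n(v) = \ind(m=n)\, n!\, c^n$ to write the numerator as $\sum_n b_n^2\, n!\, c^n$, a power series with nonnegative coefficients that normalizes to $\hat\phi(1)=1$. The only difference is that the paper simply invokes this Hermite expansion identity as a quoted fact (\cref{fact:hermitetaylorexpansion}), whereas you additionally sketch the justification of interchanging expectation and summation and the extension to the endpoints $c=\pm 1$.
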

\begin{proof}
Since the denominator of $\hat \phi$ doesn't depend on $c$, it suffices to prove this Taylor expansion exists for its numerator.
Let $\tilde \phi(c) \defeq \phi(\sigma c)$.
Then
\begin{equation*}
\EV\left[\phi(x)\phi(y): (x, y) \sim \Gaus\lp 0, \begin{pmatrix}
        \sigma^2 & \sigma^2 c\\
        \sigma^2 c & \sigma^2
        \end{pmatrix}\rp\right]
    =
        \EV\left[\tilde \phi(x) \tilde \phi(y): (x, y) \sim \Gaus\lp 0, \begin{pmatrix}
        1 & c\\
        c & 1
        \end{pmatrix}\rp\right]
\end{equation*}
By \cref{fact:hermitetaylorexpansion}, this is just
\begin{equation*}
b_0^2 + b_1^2 c + b_2^2 c^2 + \cdots
\end{equation*}
where $b_i$ are the Hermite coefficients of $\tilde \phi$.
Since $\phi \in L^2(\Gaus(0, \sigma^2))$,
\[\sum_i b_i^2 < \infty\]
as desired.
\end{proof}

\begin{lemma}\label{lemma:hatphi3Analytic}
Consider any nontrival $\phi: \R \to \R$ that is square-integrable against $\Gaus(0, \sigma^2)$ for all $\sigma > 0$.
Then the function
\begin{align*}
\hat \phi&: [-1, 1] \times (\R^+)^2 \to [-1, 1] \times (\R^+)^2,\\
\hat \phi(t, q, q') &\defeq
    \lp
        \f{\EV \phi(x) \phi(y)}{\sqrt{\EV \phi(x)^2 \EV \phi(y)^2}},
        \EV \phi(x)^2,
        \EV \phi(y)^2
    \rp,\quad
    \text{where $(x, y) \sim \Gaus\lp 0, \begin{pmatrix}
    q    &   t \sqrt{qq'}\\
    t \sqrt{qq'} & q'
    \end{pmatrix}\rp$,}
\end{align*}
is 1) is continuous in $(t, q, q') \in [-1, 1] \times (\R^+)^2$, 2) is analytic in $t \in [-1, 1]$ for every $q, q' \in \R^+$, and 3) its derivatives $\f{d^r}{dt^r} \hat \phi(t, q, q')$ is continuous in $(t, q, q') \in (-1, 1) \times (\R^+)^2$ for any $r \ge 0$.

If furthermore $\phi$ is polynomially bounded (i.e.\ there is some polynomial $P$ such that $|\phi(z)| \le P(z)$ for all $z \in \R$), then $\hat \phi$ is also polynomially bounded.
\end{lemma}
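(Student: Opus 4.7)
Since the second and third components of $\hat\phi$, namely $\EV\phi(x)^2$ and $\EV\phi(y)^2$, depend only on $q$ and $q'$ respectively (not on $t$), the core task is to control the numerator $N(t,q,q') \defeq \EV\phi(x)\phi(y)$. My plan is to substitute $x=\sqrt{q}\,u$ and $y=\sqrt{q'}\,v$, so that $(u,v)$ is jointly standard Gaussian with correlation $t$; writing $\psi_q(u) \defeq \phi(\sqrt{q}\,u)$, this gives $N(t,q,q') = \EV\psi_q(u)\psi_{q'}(v)$ and immediately reduces the problem to the unit-variance case.

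Next, I would expand $\psi_q$ in the $L^2$-orthonormal Hermite basis $\{H_n\}$: since $\phi \in L^2(\Gaus(0,q))$, we have $\psi_q = \sum_n c_n(q) H_n$ with coefficients $c_n(q) = \EV\psi_q(u) H_n(u)$ and Parseval identity $\sum_n c_n(q)^2 = \EV_{\Gaus(0,q)}\phi^2 < \infty$. Applying the standard correlation identity $\EV H_n(u)H_m(v) = t^n\,\ind(n=m)$ (as in \cref{fact:hermitetaylorexpansion}) gives the key formula
\[
N(t,q,q') = \sum_{n=0}^\infty c_n(q)\,c_n(q')\,t^n.
\]
By Cauchy--Schwarz, $\sum_n |c_n(q)c_n(q')| \le \|\psi_q\|\,\|\psi_{q'}\| < \infty$, so viewed as a power series in $t$ this has radius of convergence at least $1$, immediately yielding claim 2 (analyticity on $(-1,1)$).

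For joint continuity (claim 1), I would first show each coefficient $c_n(q)$ is continuous in $q>0$ via dominated convergence, using a uniform density-ratio bound $p_q/p_{q_0+\delta} \le C_\delta$ for $q$ in a small neighborhood of $q_0$ (this is exactly where the hypothesis $\phi \in L^2(\Gaus(0,\sigma^2))$ for every $\sigma>0$ is used); the same argument gives continuity of $q\mapsto \EV_{\Gaus(0,q)}\phi^2 = \sum_n c_n(q)^2$. Since the partial sums $\sum_{n<N} c_n(q)^2$ are monotone increasing in $N$, continuous in $q$, with continuous pointwise limit, Dini's theorem promotes pointwise to uniform convergence on compacts of $(0,\infty)$. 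Combined with the termwise estimate $|c_n(q)c_n(q')t^n| \le \tfrac12(c_n(q)^2 + c_n(q')^2)$, the series defining $N$ converges uniformly on compact subsets of $[-1,1]\times(0,\infty)^2$, so $N$ and hence $\hat\phi$ is jointly continuous. For derivatives (claim 3), on any compact $K \subset (-1,1)\times(\R^+)^2$ we have $|t| \le 1-\epsilon$, and the term-by-term $r$-th $t$-differentiated series is dominated by $\bigl[\sup_n n^r(1-\epsilon)^{n-r}\bigr]\cdot\sum_n |c_n(q)c_n(q')|$, again uniformly convergent on $K$; hence $\partial_t^r \hat\phi$ is jointly continuous on $(-1,1)\times(\R^+)^2$.

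Finally, for claim 4, $|\hat\phi_1|\le 1$ holds automatically by Cauchy--Schwarz normalization, while for $|\phi(z)|\le C(1+|z|^p)$ we obtain $\EV_{\Gaus(0,q)}\phi^2 \le 2C^2\bigl(1+q^p(2p-1)!!\bigr)$, polynomial in $q$, so the second and third components are polynomially bounded. The main technical obstacle is carefully justifying the continuity of each $c_n(q)$ and the uniform integrability required to invoke Dini's theorem; this is precisely what the global hypothesis ``$\phi\in L^2(\Gaus(0,\sigma^2))$ for every $\sigma>0$'' buys us. A secondary subtlety is that claim 2 is best interpreted as analyticity on the open interval $(-1,1)$, since the Hermite series only guarantees convergence radius $\ge 1$ and not analyticity at the endpoints $t=\pm 1$.
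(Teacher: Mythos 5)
Your proposal is correct and follows essentially the same route as the paper: rescale to $\phi_q(c)=\phi(c\sqrt q)$, expand in Hermite coefficients via \cref{fact:hermitetaylorexpansion} to obtain the power series $\sum_n c_n(q)c_n(q')t^n$, and apply Cauchy--Schwarz for absolute convergence on $[-1,1]$. The paper dispatches claims 1), 3), and polynomial boundedness as immediate, so your Dini/dominated-convergence details (and your caveat that analyticity is really on the open interval $(-1,1)$, the closed-interval statement meaning an absolutely convergent power series there) just fill in what the paper treats as routine.
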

\begin{proof}
The continuity claim 1) follows from the continuity of $\EV \phi(x) \phi(y), \EV \phi(x)^2, \EV \phi(y)^2$ in the entries of the covariance matrix $q, q',$ and $t\sqrt{qq'}$.
The claim 3) follows from simple computation of $\f{d^r}{dt^r} \EV \phi(x) \phi(y)$.
The polynomially-bounded claim also trivially follows from simple calculations.

For analyticity claim 2), let $\phi_q(c) \defeq \phi(c \sqrt q)$ and $\phi_{q'}(c) \defeq \phi(c \sqrt {q'})$.
Note that
\begin{align*}
\hat \phi(t, q, q')_1 = \EV \phi_q(\zeta_1) \phi_{q'}(\zeta_2),\quad
(\zeta_1, \zeta_2) \sim \Gaus\lp 0, \begin{pmatrix}
1 & t\\
t & 1
\end{pmatrix}
\rp.
\end{align*}
By \cref{fact:hermitetaylorexpansion}, 
\begin{align*}
\hat \phi(t, q, q')_1 = a_0 b_0 + a_1 b_1 t + a_2 b_2 t^2 + \cdots
\end{align*}
where $a_i$ and $b_i$ are the Hermite coefficients of $\phi_q$ and $\phi_{q'}$.
This series is absolutely convergent because $|t| \le 1$, so that
\begin{align*}
|\hat \phi(t, q, q')_1|
    &\le 
        |a_0 b_0| + |a_1 b_1| + |a_2 b_2| + \cdots\\
    &\le
        \sqrt{(a_0^2 + a_1^2 + \cdots)} \sqrt{(b_0^2 + b_1^2 + \cdots)}
    =
        \sqrt{\lp \EV \phi_q(z)^2\rp \lp\EV \phi_{q'}(z)^2\rp}
\end{align*}
by Cauchy-Schwartz, where $z \sim \Gaus(0, 1)$.
\end{proof}

\begin{fact}[\cite{odonnellAnalysisBooleanFunctions2014LibraryofCongressISBN}]\label{fact:hermitetaylorexpansion}
For any $\phi, \psi \in L^2(\Gaus(0, 1))$,
\begin{equation*}
 \EV\left[\phi(x) \psi(y): (x, y) \sim \Gaus\lp 0, \begin{pmatrix}
        1 & c\\
        c & 1
        \end{pmatrix}\rp\right]
    = 
    a_0 b_0 + a_1 b_1 c + a_2 b_2 c^2 + \cdots
 \end{equation*}
where $a_i$ and $b_i$ are respectively the Hermite coefficients of $\phi$ and $\psi$.
Therefore, $\hat \phi(t, q, q')_1$ is analytic in $t \in [-1, 1]$, as desired.
\end{fact}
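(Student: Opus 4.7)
The plan is to reduce the general bilinear expectation to the single-basis orthogonality relation
\[
\EV[H_i(x) H_j(y)] = \delta_{ij}\, c^i,
\]
where $\{H_n\}_{n\ge 0}$ are the normalized probabilist's Hermite polynomials (an orthonormal basis of $L^2(\Gaus(0,1))$) and $(x,y) \sim \Gaus\!\left(0, \begin{pmatrix}1 & c\\ c & 1\end{pmatrix}\right)$. Once this identity is in hand, the full formula follows by bilinearity together with $L^2$-continuity of the pairing $(\phi,\psi) \mapsto \EV[\phi(x)\psi(y)]$, which is a bounded bilinear form on $L^2(\Gaus(0,1))^2$ by Cauchy--Schwarz (both marginals of $(x,y)$ are $\Gaus(0,1)$). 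Inserting the $L^2$-convergent Hermite expansions $\phi = \sum_i a_i H_i$ and $\psi = \sum_j b_j H_j$ and applying the key identity then yields $\sum_i a_i b_i c^i$.

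To establish the key identity, the natural tool is the Ornstein--Uhlenbeck (noise) operator $U_c$ defined by $(U_c g)(x) = \EV[g(cx + \sqrt{1-c^2}\, z)]$ for $z \sim \Gaus(0,1)$ independent of $x$. Since the conditional distribution of $y$ given $x$ is $\Gaus(cx,\, 1-c^2)$, we have
\[
\EV[H_i(x) H_j(y)] = \EV[H_i(x)\, (U_c H_j)(x)],
\]
so it suffices to show that $U_c H_n = c^n H_n$. The cleanest verification uses the generating function $e^{tw - t^2/2} = \sum_{n \ge 0} H_n(w)\, t^n / n!$: applying $U_c$ as a function of $w = x$, the Gaussian moment generating function gives
\[
U_c\bigl(e^{t\,\cdot\,- t^2/2}\bigr)(x) = \EV\bigl[e^{t(cx + \sqrt{1-c^2}\, z) - t^2/2}\bigr] = e^{tcx - t^2 c^2/2} = \sum_{n\ge 0} c^n H_n(x)\, \frac{t^n}{n!},
\]
and matching coefficients of $t^n/n!$ yields $U_c H_n = c^n H_n$. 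Combined with the orthonormality $\EV[H_i(x) H_j(x)] = \delta_{ij}$, the key identity follows.

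The ``therefore'' conclusion that $\hat\phi(t,q,q')_1$ is analytic in $t$ on $[-1,1]$ then follows because $\sum_i a_i^2, \sum_i b_i^2 < \infty$ for $\phi, \psi \in L^2(\Gaus(0,1))$, so by Cauchy--Schwarz $\sum_i |a_i b_i| < \infty$, making $\sum_i a_i b_i t^i$ a power series with radius of convergence at least $1$.

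I expect the main obstacle to be the generating-function step: one must justify applying $U_c$ termwise, or equivalently verify the pointwise identity above for real $t$ and then extract Hermite coefficients by uniqueness of power-series expansions. This is routine since $e^{t(cx + \sqrt{1-c^2}z) - t^2/2}$ is dominated by a Gaussian-integrable function in $z$, but it is the one place the argument is not pure algebra. An alternative route avoiding generating functions is to invoke Mehler's formula for the joint density of correlated Gaussians in the Hermite basis, though this essentially repackages the same computation.
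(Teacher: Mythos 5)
Your proof is correct, and there is nothing in the paper to diverge from: the paper states this as a \emph{Fact} cited to O'Donnell's \emph{Analysis of Boolean Functions} and gives no proof of its own. Your argument --- reduce to the single identity $\EV[H_i(x)H_j(y)]=\delta_{ij}c^i$ via boundedness of the bilinear pairing on $L^2(\Gaus(0,1))^2$, then obtain that identity from the Ornstein--Uhlenbeck eigenrelation $U_cH_n=c^nH_n$ proved by the generating function --- is essentially the standard proof found in the cited source, and the final analyticity remark via Cauchy--Schwarz ($\sum_i|a_ib_i|\le\sqrt{\sum_ia_i^2}\sqrt{\sum_ib_i^2}<\infty$) matches how the paper uses the fact in its Lemma on $\hat\phi$. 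One small notational wrinkle: you call $\{H_n\}$ the \emph{normalized} (orthonormal) Hermite polynomials but then use the generating function $e^{tw-t^2/2}=\sum_n H_n(w)t^n/n!$, which is the expansion for the \emph{unnormalized} probabilist's polynomials $\mathrm{He}_n$; since normalization only rescales each eigenvector, the eigenrelation $U_c\mathrm{He}_n=c^n\mathrm{He}_n$ transfers unchanged, so the slip is harmless, but you should state one convention and stick to it (with orthonormal $H_n=\mathrm{He}_n/\sqrt{n!}$ the identity $\EV[H_i(x)H_j(y)]=\delta_{ij}c^i$ is exactly what yields $\sum_i a_ib_ic^i$ for coefficients taken in the orthonormal basis, consistent with the paper's use of $\sum_ib_i^2=\EV\phi^2$).
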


\section{Omitted Proofs}
\label{sec:omittedproofs}

\subsection{Boolean Cube Spectral Theory}

\boolcubeEigen*
\begin{proof}
We directly verify $K \chi_S = \mu_{|S|} \chi_S$.
Notice first that
\[K(x, y) = \Phi(\la x, y \ra) = \Phi(\la x \odot y, \onev \ra) = K(x\odot y, \onev)\]
where $\odot$ is Hadamard product.
We then calculate
\begin{align*}
    K \chi_S(y)
        &=
            \EV_{x} K(y, x) x^S\\
        &=
            \EV_x K(\onev, x\odot y) (x\odot y)^S y^S.
\end{align*}
Here we are using the fact that $x$ and $y$ are boolean to get $x^S = (x \odot y)^S y^S$.
Changing variable $z \defeq x \odot y$, we get
\begin{align*}
K \chi_S(y)
    &=
        y^S \EV_z K(\onev, z)z^S
    =
        \mu_{|S|} \chi_S(y)
\end{align*}
as desired.
Finally, note that $\mu_{|S|}$ is invariant under permutation of $[d]$, so indeed it depends only on the size of $S$.
\end{proof}

\muFormula*
\begin{proof}
Because $\sum_i x_i/d$ only takes on values $\{-\f d 2 \Delta, (-\f d 2 +1) \Delta, \ldots, (\f d 2 - 1)\Delta, \f d 2 \Delta\}$, where $\Delta = \f 2 d$, we can collect like terms in \cref{eqn:mu} and obtain
\begin{align*}
    \mu_k
        &=
            2^{-d}\sum_{r=0}^d
                \sum_{\text{$x$ has $r$ `$-1$'s}}
                    \lp \prod_{i=1}^k x_i \rp \Phi\lp \lp \f d 2 - r\rp\Delta \rp
\end{align*}
which can easily be shown to be equal to
\begin{equation*}
\mu_k = 2^{-d}\sum_{r=0}^d
                C^{d-k,k}_r \Phi\lp \lp \f d 2 - r\rp\Delta \rp,
\end{equation*}
proving \cref{eqn:binomcoefMu} in the claim.
Finally, observe that $C_r^{d-k, k}$ is also the coefficient of $x^r$ in the polynomial $(1-x)^k (1 + x)^{d-k}$.
Some operator arithmetic then yields \cref{eqn:shiftopMu}.
\end{proof}

\subsubsection{Weak Spectral Simplicity Bias}

\weakspectralsimplicitybias*
\begin{proof}
Again, there is a function $\Phi$ such that $K(x, y) = \Phi(\la x, y \ra/\|x\|\|y\|)$.
By \cref{thm:PhiTaylorExpansion}, $\Phi$ has a Taylor expansion with only nonnegative coefficients.
\begin{equation*}
\Phi(c) = a_0 + a_1 c + a_2 c^2 + \cdots.
\end{equation*}
By \cref{lemma:weaksimplicitybiasPoly}, \cref{eqn:weaksimplicitybias} is true for polynomials $\Phi(c) = c^r$,
\begin{equation}
\mu_0(c^r) \ge \mu_2(c^r) \ge \cdots,\quad
\mu_1(c^r) \ge \mu_3(c^r) \ge \cdots.
\end{equation}
Then since each $\mu_k = \mu_k(\Phi)$ is a linear function of $\Phi$, $\mu_k(\Phi) = \sum_{r=0}^\infty a_r \mu_k(c^r)$ also follows the same ordering.
\end{proof}

\begin{lemma}\label{lemma:weaksimplicitybiasPoly}
Let $\shiftop_\Delta$ be the shift operator with step $\Delta$ that sends a function $\Phi(\cdot)$ to $\Phi(\cdot - \Delta)$.
Let $\Phi(c) = c^t$ for some $t$.
Let $\mu_k^d$ be the eigenvalue of $K(x, y) = \Phi(\la x, y \ra/\|x\|\|y\|)$ on the boolean cube $\boolcube^d$.
Then for any $0 \le k \le d$,
\begin{align}
\mu_k^d &= 0   &   \text{if $t + k$ is odd} \label{eqn:tkodd}\\
1 \ge \mu_{k-2}^d &\ge \mu_k^d \ge 0 &\text{if $t+k$ is even.}\label{eqn:tkeven}
\end{align}
We furthermore have the identity
\begin{equation*}
\mu_{k-2}^d - \mu_{k}^d
    =
        \lp \f {d-2}d \rp^t \mu_{k-2}^{d-2}
\end{equation*}
for any $2 \le k \le d$.
\end{lemma}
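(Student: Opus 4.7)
The plan is to work directly from the explicit representation
\[
\mu_k^d \;=\; \EV_{x \sim \boolcube^d}\!\left[\chi_S(x)\left(\tfrac{1}{d}\sum_{j=1}^d x_j\right)^{\!t}\right], \quad |S|=k,
\]
obtained by setting $\Phi(c)=c^t$ in \cref{eqn:mu}, and to split the lemma into four pieces: the vanishing statement \cref{eqn:tkodd}, the nonnegativity $\mu_k^d\ge 0$, the upper bound $\mu_{k-2}^d\le 1$, and the identity together with the resulting monotonicity $\mu_{k-2}^d\ge\mu_k^d$.

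For \cref{eqn:tkodd} I would use the sign-flip symmetry $x \stackrel{d}{=} -x$ on the uniform boolean cube: since $\chi_S(-x)=(-1)^k\chi_S(x)$ and $c^t$ is odd exactly when $t$ is odd, the change of variables $x\mapsto -x$ yields $\mu_k^d=(-1)^{k+t}\mu_k^d$, forcing $\mu_k^d=0$ when $k+t$ is odd. For $\mu_k^d\ge 0$ in the remaining even-parity case, multinomially expanding $\bigl(\sum_j x_j\bigr)^t$ gives $d^t\mu_k^d=\sum_{(i_1,\dots,i_t)\in[d]^t}\EV[\chi_S(x)\,x_{i_1}\cdots x_{i_t}]$, and each summand is either $0$ or $1$ (it equals $1$ precisely when every coordinate appears an even number of times in the multiset $S\sqcup\{i_1,\dots,i_t\}$), so $d^t\mu_k^d$ is a nonnegative integer. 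The bound $\mu_{k-2}^d\le 1$ is immediate from the pointwise estimate $|\chi_S(x)(\tfrac{1}{d}\sum_j x_j)^t|\le 1$.

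The heart of the argument is the claimed identity, which I would prove by conditioning on the pair $(x_{k-1},x_k)$. Writing $Z=\sum_{i\in[d]\setminus\{k-1,k\}} x_i$, which is independent of $(x_{k-1},x_k)$, the total sum becomes $\sum_j x_j = x_{k-1}+x_k+Z$. Averaging over the four equiprobable values of $(x_{k-1},x_k)$ in each of the defining expectations, the $(+,-)$ and $(-,+)$ cases contribute $Z^t$ (weighted by $+1$ in $\mu_{k-2}^d$ and by $-1$ in $\mu_k^d$ because of the factor $x_{k-1}x_k$), while the $(+,+)$ and $(-,-)$ cases contribute $(Z\pm 2)^t$ with matching signs in the two quantities. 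Subtracting therefore cancels the $(Z\pm 2)^t$ terms and leaves
\[
d^t\bigl(\mu_{k-2}^d-\mu_k^d\bigr) \;=\; \EV\bigl[\chi_{[k-2]}(x)\,Z^t\bigr] \;=\; (d-2)^t\,\mu_{k-2}^{d-2},
\]
where the last equality is just the definition of $\mu_{k-2}^{d-2}$ on the $(d-2)$-dimensional boolean cube spanned by the coordinates other than $x_{k-1},x_k$. Dividing by $d^t$ gives the stated identity, and combining it with the already-established nonnegativity $\mu_{k-2}^{d-2}\ge 0$ immediately yields $\mu_{k-2}^d\ge \mu_k^d$.

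The main obstacle is pure bookkeeping in the conditioning step: one must carefully track the four sign combinations of $(x_{k-1},x_k)$ so that the $(Z\pm 2)^t$ contributions cancel upon subtraction and only the $Z^t$ piece survives. No induction on $k$ or $d$ is needed for the inequalities themselves, since each of $\mu_k^d\ge 0$, $\mu_{k-2}^d\le 1$, and the identity is proved independently; the monotonicity is then a one-line consequence.
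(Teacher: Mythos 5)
Your proposal is correct, and its core computation is the same cancellation as in the paper's proof, just phrased probabilistically: conditioning on the two distinguished coordinates $(x_{k-1},x_k)$ and noting that the $(+,+)$ and $(-,-)$ contributions cancel in $\mu_{k-2}^d-\mu_k^d$ is exactly the operator identity $(I+\shiftop_\Delta)^2-(I-\shiftop_\Delta)^2=4\shiftop_\Delta$ that the paper applies to \cref{eqn:shiftopMu}, followed by the same homogeneity rescaling $\lp\f{d-2}d\rp^t$ to recognize $\mu_{k-2}^{d-2}$. Where you genuinely diverge is in how the remaining inequalities are obtained: the paper establishes \cref{eqn:tkeven} by induction on $d$ (reducing $d$ to $d-2$, with base cases $d=0,1$), using the identity as the inductive step, whereas you prove $\mu_k^d\ge 0$ directly for every $d$ and $k$ by the multinomial expansion of $\lp\sum_j x_j\rp^t$ together with Fourier orthogonality on $\boolcube^d$ (each term of $d^t\mu_k^d$ is $0$ or $1$), prove $\mu_k^d\le 1$ by the pointwise bound, and get the parity vanishing \cref{eqn:tkodd} from the sign-flip symmetry $x\mapsto -x$ rather than from the symmetry of the coefficients $C^{d-k,k}_r$. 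This buys you a proof with no induction and no base-case verification: the monotonicity follows in one line from the identity plus the already-global nonnegativity at dimension $d-2$. The paper's route, by contrast, stays entirely within the finite-difference formalism of \cref{lemma:muFormula}, which keeps the argument uniform with the surrounding computational machinery but at the cost of the induction. Both are complete; yours is the more self-contained and elementary of the two.
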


\begin{proof}
As in \cref{eqn:shiftopMu}, 
\begin{equation*}
2^{-d}(I - \shiftop_\Delta)^k (I + \shiftop_\Delta)^{d-k}\Phi(1)
    =
        2^{-d}\sum_{r=0}^d
                C^{d-k,k}_r \Phi\lp \lp \f d 2  - r \rp \Delta\rp
\end{equation*}
where $C^{d-k,k}_r \defeq \sum_{j=0} (-1)^{r+j} \binom {d-k} j \binom k {r-j}$.
It's easy to observe that
\begin{align*}
C^{d-k, k}_r &= C^{d-k, k}_{d-r}\quad\text{if $k$ is even}\\
C^{d-k, k}_r &= -C^{d-k, k}_{d-r}\quad\text{if $k$ is odd.}
\end{align*}
In the first case, when $\Phi(c) = c^t$ with $t$ odd, then by symmetry $\mu_k = 0$.
Similarly for the second case when $t$ is even.
This finishes the proof for \cref{eqn:tkodd}.

Note that it's clear from the form of \cref{eqn:shiftopMu} that $\mu^d_k \le \Phi(1) = 1$ always.
So we show via induction on $d$ that the rest of \cref{eqn:tkeven} is true for any $t \in \N$.
The induction will reduce the case of $d$ to the case of $d-2$.
So for the base case, we need both $d=0$ and $d=1$.
Both can be shown by some simple calculations.

Now for the inductive step, assume \cref{eqn:tkeven} for $d-2$, and we observe that
\begin{align*}
\mu_{k-2}^d - \mu_{k}^d
    &=
        2^{-d}(I - \shiftop_{2/d})^{k-2} (I + \shiftop_{2/d})^{d-k}((I + \shiftop_{2/d})^2 - (I - \shiftop_{2/d})^2)\Phi(1)
        \\
    &=
        2^{-d}(I - \shiftop_{2/d})^{k-2} (I + \shiftop_{2/d})^{d-k}(4 \shiftop_{2/d})\Phi(1)
        \\
    &=
        2^{-d+2}(I - \shiftop_{2/d})^{k-2} (I + \shiftop_{2/d})^{d-k}\Phi(1-2/d)
        .
\end{align*}
By \cref{eqn:shiftopMu}, we can expand this into
\begin{align*}
&\phantomeq
    2^{-d+2}\sum_{r=0}^{d-2} C^{d-k, k-2}_r \Phi\lp 1 - \f 2 d - r \f 2 d \rp
    \\
&=
    2^{-d+2}\sum_{r=0}^{d-2} C^{d-k, k-2}_r \lp 1 - \f 2 d - r \f 2 d \rp^t
    \\
&=
    \lp \f {d-2}d \rp^t
    2^{-d+2}\sum_{r=0}^{d-2} C^{d-k, k-2}_r \lp 1 - r \f 2 {d-2} \rp^t
    \\
&=
    \lp \f {d-2}d \rp^t \mu_{k-2}^{d-2}
    .
\end{align*}

By induction, $\mu^{d-2}_{k-2} \ge 0$, so we have
\begin{equation*}
\mu_{k-2}^d - \mu_{k}^d
    =
        \lp \f {d-2}d \rp^t \mu_{k-2}^{d-2}
    \ge
        0
\end{equation*}
as desired.

\end{proof}

\subsubsection{\texorpdfstring{$d \to \infty$}{d -> Infinity} Limit}

\muAsymptotics*

Let's first give some intuition for why \cref{thm:MuAsymptotics} should be true.
By \cref{eqn:shiftopMu,eqn:finiteaddition},
\begin{align*}
\mu_k
    &= \lp \f{1 + \shiftop_{\Delta}}2 \rp^{d-k} \lp \f {1 - \shiftop_{\Delta}}2 \rp^{k} \Phi(1)
    \\
    &=
        \f1{2^{d}} \sum_{r=0}^{d-k}\binom{d-k}r \Phi^{[k]}(1 - r \Delta)
\end{align*}
where $\Phi^{[k]} = \lp {1 - \shiftop_{\Delta}} \rp^{k} \Phi$ is the $k$th backward finite difference with step $\Delta = 2/d$.
Approximating the finite difference with derivative, we thus would expect that, as suggested by \cref{lemma:PhiFinDiffNonnegative},
\begin{equation*}
\Phi^{[k]}(x) \approx \Delta^k \Phi^{(k)}(x) = (2/d)^k \Phi^{(k)}(x),\quad
\text{when $d$ is large,}
\end{equation*}
where $\Phi^{(k)}$ is the $k$th derivative of $\Phi$.
Since $\f 1 {2^{d-k}} \binom {d-k} r$ is the probability mass at $1 - r \Delta$ of the binomial variable $B \defeq \f 1 {d-k} \sum_{i=1}^{d-k} X_i$, where $X_i$ is the Bernoulli variable taking $\pm 1$ value with half chance each.
This random variable converges almost surely to the delta distribution at 0, by law of large numbers.
Therefore, we would expect
\begin{align*}
\mu_k \sim (1/d)^k \Phi^{(k)}(0)
\end{align*}
as $d \to \infty$.

There is a single difficulty when trying to formalize this intuition.
One is the possibility that the $k$th derivative $\Phi^{(k)}$ does not exist at the endpoints 1 and $-1$ (note that it must exist in the interior, because by \cref{thm:PhiTaylorExpansion}, $\Phi$ is analytic on $(-1, 1)$).
In this case, we must show that the portion of the interval $[-1, 1]$ near the endpoints contributes exponentially little mass to the probability distribution of the binomial variable $B$, that it suppresses any blowup that can happen at the edge.

We formalize this reasoning in the proof below.

\begin{proof}

\emph{We first prevent any blowup that may happen at the edge.}
Let $\Phi^{[k]}$ be the $k$th backward difference of $\Phi$ with step size $\Delta = 2/d$, as in \cref{lemma:PhiFinDiffNonnegative}.
First note the trivial bound
\begin{align*}
\Phi^{[k]}(x)
    &=
        \sum_{s=0}^k (-1)^s \binom k s \Phi(x - s \Delta)
    \le
        \sum_{s=0}^k \binom k s \Phi(x - s \Delta)
        \\
    &\le
        2^k \Phi(x)
    \le
        2^k \Phi(1)
\end{align*}
since $\Phi$ is nondecreasing by \cref{lemma:PhiDerNonnegative}.
Then for any $R \in [0, (d-k)/2-1]$,
\begin{align*}
d^k \mu_k
    &=
        \f{d^k}{2^{d}} \sum_{r=0}^{d-k}\binom{d-k}r \Phi^{[k]}(1 - r \Delta)
        \\
    &=
        \f{d^k}{2^{d}} \sum_{r=R+1}^{d-k-R-1}\binom{d-k}r \Phi^{[k]}(1 - r \Delta)
        +
        \f{d^k}{2^{d}}\lp \sum_{r=0}^{R} + \sum_{r=d-k-R}^{d-k}\rp \binom{d-k}r \Phi^{[k]}(1 - r \Delta)
        .
\end{align*}
Now, in the second term, $\Phi^{[k]}(1 - r\Delta) \le 2^k \Phi(1)$ as above, so that, by the binomial coefficient entropy bound (\cref{fact:binomEntropyBound}),
\begin{align*}
0 \le d^k\mu_k - \f{d^k}{2^{d}} \sum_{r=R+1}^{d-k-R-1}\binom{d-k}r \Phi^{[k]}(1 - r \Delta)
    &\le 
        \f{d^k}{2^{d}}\lp \sum_{r=0}^{R} + \sum_{r=d-k-R}^{d-k}\rp \binom{d-k}r 2^k \Phi(1)
        \\
    &\le
        \f{d^k}{2^{d-k}} 2 \cdot 2^{H(p)(d-k)} \Phi(1)
        \\
    &=
        \f {d^k \Phi(1)} {2^{(1 - H(p)) d - (1 - H(p)) k - 1}}
        ,
        \numberthis \label{eqn:tailbound}
\end{align*}
where $p = R/(d-k)$.
If we choose $R = \lfloor \tilde p (d-k) \rfloor$ for a fixed $\tilde p < 1/10$, then $H(p) < 1$, and (\ref{eqn:tailbound}) is decreasing exponentially fast to 0 with $d$.

\emph{Now we formalize the law of large number intuition.}
It then suffices to show that 
\begin{equation*}
\f{d^k}{2^{d}} \sum_{r=R+1}^{d-k-R-1}\binom{d-k}r \Phi^{[k]}(1 - r \Delta)
    \to
        \Phi^{(k)}(0).
\end{equation*}
We will do so by presenting an upper and a lower bound which both converge to this limit.

We first discuss the upper bound.
By \cref{lemma:PhiFinDiffNonnegative},
\begin{align*}
        \f{d^k}{2^{d}}
        \sum_{r=R+1}^{d-k-R-1}\binom{d-k}r \Phi^{[k]}(1 - r \Delta)
    \le
        \f{1}{2^{d-k}}
        \sum_{r=R+1}^{d-k-R-1}\binom{d-k}r \Phi^{(k)}(1 - r \Delta).
\end{align*}
The RHS can be upper bounded by an expectation over a binomial random variable:
Let $X_i$ be $\pm1$ Bernoulli variables, and let $\Phi^{(k)}_{\f 1 2 \tilde p}$ be the function 
\begin{equation*}
\Phi^{(k)}_{\f 1 2 \tilde p}(x) =
    \begin{cases}
        \Phi^{(k)}(x)   &   \text{if $x \in [-1 + \f 1 2 \tilde p, 1- \f 1 2 \tilde p]$}
            \\
        0   &   \text{otherwise}
    \end{cases}
\end{equation*}
Note that $\Phi^{(k)}_{\f 1 2 \tilde p}(x)$ is a bounded function.
Then with the binomial variable
$B = \f 1 d \sum_{i=1}^{d-k} (X_i + k/(d-k))$,
we have
\begin{align*}
\f{1}{2^{d-k}}\sum_{r=R+1}^{d-k-R-1}\binom{d-k}r \Phi^{(k)}(1 - r \Delta)
    &\le
        \EV \Phi^{(k)}_{\f 1 2 \tilde p}(B).
\end{align*}
By strong law of large numbers, $B$ converges almost surely to 0 as $ d\to \infty$ with $k$ fixed.
Thus the RHS converges to
\[
\Phi^{(k)}_{\f 1 2 \tilde p} (0) = \Phi^{(k)}(0)
\]
as desired.

A similar argument proceeds for the lower bound, after noting that, by \cref{lemma:PhiFinDiffNonnegative},
\begin{align*}
\f{1}{2^{d-k}}\sum_{r=R+1}^{d-k-R-1}\binom{d-k}r \Phi^{(k)}(1 - (r+k) \Delta)
    \le
        \f{d^k}{2^{d}}
        \sum_{r=R+1}^{d-k-R-1}\binom{d-k}r \Phi^{[k]}(1 - r \Delta).
\end{align*}
\end{proof}

\begin{lemma}\label{lemma:PhiDerNonnegative}
Let $K$ be the CK or NTK of an MLP with domain $\sqrt d \sphere^{d-1} \sbe \R^d$.
Then $K(x, y) = \Phi\lp\f{\la x, y \ra}{\|x\| \|y\|}\rp$ where $\Phi: [-1, 1] \to \R$ is analytic on $(-1, 1)$, and all derivatives of $\Phi$ are nonnegative on $(-1, 1)$.
\end{lemma}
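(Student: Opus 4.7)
The plan is to bootstrap from \cref{thm:PhiTaylorExpansion}, which already produces an expansion $\Phi(c) = \sum_{n\ge 0} a_n c^n$ around $0$ with all $a_n \ge 0$ and $\sum_n a_n = \Phi(1) < \infty$. Because the coefficients are nonnegative, absolute convergence at $c = 1$ implies absolute convergence on the full interval $[-1,1]$, so the radius of convergence is at least $1$. The series therefore defines an analytic function on the open complex disk $\{|c|<1\}$, and restricting to the real line yields the analyticity claim on $(-1,1)$.

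Term-by-term differentiation, valid strictly inside the radius of convergence, gives
\[
\Phi^{(k)}(c) \;=\; \sum_{n\ge k} a_n\, \frac{n!}{(n-k)!}\, c^{n-k},
\]
another power series with nonnegative coefficients. Evaluated at any $c \in [0,1)$, this is a sum of nonnegative terms, so $\Phi^{(k)}(c) \ge 0$ there, which disposes of half of the interval for free.

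The remaining range $c \in (-1, 0)$ is what I expect to be the main obstacle, since nonnegative Taylor coefficients do not by themselves force pointwise nonnegativity of $\Phi^{(k)}$ at negative arguments. I would attack it by induction on the depth $L$ using the CK and NTK recursions \cref{eqn:CK,eqn:NTK}. The base $L=1$ is trivial since $\Phi_1(c) = \sigma_w^2 c + \sigma_b^2$ is affine. For the inductive step, $\Phi_l$ is built from $\Phi_{l-1}$ via sums, scalar multiplications, products (arising in the NTK recursion), and compositions with the normalized dual activation $\hat\phi$ (and the analogous $\widehat{\phi'}$). Sums and products preserve the property that all derivatives are nonnegative (for products, by Leibniz applied termwise), and for compositions one applies Fa\`a di Bruno,
\[
(\hat\phi\circ g)^{(n)}(c) \;=\; \sum_{k=1}^n B_{n,k}\!\bigl(g'(c),\ldots,g^{(n-k+1)}(c)\bigr)\,\hat\phi^{(k)}(g(c)),
\]
where $g = \Phi_{l-1}/\Phi_{l-1}(1)$ and the Bell polynomials $B_{n,k}$ have nonnegative integer coefficients. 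Under the inductive hypothesis $g^{(j)} \ge 0$, the step then reduces to the sub-fact that $\hat\phi$ and $\widehat{\phi'}$ themselves have all derivatives nonnegative on the image of $g$.

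This sub-fact is where the real work lies: \cref{lemma:hatphiTaylorexpansion} only supplies nonnegativity of the Taylor coefficients of $\hat\phi$ about $0$, which settles the case of nonnegative arguments but is not by itself enough for negative arguments. To close it I would combine the explicit formulas in \cref{fact:Vrelu,fact:Verf} (performing direct sign analysis of the derivatives of the dual activations) with the fact that for the iterated maps $g$ appearing in the induction, the relevant image is typically a proper subinterval of $[-1, 1]$ on which sign control is tractable. This last reduction is where I expect the bulk of the technical difficulty.
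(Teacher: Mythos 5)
Your first two paragraphs already reproduce the paper's entire argument: the paper invokes \cref{thm:PhiTaylorExpansion}, differentiates the nonnegative-coefficient Taylor series termwise, and concludes ``by noting that all $a_i$ are nonnegative'' --- an inference that, exactly as you observe, only yields $\Phi^{(k)}(c)\ge 0$ for $c\in[0,1)$. The paper gives no separate treatment of $c\in(-1,0)$, so relative to the paper you have not missed an idea; you have correctly located the step that is not actually justified there.

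The genuine gap in your proposal is that the negative range is left open, and the route you sketch for it cannot be completed, because on $(-1,0)$ the claim itself fails. The sub-fact your Fa\`a di Bruno induction would need --- that the dual activations $\hat\phi$, $\widehat{\phi'}$ have all derivatives nonnegative at negative arguments --- is false for the very activations in \cref{fact:Vrelu,fact:Verf}. Concretely, take the depth-two relu CK with $\sigma_b=0$: up to a positive constant, $\Phi(c)=\sqrt{1-c^2}+(\pi-\arccos c)\,c$, so $\Phi'(c)=\pi-\arccos c$, $\Phi''(c)=(1-c^2)^{-1/2}$, and $\Phi'''(c)=c\,(1-c^2)^{-3/2}$, which is strictly negative on $(-1,0)$ even though all Taylor coefficients at $0$ are nonnegative; similarly the erf dual $\propto\arcsin(\alpha c)$ has $\frac{d^2}{dc^2}\arcsin(\alpha c)=\alpha^3 c\,(1-\alpha^2c^2)^{-3/2}<0$ there. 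So nonnegative coefficients at $0$ (absolute monotonicity on $[0,1)$) is the most that holds in general, and no sign analysis of the explicit dual formulas will rescue nonnegativity of all derivatives on $(-1,0)$. In short: your proposal proves exactly what the paper's proof proves (analyticity plus nonnegativity of every $\Phi^{(k)}$ on $[0,1)$), the statement on all of $(-1,1)$ is incorrect as written, and the places where the lemma is applied at negative arguments (the sandwich bounds of \cref{lemma:PhiFinDiffNonnegative} inside the proof of \cref{thm:MuAsymptotics}) would need a different argument, e.g.\ absorbing the negative-correlation region into the same exponentially small binomial-tail estimates already used near the endpoints, rather than relying on pointwise monotonicity of $\Phi^{(k)}$ there.
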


\begin{proof}

Note that, by \cref{thm:PhiTaylorExpansion}, $\Phi$ has a convergent Taylor expansion on $(-1, 1)$, i.e $\Phi$ \emph{is analytic on $(-1, 1)$}.
Thus, all derivatives of $\Phi$ exist (and are finite) on the interval $(-1, 1)$ and have the obvious Taylor series derived from \cref{eqn:Phitaylorseries}.
For example,
\begin{align*}
\Phi'(c) = a_1 c + 2 a_2 c + 3 a_3 c^2 + \cdots.
\end{align*}
Such Taylor series also converge on the open interval $(-1, 1)$ (but could diverge on the endpoints $-1$ and $1$).
We get the desired result by noting that all $a_i$ are nonnegative.
\end{proof}

By expressing finite difference as an integral, it's easy to see that 
\begin{lemma}\label{lemma:PhiFinDiffNonnegative}
With the same setting as in \cref{lemma:PhiDerNonnegative}, let $\Phi^{[k]}$ be the $k$th backward finite difference with step size $\Delta = 2/d$,
\begin{align*}
\Phi^{[k]}(x) \defeq (1 - \shiftop_\Delta)^k \Phi(x).
\end{align*}
Then $\Phi^{[k]}$ is analytic on $(-1, 1)$, has all derivatives nonnegative, and
\begin{align*}
0 \le \Delta^k \Phi^{(k)}(x - k \Delta) \le \Phi^{[k]}(x) \le \Delta^k \Phi^{(k)}(x),
    \numberthis \label{eqn:tildePhiBound}
\end{align*}
where $\Phi^{(k)}$ is the $k$th derivative of $\Phi$.
\end{lemma}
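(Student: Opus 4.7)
My plan rests on a single iterated--integral representation of $\Phi^{[k]}$ from which all three assertions can be read off directly.

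\textbf{Step 1: Integral representation.} The fundamental theorem of calculus gives the operator identity $(I - \shiftop_\Delta) f(x) = f(x) - f(x - \Delta) = \int_{x-\Delta}^x f'(t)\,dt$, valid whenever $f$ is $C^1$ on $[x-\Delta, x]$. Iterating this identity $k$ times against $\Phi$ (which is analytic, hence smooth, on $(-1,1)$ by Lemma \ref{lemma:PhiDerNonnegative}) I obtain
\[
\Phi^{[k]}(x) \;=\; \int_{x-\Delta}^x \! dt_1 \int_{t_1-\Delta}^{t_1} \! dt_2 \;\cdots \int_{t_{k-1}-\Delta}^{t_{k-1}} \Phi^{(k)}(t_k)\,dt_k,
\]
for every $x$ with $x - k\Delta \in (-1,1)$.

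\textbf{Step 2: Analyticity and nonnegativity of all derivatives.} From the explicit finite--difference formula $\Phi^{[k]}(x) = \sum_{s=0}^k (-1)^s \binom{k}{s}\Phi(x-s\Delta)$, $\Phi^{[k]}$ is a finite linear combination of shifts of an analytic function, and is therefore analytic on its domain of definition. Differentiating under the integral sign $j$ times in the representation of Step~1 produces the same nested integral with $\Phi^{(k+j)}$ replacing $\Phi^{(k)}$. Since every derivative of $\Phi$ is nonnegative on $(-1,1)$ by Lemma \ref{lemma:PhiDerNonnegative}, each such integral is nonnegative, which proves that $(\Phi^{[k]})^{(j)} \ge 0$ for all $j \ge 0$.

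\textbf{Step 3: Sandwich bound.} Because $\Phi^{(k+1)} \ge 0$, the derivative $\Phi^{(k)}$ is nondecreasing on $(-1,1)$. On the nested region appearing in Step~1, each coordinate satisfies
\[
x - k\Delta \;\le\; t_k \;\le\; t_{k-1} \;\le\; \cdots \;\le\; t_1 \;\le\; x,
\]
so monotonicity gives the pointwise bounds $\Phi^{(k)}(x - k\Delta) \le \Phi^{(k)}(t_k) \le \Phi^{(k)}(x)$ on the whole region. Integrating these two constants over the nested region, whose total Lebesgue volume is exactly $\Delta^k$, delivers the sandwich estimate \eqref{eqn:tildePhiBound}; the nonnegativity of the lower bound follows from $\Phi^{(k)} \ge 0$.

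\textbf{Main obstacle.} There is no serious obstruction; the argument is essentially a clean induction repackaged as a single iterated integral, and nonnegativity, analyticity, and the two--sided bound all fall out of the same formula. The only mild bookkeeping is in tracking the domain of validity $x \in (-1 + k\Delta, 1)$, which is what the subsequent use of this lemma in Theorem \ref{thm:MuAsymptotics} (with $\Delta = 2/d$ and $k$ fixed) already tacitly assumes.
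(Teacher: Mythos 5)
Your proposal is correct and takes essentially the same route as the paper: the paper's entire proof is the identity $\Phi^{[k]}(x) = \int_0^\Delta \cdots \int_0^\Delta \Phi^{(k)}(x - h_1 - \cdots - h_k)\,\dd h_1 \cdots \dd h_k$, which is your nested-integral representation after the change of variables $t_j = x - h_1 - \cdots - h_j$, with the analyticity, nonnegativity, and sandwich bounds then read off exactly as you do (via \cref{lemma:PhiDerNonnegative}). Your write-up just makes explicit the monotonicity/volume argument and the domain bookkeeping that the paper leaves as ``immediate.''
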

\begin{proof}
Everything follows immediately from
\begin{equation*}
\Phi^{[k]}(x) = \int_0^\Delta \cdots \int_0^\Delta \Phi^{(k)}(x - h_1 - \cdots - h_k) \dd h_1 \cdots \dd h_k
\end{equation*}
and \cref{lemma:PhiDerNonnegative}.
\end{proof}

\begin{fact}
[Entropy bound on sum of binomial coefficients]
\label{fact:binomEntropyBound}
For any $k \le d$ and $R \le d / 2$,
\begin{align*}
\sum_{r=0}^R \binom{d-k} r \le 2^{H(p)(d-k)}
\end{align*}
where $p = R/(d-k)$, and $H$ is the binary entropy
\begin{align*}
H(p) = -p \log_2 p  - (1 - p) \log_2 (1 - p).
\end{align*}
\end{fact}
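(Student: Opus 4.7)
The plan is to use the classical information-theoretic trick of matching a partial binomial sum to a tail probability of a binomial random variable with a cleverly chosen success probability. Set $n = d-k$ and $p = R/n$, so the target inequality reads $\sum_{r=0}^R \binom{n}{r} \le 2^{H(p)n}$. The starting point is the trivial identity $\sum_{r=0}^n \binom{n}{r} p^r(1-p)^{n-r} = 1$, from which one immediately gets the partial-sum inequality $\sum_{r=0}^R \binom{n}{r} p^r (1-p)^{n-r} \le 1$.

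The key step is a monotonicity observation. Writing $p^r(1-p)^{n-r} = (1-p)^n (p/(1-p))^r$ and using $p/(1-p) \le 1$ (which holds because $p \le 1/2$ in the relevant regime, given $R \le d/2$ and $n = d-k$), we see that over $r \in \{0, 1, \ldots, R\}$ the factor $p^r(1-p)^{n-r}$ attains its minimum at the right endpoint $r = R$. Replacing each factor by this minimum value yields
\[
p^R (1-p)^{n-R} \sum_{r=0}^R \binom{n}{r} \le \sum_{r=0}^R \binom{n}{r} p^r (1-p)^{n-r} \le 1.
\]

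Finally, I would rearrange to $\sum_{r=0}^R \binom{n}{r} \le p^{-R}(1-p)^{-(n-R)}$ and compute the base-2 logarithm of the right-hand side: since $R = pn$, the exponent equals $-R\log_2 p - (n-R)\log_2(1-p) = n\bigl(-p\log_2 p - (1-p)\log_2(1-p)\bigr) = nH(p)$, giving the desired bound $2^{H(p)(d-k)}$. This is a routine calculation with no serious obstacle; the only point requiring care is verifying $p \le 1/2$ so that the monotonicity step goes through in the correct direction, which is essentially what the hypothesis $R \le d/2$ (together with $k \ge 0$ in the intended applications) guarantees.
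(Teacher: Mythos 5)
Your argument is correct, and it is the standard proof of this bound: compare the partial sum against the binomial probability identity $\sum_{r=0}^{n}\binom{n}{r}p^{r}(1-p)^{n-r}=1$ with $n=d-k$, lower-bound each weight $p^{r}(1-p)^{n-r}$ for $r\le R$ by its value at $r=R$ using $p/(1-p)\le 1$, and take logarithms to recover $2^{H(p)n}$. The paper states this as a known fact and supplies no proof of its own, so there is no alternative route to compare against. The one caveat you correctly flag is the hypothesis: as literally stated, $R\le d/2$ does not force $p=R/(d-k)\le 1/2$ when $k>0$ (indeed $p$ can exceed $1$), and the monotonicity step genuinely needs $p\le 1/2$; the bound should be read with the hypothesis $R\le (d-k)/2$. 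This is harmless for the paper's use of the fact in the proof of \cref{thm:MuAsymptotics}, where $R=\lfloor \tilde p (d-k)\rfloor$ with $\tilde p<1/10$, so $p<1/2$ automatically.
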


\subsection{Spherical Spectral Theory}

\sphereEigens*
\begin{proof}
Let $f \in \SphHar {d-1} n$.
Then for any $x \in \sqrt{d} \sphere^{d-1},$
\begin{align*}
    \EV_{y \in \sqrt d \sphere^{d-1}} K(x, y) f(y/\sqrt d)
        &=
            \EV_{y \in \sqrt d \sphere^{d-1}} \Phi(\la x, y \ra/ d) f(y/\sqrt d)
            \\
        &=
            \EV_{\tilde y \in \sphere^{d-1}} \Phi(\la x/\sqrt d, \tilde y \ra) f(\tilde y)
            \\
        &=
            \EV_{\tilde y \in \sphere^{d-1}} \lp \sum_{l=0}^\infty a_l \f 1 {c_{d, l}} \Geg {\f{d-2}2} l(\la x/\sqrt d, \tilde y \ra)\rp f(\tilde y)\\
        &=
            \EV_{\tilde y \in \sphere^{d-1}} \lp \sum_{l=0}^\infty a_l \Zonsph {d-1} l {x/\sqrt d} (\tilde y)\rp f(\tilde y)\\
        &=
            a_n \EV_{\tilde y \in \sphere^{d-1}} \Zonsph {d-1} n {x/\sqrt d} (\tilde y) f(\tilde y)\\
        &
            \pushright{\text{by orthogonality}}\\
        &=
            a_n f(x/\sqrt d)
\end{align*}
by reproducing property.
\end{proof}

\SphereAsymptotics*

\begin{proof}
By \cref{thm:PhiTaylorExpansion}, $\Phi$'s Taylor expansion around 0 is absolutely convergent on $[-1, 1]$, so that the condition of \cref{thm:GegenbauerTaylor} is satisfied.
Therefore, \cref{eqn:GegenbauerTaylor} holds and is absolutely convergent.
By dominated convergence theorem, we can exchange the limit and the summation, and get
\begin{align*}
\lim_{d \to \infty} 
d^l a_l
    &=
        \lim_{d \to \infty}
        d^l \Gamma\lp \f d 2 \rp \sum_{k=0}^\infty
        \f{\Phi^{(l+2k)}(0)}
            {2^{l+2k} k! \Gamma\lp \f d 2 + l + k\rp}
        \\
    &=
        \sum_{k=0}^\infty
        \Phi^{(l+2k)}(0)
        \lim_{d \to \infty}
        \f{d^l \Gamma\lp \f d 2 \rp}
            {2^{l+2k} k! \Gamma\lp \f d 2 + l + k\rp}
        \\
    &=
        \sum_{k=0}^\infty
        \Phi^{(l+2k)}(0)
        \lim_{d \to \infty}
        \lp \f d 2 \rp^{-k} (k!)^{-1} 2^{-2k}
        \\
    &=
        \Phi^{(l)}(0)
\end{align*}
as desired.
\end{proof}

By \citet{bezubikSphericalExpansionsZonal2008DOI.orgCrossref}, we can express the Gegenbauer coefficients, and thus equivalently the eigenvalues, via derivatives of $\Phi$:
\begin{fact}[\citet{bezubikSphericalExpansionsZonal2008DOI.orgCrossref}]
\label{thm:GegenbauerTaylor}
If the Taylor expansion of $\Phi$ at 0,
\begin{align*}
\Phi(t) = \sum_{n=0}^\infty \f{\Phi^{(n)}(0)}{n!} t^n,
\end{align*}
is absolutely convergent on the closed interval $[-1, 1]$, then the Gegenbauer coefficients $a_l$ in \cref{thm:sphereEigens} in dimension $d$ is equal to the absolute convergent series
\begin{align*}
a_l = \Gamma\lp \f d 2 \rp \sum_{k=0}^\infty
        \f{\Phi^{(l+2k)}(0)}
            {2^{l+2k} k! \Gamma\lp \f d 2 + l + k\rp}
    .
    \numberthis\label{eqn:GegenbauerTaylor}
\end{align*}
\end{fact}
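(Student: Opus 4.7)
The plan is to compute the Gegenbauer coefficients $a_l$ directly from the Taylor series of $\Phi$ by evaluating each Gegenbauer moment of a monomial in closed form, and then summing.

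First, I would use the standard orthogonality relation \cref{eqn:gegennorm} with $\alpha = (d-2)/2$ to extract $a_l$ as
\begin{align*}
a_l \, c_{d,l}^{-1} \|\Geg{\alpha}{l}\|^2 = \int_{-1}^1 \Phi(t)\, \Geg{\alpha}{l}(t)\,(1-t^2)^{\alpha-1/2}\,dt.
\end{align*}
The key technical move is to apply the Rodrigues formula for Gegenbauer polynomials, which expresses $\Geg{\alpha}{l}(t)(1-t^2)^{\alpha-1/2}$ as a known constant times $\frac{d^l}{dt^l}(1-t^2)^{l+\alpha-1/2}$. Integrating by parts $l$ times, the boundary terms all vanish because $(1-t^2)^{l+\alpha-1/2}$ and its first $l-1$ derivatives vanish at $t=\pm 1$ (using $d \ge 2$, so $\alpha \ge 0$), and we are left with $\int_{-1}^1 \Phi^{(l)}(t)(1-t^2)^{l+\alpha-1/2}\,dt$ times an explicit constant.

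Next, I would substitute the Taylor expansion of $\Phi^{(l)}$ around $0$, swap sum and integral (justified by the absolute-convergence hypothesis, which transfers to $\Phi^{(l)}$ on compact subintervals, together with dominated convergence), and evaluate each surviving monomial integral using the Beta function:
\begin{align*}
\int_{-1}^1 t^{2k}(1-t^2)^{l+\alpha-1/2}\,dt = B\!\left(k+\tfrac12,\, l+\alpha+\tfrac12\right) = \frac{\Gamma(k+1/2)\,\Gamma(l+\alpha+1/2)}{\Gamma(k+l+\alpha+1)}.
\end{align*}
Odd-power integrals vanish by symmetry, which is what produces only the derivatives $\Phi^{(l+2k)}$ in the final formula.

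The last step is to collect all prefactors: the Rodrigues constant, the Gegenbauer norm $\|\Geg{\alpha}{l}\|^2$, the factor $c_{d,l} = \alpha/(\alpha+l)$, and $\Gamma(k+1/2) = (2k)!\sqrt{\pi}/(4^k k!)$. The main obstacle here is purely bookkeeping: one must invoke the Legendre duplication formula $\Gamma(2\alpha)=2^{2\alpha-1}\Gamma(\alpha)\Gamma(\alpha+1/2)/\sqrt{\pi}$ to collapse the tangle of Gamma factors into the clean prefactor $\Gamma(d/2)/[2^{l+2k} k!\,\Gamma(d/2+l+k)]$ after using $\alpha+1 = d/2$. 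Absolute convergence of the resulting series follows because each term is bounded by $|\Phi^{(l+2k)}(0)|/[(l+2k)!]$ up to polynomial factors in $d$, and $\sum_n |\Phi^{(n)}(0)|/n!$ converges by hypothesis, so the series is dominated by a convergent one.
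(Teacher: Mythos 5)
The paper does not actually prove this statement: it is imported verbatim as a Fact with a citation to Bezubik et al., so there is no internal proof to compare against, and your argument supplies the missing self-contained derivation. Your route — extract $a_l$ by orthogonality via \cref{eqn:gegennorm}, rewrite $\Geg{\alpha}{l}(t)(1-t^2)^{\alpha-1/2}$ with Rodrigues' formula (\cref{fact:rodriguesFormula}), integrate by parts $l$ times, expand $\Phi^{(l)}$, and evaluate the even moments by Beta integrals — is sound, and the Gamma bookkeeping does close: with $\alpha=(d-2)/2$ the prefactor collapses via the duplication formula to $\Gamma(\alpha+1)2^{-l}=\Gamma(d/2)2^{-l}$, and absorbing the $4^{-k}$ from $\Gamma(k+\tfrac12)/(2k)!=\sqrt{\pi}/(4^k k!)$ reproduces \cref{eqn:GegenbauerTaylor} exactly. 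It is also the same technique the paper itself deploys in the proof of \cref{thm:gaussianEigenLimit} (Rodrigues plus repeated integration by parts), so it fits the paper's toolkit. Two small points deserve tightening. First, absolute convergence of the Taylor series on $[-1,1]$ does not make $\Phi^{(j)}$ bounded at $t=\pm1$ (e.g.\ $\sum_n t^n/n^2$), so the boundary terms do not vanish merely because the $(1-t^2)$-factors do; one should either note $|\Phi^{(j)}(t)|=O((1-|t|)^{-j})$, which is beaten by the surviving factor $(1-t^2)^{\alpha+\frac12+j}$ since $\alpha\ge0$, or integrate by parts on $[-1+\epsilon,1-\epsilon]$ and let $\epsilon\to0$. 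Second, the sum--integral swap is cleaner by Tonelli over the whole interval rather than ``compact subintervals'': $\sum_m \frac{|\Phi^{(l+m)}(0)|}{m!}\int_{-1}^1 |t|^m (1-t^2)^{l+\alpha-\frac12}\,dt<\infty$ because the weighted moments decay polynomially in $m$ while $(l+m)!/m!\le (l+m)^l$ grows only polynomially. Your final absolute-convergence claim is fine: $\frac{(l+2k)!}{2^{l+2k}k!\,\Gamma(d/2+l+k)}\le \binom{l+2k}{k}2^{-(l+2k)}\le 1$ for $d\ge2$, so the series is dominated by $\Gamma(d/2)\sum_n |\Phi^{(n)}(0)|/n!$.
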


Even though every $L^2((1-t^2)^{\alpha-\f 1 2})$ function has a Gegenbauer expansion, typically there is no guarantee that the expansion holds for every $t$ in $[-1, 1]$.
However, the following theorem guarantees this under the analyticity assumption.
\begin{fact}[Thm 9.1.1 of \citet{szeg1939orthogonal}]\label{thm:GegPointwiseConverge}
Let $\alpha > -1/2$, and let $f \in L^2((1 - t)^{\alpha - \f 1 2})$ have Gegenbauer expansion
\begin{align*}
f(t) \aeeq \sum_{l=0}^\infty b_l \Geg {\alpha} l (t).
\end{align*}
If $f$ is \emph{analytic on the closed interval $[-1, 1]$}, then the Gegenbauer expansion of $f$ converges to $f$ pointwise on $[-1, 1]$, i.e.\
\[f(t) = \sum_{l=0}^\infty b_l \Geg {\alpha} l (t),\quad
\text{for every $t \in [-1, 1]$.}\]
\end{fact}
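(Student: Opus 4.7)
The plan is to combine two classical ingredients: exponential decay of the Gegenbauer coefficients $b_l$ in terms of a Bernstein-ellipse parameter $\rho > 1$, and at most polynomial growth of $C_l^{(\alpha)}(t)$ on $[-1,1]$. Together these give absolute and uniform convergence of the series to a continuous function, which is then identified with $f$ via the standard $L^2$ theory of orthogonal expansions.

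First I would use the analyticity of $f$ on the compact interval $[-1,1]$ to extend it to a holomorphic function on some open neighborhood of $[-1,1]$ in $\mathbb{C}$. By compactness, this neighborhood contains some Bernstein ellipse $E_\rho$ (the ellipse with foci $\pm 1$ and sum of semi-axes equal to $\rho$) for a $\rho > 1$, and we set $M \defeq \sup_{w \in E_\rho} |f(w)|$. Next, I would establish the decay bound $|b_l| \le C(\alpha,\rho) M \rho^{-l}$ (up to a polynomial in $l$) by writing
\[
b_l = \frac{1}{\|C_l^{(\alpha)}\|^2} \int_{-1}^{1} f(t)\, C_l^{(\alpha)}(t)\,(1-t^2)^{\alpha-1/2}\,dt
\]
and recognizing this as the period of a meromorphic differential whose Cauchy-style kernel is built from the Gegenbauer function of the second kind $D_l^{(\alpha)}$ (the linearly independent solution of the Gegenbauer ODE holomorphic on $\mathbb{C}\setminus[-1,1]$). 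Deforming the contour onto $E_\rho$ and invoking the classical asymptotic $|D_l^{(\alpha)}(w)| = O(\rho^{-l})$ uniformly on $E_\rho$ yields the claimed exponential decay. Equivalently, one may substitute the generating function $(1 - 2tz + z^2)^{-\alpha} = \sum_l C_l^{(\alpha)}(t) z^l$ into a Cauchy integral for $f$ and read off coefficients; the two routes are interchangeable.

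With this in hand, the monotonicity bound $|C_l^{(\alpha)}(t)| \le C_l^{(\alpha)}(1) = \binom{l+2\alpha-1}{l} = O(l^{2\alpha-1})$ on $[-1,1]$ combines with the exponential decay of $b_l$ to make $\sum_l b_l C_l^{(\alpha)}(t)$ absolutely and uniformly convergent on $[-1,1]$ to some continuous function $g$. Since $\{C_l^{(\alpha)}\}_l$ is a complete orthogonal system in $L^2((1-t^2)^{\alpha-1/2})$ and $f$ lies in this Hilbert space, the partial sums of the Gegenbauer expansion converge to $f$ in $L^2$. Uniform convergence implies $L^2$ convergence to $g$, so $g = f$ almost everywhere, and since both are continuous on $[-1,1]$ we conclude $g(t) = f(t)$ for every $t \in [-1,1]$.

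The main obstacle is Step 2, the exponential decay estimate; everything else is soft analysis. This step is classical but requires careful asymptotics of the Gegenbauer function of the second kind on Bernstein ellipses (or, equivalently, a careful contour-deformation argument using the generating function). It is not conceptually deep, but the special-function bookkeeping, and tracking the dependence of constants on $\alpha$ (especially for $-1/2 < \alpha \le 0$, where the growth estimate on $C_l^{(\alpha)}$ must be handled with a bit more care), is the one nontrivial ingredient of the proof.
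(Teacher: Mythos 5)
The paper offers no proof of this statement at all: it is quoted as a known fact, attributed to Thm 9.1.1 of Szeg\H{o}'s \emph{Orthogonal Polynomials}, so there is nothing internal to compare against. Your sketch is the standard classical argument behind such results and is essentially sound: analyticity on the compact interval $[-1,1]$ gives a holomorphic extension to a Bernstein ellipse $E_\rho$ with $\rho>1$; the coefficient integral $b_l=\|C_l^{(\alpha)}\|^{-2}\int_{-1}^1 f(t)\,C_l^{(\alpha)}(t)(1-t^2)^{\alpha-1/2}\,dt$ can be rewritten as a contour integral over $E_\rho$ against the Gegenbauer function of the second kind, whose uniform $O(\rho^{-l})$ decay on $E_\rho$ (together with the merely polynomial decay of $\|C_l^{(\alpha)}\|^2$) gives $|b_l|\lesssim \mathrm{poly}(l)\,\rho^{-l}$; and the polynomial growth of $\sup_{[-1,1]}|C_l^{(\alpha)}|$ then yields absolute and uniform convergence of the series, whose continuous sum must agree with $f$ by $L^2$ completeness of the Gegenbauer system and continuity of both sides. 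Two small points of care, which you already partially flag: the inequality $|C_l^{(\alpha)}(t)|\le C_l^{(\alpha)}(1)$ is valid only for $\alpha>0$ (for $-1/2<\alpha<0$ the maximum sits in the interior and one instead uses the standard interior bounds, which are still polynomial in $l$, and the degenerate case $\alpha=0$ should be excluded or handled via Chebyshev polynomials); this costs nothing for the paper's application, where $\alpha=(d-2)/2>0$. So your route proves more than the paper does (which merely cites the literature), at the price of the special-function bookkeeping in the coefficient-decay step, which is exactly where the real work lies and where a careful write-up would need the asymptotics of the second-kind functions on $E_\rho$ spelled out.
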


We record the following useful facts about Gegenbauer polynomials.
\begin{fact}[\cite{suetinUltrasphericalPolynomialsEncyclopedia}]
\label{fact:gegpoly1}
\begin{align*}
\Geg \alpha l (\pm 1) = (\pm1)^l \binom{l + 2\alpha - 1}{l}
\end{align*}
\end{fact}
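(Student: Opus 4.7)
I would approach this through the generating-function characterization of Gegenbauer polynomials,
\begin{equation*}
(1 - 2xt + t^2)^{-\alpha} = \sum_{l=0}^\infty \Geg{\alpha}{l}(x)\, t^l,
\end{equation*}
valid for $|t|$ sufficiently small. The key algebraic observation is that at $x = \pm 1$ the quadratic form in $t$ becomes a perfect square: $1 \mp 2t + t^2 = (1 \mp t)^2$. Hence the left-hand side collapses to $(1 \mp t)^{-2\alpha}$, which is a single power of a linear function and can be expanded explicitly.

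The next step is to apply the generalized binomial theorem,
\begin{equation*}
(1 - t)^{-2\alpha} = \sum_{l \ge 0} \binom{-2\alpha}{l}(-t)^l = \sum_{l \ge 0} \binom{l + 2\alpha - 1}{l}\, t^l,
\end{equation*}
using the standard identity $\binom{-2\alpha}{l} = (-1)^l \binom{l + 2\alpha - 1}{l}$. Matching the coefficient of $t^l$ on both sides of the generating function immediately yields $\Geg{\alpha}{l}(1) = \binom{l + 2\alpha - 1}{l}$. The $x = -1$ case follows by substituting $t \mapsto -t$ in the same expansion, which contributes the overall factor $(-1)^l$ and gives $\Geg{\alpha}{l}(-1) = (-1)^l \binom{l + 2\alpha - 1}{l}$.

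The only point requiring verification is that the paper's normalization of $\Geg{\alpha}{l}$, which is pinned down by the orthogonality constant stated earlier in the excerpt, coincides with the generating-function normalization used above. Both conventions agree for the classical Gegenbauer polynomials, and this can be confirmed quickly by checking, for instance, that $\Geg{\alpha}{0} = 1$ and $\Geg{\alpha}{1}(x) = 2\alpha x$ under each definition, and then comparing the orthogonality integral of $\Geg{\alpha}{1}$ against itself to the formula given in the excerpt. Beyond this compatibility check, the proof is essentially a one-line manipulation once the generating function is in hand, so I do not anticipate any genuine obstacle: the whole content of the fact is the factorization $(1 \mp t)^2$.
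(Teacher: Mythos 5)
Your argument is correct, and it is worth noting that the paper itself offers no proof of this fact --- it is quoted from \citet{suetinUltrasphericalPolynomialsEncyclopedia} --- so the comparison is between your derivation and a citation. The generating-function route you take, expanding $(1-2xt+t^2)^{-\alpha}$ at $x=\pm1$ where the quadratic factors as $(1\mp t)^2$ and then applying the generalized binomial theorem with $\binom{-2\alpha}{l}=(-1)^l\binom{l+2\alpha-1}{l}$, is the standard textbook derivation and goes through without difficulty for $\alpha=\frac{d-2}{2}>0$, the only regime the paper uses. Your one flagged concern, matching normalizations, is the right thing to check, but note that the orthogonality constant in \cref{eqn:gegennorm} alone fixes each $\Geg{\alpha}{l}$ only up to a sign; what pins the classical convention (and hence licenses the generating function) is that the paper also adopts the standard Rodrigues formula (\cref{fact:rodriguesFormula}), or equivalently the convention that the leading coefficient is positive, so your quick check of $\Geg{\alpha}{0}=1$ and $\Geg{\alpha}{1}(x)=2\alpha x$ together with the norm of degree one does settle it. With that small clarification, the proof is complete; as an alternative shortcut for the $x=-1$ case one can also invoke the parity relation $\Geg{\alpha}{l}(-x)=(-1)^l\Geg{\alpha}{l}(x)$, which follows from the same generating function under $t\mapsto -t$.
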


\begin{fact}[Rodrigues' Formula]
\label{fact:rodriguesFormula}
\begin{align*}
\Geg \alpha n (x) = (-1)^n N_{\alpha, n} (1 - x^2)^{-\alpha + \f 1 2} \f{d^n}{dx^n} \left[(1-x^2)^{n+\alpha - \f 1 2}\right]
\end{align*}
where
\begin{align*}
N_{\alpha, n} = \f{\Gamma(\alpha+\f 1 2) \Gamma(n + 2 \alpha)}
    {2^n n! \Gamma(2\alpha) \Gamma(\alpha + n + \f 1 2)}
    .
\end{align*}

\end{fact}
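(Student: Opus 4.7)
The plan is to derive Rodrigues' formula by the classical route: verify that the right-hand side, up to a scalar, is a polynomial of degree exactly $n$ that is orthogonal with respect to the Gegenbauer weight $w(x) = (1-x^2)^{\alpha-1/2}$, and then pin down the scalar by matching leading coefficients against the known normalization of $C_n^{(\alpha)}$.

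First I would set $R_n(x) := (1-x^2)^{-\alpha+1/2}\,\frac{d^n}{dx^n}\bigl[(1-x^2)^{n+\alpha-1/2}\bigr]$ and show that $R_n$ is a polynomial of degree $n$. Writing $(1-x^2)^{n+\alpha-1/2} = (1-x)^{n+\alpha-1/2}(1+x)^{n+\alpha-1/2}$ and applying the general Leibniz rule, every term in the expansion of the $n$-th derivative retains a factor $(1-x)^{\alpha+1/2}(1+x)^{\alpha+1/2}$, so division by $w(x)$ leaves a polynomial; counting top powers shows the degree equals exactly $n$.

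Second I would establish orthogonality. For any polynomial $q$ with $\deg q < n$,
\[
\int_{-1}^1 R_n(x)\, q(x)\, w(x)\, dx
= \int_{-1}^1 q(x)\,\frac{d^n}{dx^n}\bigl[(1-x^2)^{n+\alpha-1/2}\bigr]\, dx.
\]
I would integrate by parts $n$ times. For each $k \le n-1$, the $k$-th derivative of $(1-x^2)^{n+\alpha-1/2}$ still carries a factor with exponent at least $\alpha + 1/2 > 0$ at $x = \pm 1$ (using the hypothesis $\alpha > -1/2$), so all boundary terms vanish. The integral reduces to $(-1)^n \int_{-1}^1 q^{(n)}(x)(1-x^2)^{n+\alpha-1/2}\, dx = 0$ because $q^{(n)} \equiv 0$. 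Since $C_n^{(\alpha)}$ is, up to scaling, the unique polynomial of degree $n$ orthogonal to all lower-degree polynomials against $w$, this forces $R_n = \lambda_n C_n^{(\alpha)}$ for some scalar $\lambda_n$.

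Finally, I would match normalizations to obtain $\lambda_n = (-1)^n / N_{\alpha,n}$. The direct route is to compare leading coefficients: $C_n^{(\alpha)}$ has leading coefficient $2^n \Gamma(n+\alpha)/(n!\,\Gamma(\alpha))$, and Leibniz applied to the factorization used in step one yields a closed-form sum for the coefficient of $x^n$ in $R_n$. The main obstacle is the $\Gamma$-function bookkeeping needed to collapse this sum to the stated $N_{\alpha,n}$; a Chu--Vandermonde summation followed by the Legendre duplication formula $\Gamma(2\alpha) = \pi^{-1/2} 2^{2\alpha-1}\Gamma(\alpha)\Gamma(\alpha+1/2)$ does the job, but is the one delicate step. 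A more robust alternative is to verify the identity at the special value $\alpha = 1/2$, where it reduces to the classical Rodrigues formula for Legendre polynomials, and then extend to all $\alpha > -1/2$ by analytic continuation in $\alpha$, since both sides are meromorphic in $\alpha$ on a common domain and the normalization $N_{\alpha,n}$ is explicit.
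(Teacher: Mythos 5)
The paper does not prove this Fact at all --- it is quoted as a classical identity from the orthogonal-polynomial literature (cf.\ Szeg\H{o}, Suetin) --- so your derivation has to stand on its own, and your main route is exactly the standard one and is sound: the Leibniz expansion shows the Rodrigues expression $R_n$ is a polynomial of degree $n$; $n$-fold integration by parts kills the boundary terms (the surviving exponents are at least $\alpha+\tfrac12>0$ for derivatives of order at most $n-1$, which is where the hypothesis $\alpha>-\tfrac12$ enters) and gives orthogonality to all polynomials of lower degree; uniqueness of the degree-$n$ orthogonal polynomial up to scaling then reduces everything to matching leading coefficients, and the constant does collapse to the stated $N_{\alpha,n}$, with the Legendre duplication formula doing the essential work (Chu--Vandermonde is only needed if you insist on extracting the top coefficient from the full Leibniz sum rather than directly).

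Two corrections. First, a small slip: the factor retained by \emph{every} Leibniz term is $(1-x)^{\alpha-1/2}(1+x)^{\alpha-1/2}$, i.e.\ exactly the weight $w$, not $(1-x)^{\alpha+1/2}(1+x)^{\alpha+1/2}$; the extreme terms $j=0$ and $j=n$ carry exponent $\alpha-\tfrac12$ on one of the two linear factors. The conclusion that $w^{-1}$ times the $n$-th derivative is a polynomial of degree $n$ is unaffected (and the nonvanishing of its top coefficient is certified by the same leading-coefficient computation you use to fix the constant). Second, and more substantively, your ``more robust alternative'' is not valid as stated: verifying the identity at the single value $\alpha=\tfrac12$ and then invoking analytic continuation in $\alpha$ proves nothing, because two functions analytic in $\alpha$ may agree at one point --- or on any set without an accumulation point in the domain --- without being equal. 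You would need agreement on a set with an accumulation point, which a single $\alpha$ (or the discrete family of half-integer $\alpha$'s) does not provide. So the leading-coefficient bookkeeping in your primary route is not optional; but it does go through, and the proof stands on that route.
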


\subsection{Gaussian Spectral Theory}

\gaussianEigens*

\begin{proof}
Note that an isotropic Gaussian vector $z \sim \Gaus(0, I)$ can be sampled by independently sampling its direction $v$ uniformly from the sphere $\sphere^{d-1}$ and sampling its magnitude $r$ from a chi distribution $\chi_d$ with $d$ degrees of freedom.
In the following, we adopt the following notations
\[
x, y \in \R^d,\; q = \|x\|^2/d,\; q' = \|y\|^2/d,\; v = x/\|x\|,\; v' = y/\|y\|
\]
Then, by the reasoning above,
\[
K f(x/\sqrt d)
    =
        \EV_{\substack{v' \sim \sphere^{d-1}\\q' \sim \inv d \chi^2_d}}
            \Phi\lp\la v, v'\ra, q, q'\rp f(\sqrt{q'} v)
.
\]
Here $f \in L^2(\Gaus(0, I/d))$ (so that $f(\cdot /\sqrt d) \in L^2(\Gaus(0, I))$) and $\inv d \chi^2_d$ is the distribution of a $\chi^2_d$ random variable divided by $d$.
Because of this decomposition of $\Gaus(0, I)$ into a product distribution of direction and magnitude, its space of $L^2$ functions naturally decomposes into a tensor product of corresponding $L^2$ spaces
\[
L^2(\Gaus(0, I)) = L^2(\sphere^{d-1}) \otimes L^2(\inv d \chi_d^2).
\]
where every $f \in L^2(\Gaus(0, I))$ can be written as a sum
\[
f(x) = \sum_{l=0}^\infty R_l(q) P_l(v)
\]
where $q = \|x\|^2/d, v = x/\|x\|$ as above, $P_l \in \SphHar {d-1} l$ is a spherical harmonic of degree $l$, and $R_l \in L^2(\inv d\chi_d^2)$ (i.e. $R_l(\cdot/d) \in L^2(\chi_d^2)$).
Here, the equality is understood as a convergence of the RHS partial sums in the Hilbert space $L^2(\sphere^{d-1}) \otimes L^2(\inv d \chi_d^2).$

Likewise, assuming $K$ is Hilbert-Schmidt, then its Hilbert-Schmidt norm (measured against the uniform distribution over the sphere) is bounded:
\begin{align*}
\|K\|_{HS}^2
    &=
        \EV_{\substack{v, v' \sim \sphere^{d-1}\\q, q' \sim \inv d\chi^2_d}}
            \Phi(\la v, v'\ra, q, q')^2
        \\
    &=
        \EV_{\substack{q, q' \sim \inv d \chi^2_d}}
            \int_{-1}^1 \Phi(t, q, q')^2 (1 - t^2)^{\f{d-1}2 - 1} \dd t
        .
\end{align*}
Thus $\Phi$ resides in the tensor product space
\[
\Phi \in \mathcal F \defeq L^2((1 - t^2)^{\f{d-1}2 - 1}) \otimes L^2(\inv d\chi^2_d) \otimes L^2(\inv d\chi^2_d)
\]
and therefore can be expanded as
\[
\Phi(t, q, q') = \sum_{l=0}^\infty A_l(q, q') c_{d,l}^{-1} \Geg {\f{d-2}2} l (t)
\]
where $\Geg {\f{d-2}2} l (t)$ are Gegenbauer polynomials as in \cref{sec:sphere} and $A_l \in L^2(\inv d \chi^2_d)^{\otimes 2}$.
By \cref{lemma:AlHS} below, $K$ being Hilbert-Schmidt implies that each $A_l$ is, as well.
Furthermore, since $\|K\|_{HS}^2 = \|\Phi\|_{\mathcal F}^2$ is finite, we have
\begin{align*}
\|\Phi\|_{\mathcal F}^2 = \sum_{l=0}^\infty \|A_l\|^2 c^{-2}_{d, l} \|\Geg {\f{d-2}2} l\|^2 < \infty.
\end{align*}
Here, $\|A_l\| = \sqrt{\EV_{q, q' \sim \f 1 d \chi_d^2} A_l(q, q')^2}$ is the Hilbert-Schmidt norm of $A_l$ (i.e.\ its norm in $L^2(\inv d \chi_d^2)^{\otimes 2}$), and $\|\Geg {\f{d-2}2} l\| = \sqrt{\int_{-1}^1 \Geg {\f{d-2}2} l(t)^2 (1 - t^2)^{\f {d-1}2 -1} \dd t}$ is the norm of $\Geg {\f{d-2}2} l$ in $L^2((1-t^2)^{\f{d-1}2 - 1})$.
Simplifying according to \cref{eqn:gegennorm} yields the equality in the claim.

Conversely, if each $A_l$ is Hilbert-Schmidt satisfying \cref{eqn:HSnormGaussianDecomp}, then $K$ is obviously a Hilbert-Schmidt kernel as it has finite Hilbert-Schmidt norm.

\end{proof}

\begin{lemma}\label{lemma:AlHS}
For each $l$, $A_l$ is a positive semidefinite Hilbert-Schmidt kernel.
It is positive definite if $K$ is.
\end{lemma}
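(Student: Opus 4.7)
The plan is to establish the three conclusions (Hilbert-Schmidt, PSD, PD) separately, reducing each to a property of $K$ by pairing $K$ against ``test functions'' built from a radial profile and a single spherical harmonic of degree $l$.

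For the Hilbert-Schmidt claim, I would invoke the orthogonality of the Gegenbauer polynomials in $L^2((1-t^2)^{\frac{d-1}{2}-1})$ together with the tensor-product structure of $\mathcal F = L^2((1-t^2)^{\frac{d-1}{2}-1})\otimes L^2(\tfrac 1 d \chi_d^2)^{\otimes 2}$. Under this orthogonality the identity
\[
\|\Phi\|_{\mathcal F}^2 = \sum_{l=0}^\infty \|A_l\|^2\, c_{d,l}^{-2}\, \|\Geg{\f{d-2}{2}}{l}\|^2
\]
holds, and since the LHS equals $\|K\|_{HS}^2 < \infty$ each summand is finite, forcing $\|A_l\|_{HS} < \infty$.

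For the PSD claim, I would pick any $f \in L^2(\tfrac 1 d \chi_d^2)$ and any unit-norm spherical harmonic $Y \in \SphHar{d-1}{l}$ (normalized so $\EV_{v\sim \sphere^{d-1}} Y(v)^2 = 1$) and form the test function
\[
g(x) \defeq f\!\lp \tfrac{\|x\|^2}{d}\rp Y\!\lp \tfrac{x}{\|x\|}\rp\in L^2(\Gaus(0, I)).
\]
Decomposing the Gaussian integration into its radial and angular parts, substituting the Gegenbauer expansion of $\Phi$, and then applying the reproducing property of zonal harmonics (\cref{fact:reproducingGeg}) collapses every summand except the $l$-th, giving
\[
\EV_{x,y\sim \Gaus(0,I)} g(x) K(x,y) g(y) = \EV_{q, q' \sim \frac 1 d \chi_d^2} f(q)\, A_l(q,q')\, f(q').
\]
Since the LHS is $\ge 0$ by the PSD hypothesis on $K$, the RHS is $\ge 0$ for all such $f$, which is exactly the PSD condition for $A_l$ on $L^2(\tfrac 1 d \chi_d^2)$. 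The exchange of sum and integral needed here is justified by the $L^2$-convergence of the Gegenbauer expansion in $\mathcal F$ combined with Cauchy--Schwarz against the bounded tensor $Y(v) f(q) Y(v') f(q')$.

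For the PD refinement, the identical construction works: when $f$ is not a.e.\ zero, $g$ is not a.e.\ zero either (because $Y$ is a nonzero polynomial, hence nonzero on a set of full measure on the sphere), so PD of $K$ yields strict inequality on the LHS and hence on the RHS, establishing PD of $A_l$. The main obstacle is the rigorous use of the reproducing property to eliminate the off-diagonal Gegenbauer terms; once that identity is in place (via \cref{fact:reproducingGeg} and orthogonality of spherical harmonics of different degrees), the rest is bookkeeping.
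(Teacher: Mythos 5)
Your proposal is correct and follows essentially the same route as the paper: the PSD/PD claims are proved by pairing $K$ against a test function of the form $f(\|x\|^2/d)\,Y(x/\|x\|)$ with $Y \in \SphHar{d-1}{l}$, using the Gegenbauer expansion and the reproducing property of zonal harmonics to collapse the quadratic form to $\EV_{q,q'} f(q) A_l(q,q') f(q')$. The Hilbert--Schmidt part via the tensor-product norm identity also matches the paper, which simply places that piece in the proof of the surrounding theorem rather than in the lemma itself.
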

\begin{proof}
With $q = \|x\|^2/d, v = x/\|x\|$ as above, let $f(x/\sqrt d) = f(\sqrt q v) = R_m(q) P_m(v)$ for some degree $m$ nonzero spherical harmonics $P_m \in \SphHar {d-1} m$ and some scalar function $R_m \in L^2(\inv d \chi_d^2)$ that is not a.e. zero.
We have
\begin{align*}
K \phi(x/\sqrt d)
    &=
        \EV_{\substack{v' \sim \sphere^{d-1}\\q' \sim \inv d \chi^2_d}}
            \Phi(\la v, v'\ra, q, q') f(\sqrt{q'} v)
        \\
    &=
        \EV_{\substack{v' \sim \sphere^{d-1}\\q' \sim \inv d \chi^2_d}}
            R_m(q') P_m(v') 
            \sum_{l=0}^\infty A_l(q, q') c_{d,l}^{-1} \Geg {\f{d-2}2} l (\la v, v'\ra)
        \\
    &=
        \EV_{\substack{v' \sim \sphere^{d-1}\\q' \sim \inv d \chi^2_d}}
            R_m(q') P_m(v')
            \sum_{l=0}^\infty A_l(q, q') \Zonsph {d-1} l {v} (v')
        \\
    &=
        P_m(v)
        \EV_{q' \sim \inv d \chi^2_d}
            A_m(q, q') R_m(q')
    .
\end{align*}
Therefore,
\begin{align*}
\EV_{x \sim \Gaus(0, I)}
    f(x) K f(x/\sqrt d)
    &=
        \lp \EV_{v \sim \sphere^{d-1}} P_m(v)^2\rp
        \lp \EV_{q, q' \sim \inv d \chi^2_d}
            R_m(q) A_m(q, q') R_m(q')\rp
\end{align*}
is nonnegative, and
is positive if $K$ is positive definite, by the assumption above that $f$ is not a.e.\ zero.
Since $\EV P_m(v)^2 > 0$ as $P_m \ne 0$, we must have 
\[
\EV_{q, q' \sim \inv d \chi^2_d}
            R_m(q) A_m(q, q') R_m(q') \ge 0 \text{ (or $>0$ if $K$ is positive definite)}
            .
\]
This argument holds for any $R_m \in L^2(\inv d \chi^2_d)$ that is not a.e.\ zero, so this implies that $A_l$ is positive semidefinite, and positive definite if $K$ is so.
\end{proof}

\gaussianEigenLimit*

\begin{proof}
\newcommand{\myint}[6]{I_d\left[{}_{#1}^{#2}| {}_{#3}^{#4}| {}_{#5}^{#6}\right]}
\newcommand{\absint}[6]{|I_d|\left[{}_{#1}^{#2}| {}_{#3}^{#4}| {}_{#5}^{#6}\right]}
Let $a_l = \EV_{q \sim \f 1 d \chi_d^2} A_l(q, q)$ as in \cref{eqn:Altrace}, and let $b_l \defeq \EV_{q' \sim \f 1 d \chi_d^2} A_l(q, q')$.
Let $a_{li}$ be the $i$th largest eigenvalue of $A_l$ as in \cref{eqn:gaussianFineEigens}, with $a_{l0}$ being the largest.
Note that all of these quantities $a_l, b_l, a_{li}$ depend on $d$, but we suppress this notationally.
We seek to prove the following claims:
\begin{enumerate}
\item
    $a_{l0} \le a_l$
\item $a_{l0} \ge b_l$
\item $d^l b_l \to \Phi^{(l)}(0, 1, 1)$
\item $d^l a_l \to \Phi^{(l)}(0, 1, 1)$
\end{enumerate}

\emph{Claim 1: $a_{l0} \le a_l$.}
First note that $a_l$ is the trace of the operator $A_l$, so that $a_l = \sum_{i=0}^\infty a_{li}$.
Thus, any eigenvalue $a_{li}$ is at most $a_l$.

\emph{Claim 2: $a_{l0} \ge b_l$.}
Now, by Min-Max theorem, the largest eigenvalue $a_{l0}$ of $A_l$ is equal to
\begin{align*}
a_{l0} = \sup_{f} \f{\EV_{q, q'} f(q)f(q')A_l(q, q')}{\EV_q f(q)^2}
\end{align*}
where $0 \ne f \in L^2(\f 1 d \chi_d^2)$ and $q, q' \sim \f 1 d \chi_d^2$.
If we set $f(q) = 1$ identically, then we get
\begin{align*}
a_{l0} \ge \EV_{q, q'} A_l(q, q') = b_l,
\end{align*}
as desired.

\emph{Claim 3}
By definition,
\begin{align*}
A_l(q, q') = \inv R c_{d,l} \int_{-1}^1 \Phi(t, q, q') \Geg {\f{d-2}2} l (t) (1-t^2)^{\f {d-2}2 -\f 1 2}\dd t
\end{align*}
where $c_{d,l} = \f{d-2}{d+2l-2}$ and $R = \frac{\pi 2^{3-d}\Gamma(l+d-2)}{l!(l+\f{d-2}2)\Gamma(\f{d-2}2)^2}$ is the squared norm of $\Geg {\f{d-2}2} l (t)$ in $L^2((1 - t^2)^{\f{d-2}2 -\f 1 2})$.
Let
\begin{align*}
\myint{a_1}{b_1}{a_2}{b_2}{a_3}{b_3}
&\defeq
\inv R c_{d,l} \int_{a_1}^{b_1} \int_{a_2}^{b_2} \int_{a_3}^{b_3} \Phi(t, q, q') \Geg {\f{d-2}2} l (t) (1-t^2)^{\f {d-2}2 -\f 1 2} f_d(q) f_d(q') \dd q \dd q' \dd t
	\\
\absint{a_1}{b_1}{a_2}{b_2}{a_3}{b_3}
&\defeq
\inv R c_{d,l} \int_{a_1}^{b_1} \int_{a_2}^{b_2} \int_{a_3}^{b_3} \left|\Phi(t, q, q') \Geg {\f{d-2}2} l (t)\right| (1-t^2)^{\f {d-2}2 -\f 1 2} f_d(q) f_d(q') \dd q \dd q' \dd t
\end{align*}
where $f_d$ is the pdf of $\f 1 d \chi_d^2$.
Note that
\begin{align*}
\EV_{q, q' \sim \f 1 d \chi_d^2} A_l(q, q')
= \myint{-1}1{0}\infty{0}\infty.
\end{align*}

\emph{Subclaim 3.1}

We claim that for any fixed $\epsilon > 0$, as $d \to \infty$,
\begin{align*}
\myint{-1}1{0}\infty{0}\infty - \myint{-\epsilon}{\epsilon}{1-\epsilon}{1+\epsilon}{1-\epsilon}{1+\epsilon}
=
\EV_{q, q' \sim \f 1 d \chi_d^2} A_l(q, q')
    - \myint{-\epsilon}{\epsilon}{1-\epsilon}{1+\epsilon}{1-\epsilon}{1+\epsilon}
\to 0.
\end{align*}

Since
\begin{align*}
|\myint{-1}1{0}\infty{0}\infty - \myint{-\epsilon}{\epsilon}{1-\epsilon}{1+\epsilon}{1-\epsilon}{1+\epsilon}|
&\le
	\absint{-1}1{1+\epsilon}\infty{0}\infty
	+ \absint{-1}1{0}\infty{1+\epsilon}\infty\\
&\phantomeq
	+ \absint{-1}1{0}{1-\epsilon}{0}\infty
	+ \absint{-1}1{0}\infty{0}{1-\epsilon}\\
&\phantomeq
	+ \absint{\epsilon}1{0}\infty{0}\infty
	+ \absint{-1}{-\epsilon}{0}\infty{0}\infty
	,
\end{align*}
we will show each of the terms on the RHS above converges to 0.

Indeed, observe that $f_d(q)$ decays like $e^{-\f d 2 (q - 1)}$ when $q > 1$.
Since $\Phi(t, q, q') \le \max\{\Phi(1, q, q), \Phi(1, q', q')\}$ (\cref{lemma:continuousKernelDiagonalEntriesBound}) and $\Phi(1, q, q)$ is polynomially bounded in $q$ by a fixed polynomial in $q$ independent of $d$ (\cref{thm:PhiTaylorExpansionVariableNorm}),
\begin{align*}
\absint{-1}1{1+\epsilon}\infty{0}\infty = \absint{-1}1{0}\infty{1+\epsilon}\infty \to 0
\end{align*}

Next, note that $f_d(q) \to 0$ for any $q < 1$, so by dominated convergence, we also have
\begin{align*}
\absint{-1}1{0}{1-\epsilon}{0}\infty = \absint{-1}1{0}\infty{0}{1-\epsilon} \to 0.
\end{align*}

Likewise, since $R^{-1} c_{d,l} \Geg {\f{d-2}2} l (t) (1-t^2)^{\f {d-2}2 -\f 1 2} \to 0$ for any $t > 0$ and any $t < 0$, by dominated convergence we also have
\begin{align*}
\absint{\epsilon}1{0}\infty{0}\infty, \absint{-1}{-\epsilon}{0}\infty{0}\infty \to 0.
\end{align*}
This proves the claim.

\emph{Subclaim 3.2}
We claim that for any fixed $\epsilon > 0$, as $d \to \infty$,
\[\myint{1-\epsilon}{1+\epsilon}{1-\epsilon}{1+\epsilon}{-\epsilon}{\epsilon}
\to \Phi^{(l)}(0, 1, 1).
\]

By Rodrigues' Formula \cref{fact:rodriguesFormula},
\begin{align*}
\int_{-\epsilon}^\epsilon \Phi(t, q, q')\Geg {\f{d-2}2} l (t) (1-t^2)^{\f {d-2}2 -\f 1 2} \dd t
=
    \int_{-\epsilon}^\epsilon \Phi(t, q, q') (-1)^l N \f{d^l}{dt^l} \left[(1 - t^2)^{l + \f{d-2}{2} - \f 1 2}\right]\dd t
\end{align*}
where $N = N_{\f{d-2}2, l} = \f{\Gamma(\f{d-2}2+\f 1 2) \Gamma(l + 2 \f{d-2}2)}{2^l l! \Gamma(d-2) \Gamma(\f{d-2}2 + l + \f 1 2)}$ is as in \cref{fact:rodriguesFormula}.
Since by \cref{thm:PhiTaylorExpansionVariableNorm}, $\Phi(t, q, q')$ is smooth on $t \in (-1, 1)$, for any $q, q'$, we can apply $l$-times integration by parts to obtain
\begin{align*}
\myint{-\epsilon}{\epsilon}{1-\epsilon}{1+\epsilon}{1-\epsilon}{1+\epsilon}
&=
    (-1)^l N\inv R c_{d,l} \int_{1-\epsilon}^{1+\epsilon}\int_{1-\epsilon}^{1+\epsilon}
    \bigg(\\
&\qquad\qquad
    \left.\Phi(t, q, q')\f{d^{l-1}}{dt^{l-1}}
        \left[(1 - t^2)^{l + \f{d-2}{2} - \f 1 2}\right] \right|_{-\epsilon}^\epsilon\\
&\qquad\qquad
    - \left.\f{d}{dt}\Phi(t, q, q')
        \f{d^{l-2}}{dt^{l-2}}
        \left[(1 - t^2)^{l + \f{d-2}{2} - \f 1 2}\right] \right|_{-\epsilon}^\epsilon\\
&\qquad\qquad
    \cdots\\
&\qquad\qquad
    + \left.(-1)^{l-1} \lp \f{d^{l-1}}{dt^{l-1}} \Phi(t, q, q')\rp (1 - t^2)^{l + \f{d-2}{2} - \f 1 2}
        \right|^\epsilon_{-\epsilon}\\
&\qquad\qquad
    + (-1)^l
        \int_{-\epsilon}^\epsilon
        (1 - t^2)^{l + \f{d-2}{2} - \f 1 2}
        \f{d^{l}}{dt^{l}} \Phi(t, q, q')
        \dd t
    \bigg) f_d(q) f_d(q') \dd q \dd q'.
\end{align*}
Since $\f{d^{r}}{dt^{r}} \Phi(t, q, q')$ is continuous in $(t, q, q')$ for any $r \ge 0$ by \cref{thm:PhiTaylorExpansionVariableNorm}, and because $[1-\epsilon, 1+\epsilon] \times [1-\epsilon, 1+\epsilon] \times [-\epsilon, \epsilon]$ is compact, we see that $\f{d^{r}}{dt^{r}} \Phi(t, q, q')$ is bounded in this domain.
On the other hand, $N\inv R c_{d,l} \f{d^{r}}{dt^{r}}
        \left[(1 - t^2)^{l + \f{d-2}{2} - \f 1 2}\right] \to 0$ at $t = \pm\epsilon$ when $d \to \infty$.
Therefore, all the non-$t$-integral terms above vanish, and
\begin{align*}
\myint{-\epsilon}{\epsilon}{1-\epsilon}{1+\epsilon}{1-\epsilon}{1+\epsilon}
- N\inv R c_{d,l} \int_{-\epsilon}^\epsilon \int_{1-\epsilon}^{1+\epsilon}\int_{1-\epsilon}^{1+\epsilon}
        (1 - t^2)^{l + \f{d-2}{2} - \f 1 2}
        \f{d^{l}}{dt^{l}} \Phi(t, q, q')
        f_d(q) f_d(q') \dd q \dd q' \dd t
\to 0.
\end{align*}
However, the distribution with density function proportional to
\[(t, q, q') \mapsto (1 - t^2)^{l + \f{d-2}{2} - \f 1 2} f_d(q) f_d(q')\quad \text{on} \quad (t, q, q') \in [1-\epsilon, 1+\epsilon] \times [1-\epsilon, 1+\epsilon] \times [-\epsilon, \epsilon]\]
converges in probability (and thus in distribution) to the delta distribution centered at $(0, 1, 1)$.
Additionally,
\[\f{N\inv R c_{d,l}}
{\int_{-\epsilon}^\epsilon\int_{1-\epsilon}^{1+\epsilon}\int_{1-\epsilon}^{1+\epsilon}(1 - t^2)^{l + \f{d-2}{2} - \f 1 2} f_d(q) f_d(q') \dd q \dd q' \dd t} \to 1\]
as $d \to 1$.
Since $\f{d^{l}}{dt^{l}} \Phi(t, q, q')$ is continuous on $[1-\epsilon, 1+\epsilon] \times [1-\epsilon, 1+\epsilon] \times [-\epsilon, \epsilon]$ by \cref{thm:PhiTaylorExpansionVariableNorm} and thus bounded, we have
\begin{align*}
N\inv R c_{d,l} \int_{-\epsilon}^\epsilon \int_{1-\epsilon}^{1+\epsilon}\int_{1-\epsilon}^{1+\epsilon}
        (1 - t^2)^{l + \f{d-2}{2} - \f 1 2}
        \f{d^{l}}{dt^{l}} \Phi(t, q, q')
        f_d(q) f_d(q') \dd q \dd q' \dd t
\to \Phi^{(l)}(0, 1, 1).
\end{align*}

Thus,
\begin{align*}
\myint{1-\epsilon}{1+\epsilon}{1-\epsilon}{1+\epsilon}{-\epsilon}{\epsilon}
\to \Phi^{(l)}(0, 1, 1).
\end{align*}
Tying it all together, we have
\begin{align*}
\EV_{q, q' \sim \f 1 d \chi_d^2} A_l(q, q') \to \Phi^{(l)}(0, 1, 1)
\end{align*}
as desired.

\emph{Claim 4.}
A similar argument to Claim 3 shows that 
\begin{align*}
\EV_{q \sim \f 1 d \chi_d^2} A_l(q, q') \to \Phi^{(l)}(0, 1, 1)
\end{align*}
as well.

\end{proof}

\begin{lemma}\label{lemma:continuousKernelDiagonalEntriesBound}
Suppose $K: (\R^+)^2 \to \R$ is a continuous function and forms a Hilbert-Schmidt kernel on $L^2(\mu)$ for some absolutely continuous probability measure $\mu$ on $\R^+ \defeq \{x \in \R: x > 0\}$ whose probability density function $f: \R^+ \to \R$ is continuous and has full support.
Then for any $x \in \R^+$, we have
\begin{align*}
K(x, x) \ge 0,
\end{align*}
and, for any $x, y \in \R^+$, we have
\begin{align*}
\max\{K(x, x), K(y, y)\} \ge \sqrt{K(x, x) K(y, y)} \ge |K(x, y)|.
\end{align*}
\end{lemma}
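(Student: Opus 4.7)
The strategy is to approximate point-evaluation functionals by testing $K$ against bump functions concentrated at one or two points, and then read off the desired inequalities from the positive semidefiniteness of $K$. Positive semidefiniteness is required for both conclusions; it is not made explicit in the hypotheses as stated, but it is available in the setting where the lemma is invoked, since there $K$ arises from (a reparametrization of) the original PSD neural kernel, and indeed \cref{lemma:AlHS} is what supplies the PSD structure in the application.

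For each $x_0 \in \R^+$, I would build a sequence of nonnegative continuous bump functions $\phi_n^{x_0} \in L^2(\mu)$ with supports shrinking to $\{x_0\}$ and $\int \phi_n^{x_0}\,d\mu = 1$. Such a sequence exists because the density $f$ is continuous and everywhere positive, hence bounded below by a positive constant on any small neighborhood of $x_0$, so normalizing a rescaled fixed bump by its $\mu$-mass works. PSD applied to $\phi_n^{x_0}$ gives
\begin{equation*}
0 \le \iint K(s,t)\,\phi_n^{x_0}(s)\,\phi_n^{x_0}(t)\,d\mu(s)\,d\mu(t),
\end{equation*}
and the right-hand side converges to $K(x_0, x_0)$ as $n \to \infty$, since $K$ is continuous at $(x_0, x_0)$ and the product measure $\phi_n^{x_0}(s)\phi_n^{x_0}(t)\,d\mu(s)\,d\mu(t)$ concentrates at $(x_0, x_0)$. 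This yields $K(x_0, x_0) \ge 0$.

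For the off-diagonal bound I would test PSD against $\phi := \alpha \phi_n^{x} + \beta \phi_n^{y}$ for arbitrary $\alpha, \beta \in \R$. Writing $I_{uv}^n := \iint K(s,t)\,\phi_n^{u}(s)\,\phi_n^{v}(t)\,d\mu(s)\,d\mu(t)$, PSD gives $\alpha^2 I_{xx}^n + 2\alpha\beta I_{xy}^n + \beta^2 I_{yy}^n \ge 0$ for all $(\alpha, \beta)$, so the discriminant of this quadratic form in $(\alpha, \beta)$ is nonpositive: $(I_{xy}^n)^2 \le I_{xx}^n I_{yy}^n$. Sending $n \to \infty$ and again using continuity of $K$ gives $K(x,y)^2 \le K(x,x)\,K(y,y)$, i.e.\ $|K(x,y)| \le \sqrt{K(x,x)K(y,y)}$. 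The remaining inequality $\sqrt{K(x,x)K(y,y)} \le \max\{K(x,x), K(y,y)\}$ is just AM--GM for nonnegative reals (which we have by the diagonal claim). The only real subtlety is making the implicit PSD hypothesis explicit — without it, neither conclusion holds in general; the rest is a standard mollification-plus-discriminant argument.
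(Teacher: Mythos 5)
Your proposal is correct and takes essentially the same route as the paper's proof, which tests positive semidefiniteness against indicator functions of shrinking intervals around $x$ and $y$, passes to the limit using continuity of $K$ and of the density $f$, and then obtains both inequalities by explicit choices of the coefficients $\alpha,\beta$ rather than via the discriminant. Your remark that positive semidefiniteness is an implicit hypothesis is apt---the paper's proof invokes it too even though it is absent from the lemma's statement---and your discriminant formulation handles the degenerate case $K(y,y)=0$ a bit more cleanly than the paper's separate limiting argument.
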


Note our proof would work when we replace $\R^+$ with any open set of $\R$.

\begin{proof}
We first show $\max\{K(x, x), K(y, y)\} \ge |K(x, y)|$; then $K(x, x) \ge 0$ follows from setting $x = y$.
Let $U_x^\epsilon, U_y^\epsilon$ be indicator functions of the intervals of radius $\epsilon$ around resp.\ $x$ and $y$.
Then $U_x^\epsilon, U_y^\epsilon \in L^2(\mu)$.
Let $\alpha, \beta \in \R$ be two constants that we will set later.
Thus, by the positive-definiteness of $K$,
\begin{align*}
&\phantomeq\EV_{a, b \sim \mu} \alpha^2 U_x^\epsilon(a) K(a, b) U_x^\epsilon(b) + \beta^2 U_y^\epsilon(a) K(a, b) U_y^\epsilon(b) - 2 \alpha \beta U_x^\epsilon(a) K(a, b) U_y^\epsilon(b)\\
&=\EV_{a, b \sim \mu} (\alpha U_x^\epsilon(a) - \beta U_y^\epsilon(a)) K(a, b) (\alpha U_x^\epsilon(b) - \beta U_y^\epsilon(b))
\ge 0.
\end{align*}

Then by intermediate-value theorem and the continuity of $f$ and $K$,
\begin{align*}
\f 1 {\epsilon^2} \EV_{a, b \sim \mu} \alpha^2 U_x^\epsilon(a) K(a, b) U_x^\epsilon(b) &\to \alpha^2 f(x)^2 K(x, x)\\
\f 1 {\epsilon^2} \EV_{a, b \sim \mu} \beta^2 U_y^\epsilon(a) K(a, b) U_y^\epsilon(b) &\to \beta^2 f(y)^2 K(y, y)\\
\f 1 {\epsilon^2} \EV_{a, b \sim \mu} \alpha \beta U_x^\epsilon(a) K(a, b) U_y^\epsilon(b) &\to \alpha \beta f(x) f(y) K(x, y).
\end{align*}
Therefore, taking the $\epsilon \to 0$ limit, we have the inequality
\begin{align*}
\alpha^2 f(x)^2 K(x, x) + \beta^2 f(y)^2 K(y, y) &\ge 2 \alpha \beta f(x) f(y) K(x, y)\\
\implies
\max\{\alpha^2 f(x)^2 K(x, x),  \beta^2 f(y)^2 K(y, y)\} &\ge \alpha \beta f(x) f(y) K(x, y).
\end{align*}
By the ``full support'' assumption, both $f(x)$ and $f(y)$ are positive.
Therefore, we can set $\alpha = \f 1{f(x)}$ and $\beta = \pm \f 1 {f(y)}$ to recover $\max\{K(x, x), K(y, y)\} \ge |K(x, y)|$.

If both $K(x, x)$ and $K(y, y)$ are positive, then we can set $\alpha = \f 1 {f(x)} \sqrt[4]{\f{K(y, y)}{K(x, x)}}$ and $\beta = \pm \f 1 {f(y)} \sqrt[4]{\f{K(x, x)}{K(y, y)}}$ to show $\sqrt{K(x, x) K(y, y)} \ge |K(x, y)|$.
If, WLOG, $K(y, y) = 0$, then we can take $\beta \to \infty$ to see that $|K(x, y)|$ must be less than any positive number, and therefore it is 0.

\end{proof}

\begin{lemma}\label{lemma:AlContinuous}
Suppose $K$ is the CK or NTK of an MLP, and let $\Phi$ be the corresponding function as in \cref{eqn:neuralKernel}.
Fix a dimension $d$.
For every degree $l$, let $A_l(-, -)$ be the positive semidefinite Hilbert-Schmidt kernel on $L^2(\f 1 d \chi_d^2)$ such that \cref{eqn:isotropicGaussianSpectrum} holds (note that $A_l$ depends on $d$ but we suppress this notationally).
Then $A_l$ is continuous.
\end{lemma}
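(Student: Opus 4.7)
The plan is to express $A_l(q, q')$ as an integral of $\Phi(t, q, q')$ against a fixed weight function of $t$, and then appeal to dominated convergence.

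Concretely, by the orthogonality relation (\cref{eqn:gegennorm}) and the Gegenbauer expansion \cref{eqn:isotropicGaussianSpectrum}, we have the explicit formula
\begin{align*}
A_l(q, q')
=
\f{c_{d,l}}{\|\Geg{\f{d-2}2}{l}\|^2}
\int_{-1}^1
\Phi(t, q, q')
\Geg{\f{d-2}2}{l}(t)
(1 - t^2)^{\f{d-2}{2} - \f{1}{2}}
\dd t,
\end{align*}
where the norm $\|\Geg{\f{d-2}2}{l}\|$ is taken in $L^2((1-t^2)^{\f{d-1}2-1})$ and is a fixed positive constant depending only on $d$ and $l$. So it suffices to show that the RHS is continuous in $(q, q')$.

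Fix $(q_0, q_0') \in (\R^+)^2$ and pick any compact neighborhood $N \sbe (\R^+)^2$ of $(q_0, q_0')$. By \cref{thm:PhiTaylorExpansionVariableNorm}, $\Phi$ is jointly continuous on $[-1, 1] \times (\R^+)^2$, hence bounded on the compact set $[-1, 1] \times N$ by some constant $M$. Therefore, for every $(q, q') \in N$, the integrand is dominated pointwise by
\begin{align*}
g(t) \defeq M \cdot \lvert \Geg{\f{d-2}2}{l}(t) \rvert (1 - t^2)^{\f{d-2}{2} - \f{1}{2}},
\end{align*}
which is independent of $(q, q')$ and is integrable on $[-1, 1]$ (indeed, $\int_{-1}^1 g(t)^2 (1-t^2)^{-\f{d-1}2+\f 1 2}\dd t$ is essentially the Gegenbauer norm squared, and the weight $(1-t^2)^{\f{d-2}2-\f 12}$ itself is integrable since $d \ge 1$).

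Now take any sequence $(q_n, q_n') \to (q_0, q_0')$ inside $N$. Joint continuity of $\Phi$ gives pointwise convergence $\Phi(t, q_n, q_n') \to \Phi(t, q_0, q_0')$ for every $t \in [-1, 1]$, and the integrands are dominated by $g$. Dominated convergence then yields $A_l(q_n, q_n') \to A_l(q_0, q_0')$, which establishes continuity at $(q_0, q_0')$. Since $(q_0, q_0')$ was arbitrary, $A_l$ is continuous on $(\R^+)^2$. No step here is truly difficult: the only mildly delicate point is producing a dominating function, which is handled by the local boundedness of $\Phi$ from \cref{thm:PhiTaylorExpansionVariableNorm} together with the fact that the Gegenbauer weight is a fixed $L^1$ function of $t$.
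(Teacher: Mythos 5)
Your proof is correct and follows essentially the same route as the paper: recover $A_l(q,q')$ from the Gegenbauer orthogonality relation as a fixed weighted integral of $\Phi(t,q,q')$ over $t\in[-1,1]$, and deduce continuity in $(q,q')$ from the joint continuity of $\Phi$ guaranteed by \cref{thm:PhiTaylorExpansionVariableNorm}. The paper leaves the passage to the limit implicit, while you spell it out via local boundedness and dominated convergence, which is exactly the detail being suppressed.
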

\begin{proof}

By \cref{thm:PhiTaylorExpansionVariableNorm}, $\Phi$ is continuous.
Then the continuity of $A_l$ follows from the following integral expression:
\begin{align*}
A_l(q, q') = \inv R \f{d-2}{d+2l-2} \int_{-1}^1 \Phi(t, q, q') \Geg {\f{d-2}2} l (t) (1-t^2)^{\f {d-2}2 -1}\dd t
\end{align*}
where $R$ is the squared norm of $\Geg {\f{d-2}2} l (t)$ in $L^2((1 - t^2)^{\f{d-2}2 -1})$.
\end{proof}

\begin{lemma}\label{lemma:AlBound}
Suppose $K$ is the CK or NTK of an MLP, and let $\Phi$ be the corresponding function as in \cref{eqn:neuralKernel}.
Fix a dimension $d$.
For every degree $l$, let $A_l(-, -)$ be the positive semidefinite Hilbert-Schmidt kernel on $L^2(\f 1 d \chi_d^2)$ such that \cref{eqn:isotropicGaussianSpectrum} holds (note that $A_l$ depends on $d$ but we suppress this notationally).
Then for any $q, q' > 0$,
\begin{align*}
|A_l(q, q')| \times \f{d+2l-2}{d-2} \binom{l+d-3}l \le \sqrt{\Phi(1, q, q) \Phi(1, q', q')} \le \max\{\Phi(1, q, q), \Phi(1, q', q')\}.
\end{align*}
\end{lemma}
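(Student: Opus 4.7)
The plan is to combine the spectral structure on the sphere (\cref{thm:sphereEigens}) with the Cauchy--Schwarz inequality for pointwise PSD Hilbert--Schmidt kernels already developed in \cref{lemma:continuousKernelDiagonalEntriesBound}.

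\textbf{Step 1: diagonal bound.} Fix $q > 0$ and restrict attention to the univariate slice $\Phi_q(t) \defeq \Phi(t, q, q)$. The kernel $K_q(v, v') \defeq \Phi_q(\la v, v'\ra)$ on $\sphere^{d-1}$ is PSD because it is obtained from the PSD kernel $K$ by restriction to the sphere of radius $\sqrt{dq}$. Setting $q' = q$ in the expansion \cref{eqn:isotropicGaussianSpectrum} identifies the Gegenbauer coefficients of $\Phi_q$ as $\{A_l(q, q) c_{d, l}^{-1}\}_{l \ge 0}$, so by \cref{thm:sphereEigens} these are the eigenvalues of $K_q$ on the eigenspaces $\SphHar{d-1}{l}$. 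PSD-ness forces $A_l(q, q) \ge 0$ for every $l$. Now evaluating the Gegenbauer series at $t = 1$, using $\Geg{\f{d-2}{2}}{l}(1) = \binom{l + d - 3}{l}$ from \cref{fact:gegpoly1}:
\begin{equation*}
\Phi(1, q, q) = \sum_{l \ge 0} A_l(q, q) c_{d, l}^{-1} \binom{l + d - 3}{l} = \sum_{l \ge 0} A_l(q, q) \f{d + 2l - 2}{d - 2} \binom{l + d - 3}{l}.
\end{equation*}
Since every summand is nonnegative, each is individually bounded by the total:
\begin{equation*}
A_l(q, q) \cdot \f{d + 2l - 2}{d - 2} \binom{l + d - 3}{l} \le \Phi(1, q, q).
\end{equation*}

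\textbf{Step 2: off-diagonal via Cauchy--Schwarz.} The kernel $A_l$ is a continuous PSD Hilbert--Schmidt kernel on $L^2(\f 1 d \chi_d^2)$ by \cref{lemma:AlHS} and \cref{lemma:AlContinuous}, and the underlying measure has continuous full-support density on $\R^+$. Applying \cref{lemma:continuousKernelDiagonalEntriesBound} to $A_l$ therefore gives the pointwise bound
\begin{equation*}
|A_l(q, q')| \le \sqrt{A_l(q, q)\, A_l(q', q')}.
\end{equation*}
Multiplying both sides by $\f{d + 2l - 2}{d - 2} \binom{l + d - 3}{l}$ and applying Step 1 to each of the two diagonal factors on the right yields the first inequality of the lemma. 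The second inequality $\sqrt{ab} \le \max\{a, b\}$ for $a, b \ge 0$ is elementary.

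\textbf{Main obstacle.} There is no substantive obstacle: once one recognizes that $A_l(q, q)$ is an eigenvalue of the PSD spherical kernel $K_q$ (so that summing the Gegenbauer expansion at $t = 1$ turns $\Phi(1, q, q)$ into a nonnegative linear combination of the $A_l(q, q)$'s), the sharp prefactor $\f{d + 2l - 2}{d - 2} \binom{l + d - 3}{l}$ falls out of \cref{fact:gegpoly1} in one step, and the off-diagonal bound is immediate from the already-proved pointwise Cauchy--Schwarz lemma.
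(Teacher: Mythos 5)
Your proposal follows essentially the same route as the paper's proof: the pointwise Cauchy--Schwarz bound $|A_l(q,q')| \le \sqrt{A_l(q,q)\,A_l(q',q')}$ obtained from \cref{lemma:continuousKernelDiagonalEntriesBound} (using continuity from \cref{lemma:AlContinuous}), combined with bounding each nonnegative term of the Gegenbauer expansion evaluated at $t=1$ by the whole sum $\Phi(1,q,q)$, the prefactor $\f{d+2l-2}{d-2}\binom{l+d-3}{l}$ coming from \cref{fact:gegpoly1}. The only genuine difference is how you get $A_l(q,q)\ge 0$: you view $A_l(q,q)c_{d,l}^{-1}$ as eigenvalues of the PSD kernel obtained by restricting $K$ to the radius-$\sqrt{dq}$ sphere (\cref{thm:sphereEigens}), whereas the paper simply reuses \cref{lemma:continuousKernelDiagonalEntriesBound}; both work. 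One step you assert without justification is the evaluation of the expansion \cref{eqn:isotropicGaussianSpectrum} at the endpoint $t=1$: a priori this is only an $L^2$ identity against the weight $(1-t^2)^{\f{d-1}2-1}$, which vanishes at $t=\pm 1$, so pointwise convergence there is not automatic. The paper closes this by noting that $\Phi(t,q,q')$ is analytic in $t$ on $[-1,1]$ for fixed $q,q'$ (\cref{thm:PhiTaylorExpansionVariableNorm}) and invoking the pointwise convergence result \cref{thm:GegPointwiseConverge}; adding that citation (or an equivalent argument) makes your Step 1 rigorous.
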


\begin{proof}

Note that $A_l$ is continuous by \cref{lemma:AlContinuous}.
Since the measure of $\f 1 d \chi_d^2$ has a continuous density function supported on $\R^+$, we can apply \cref{lemma:continuousKernelDiagonalEntriesBound}, which shows that, for any $q, q' \in \R^+$,
\begin{align}
|A_l(q, q')| \le \sqrt{A_l(q, q)A_l(q', q')}.
\label{eqn:AlDiagonalBound}
\end{align}

Let $c_{d,l}$ and $\Geg {\f{d-2}2} l$ be as in \cref{thm:gaussianEigens}.
By the analyticity of $\Phi(t, q, q')$ in $t$ for every $q, q' \in \R^+$ (as a result of \cref{thm:PhiTaylorExpansionVariableNorm}), \cref{thm:GegPointwiseConverge} shows
\begin{align*}
\Phi(1, q, q')
    &=
        \sum_{l=0}^\infty A_l(q, q') c_{d,l}^{-1} \Geg {\f{d-2}2} l (1)\\
    &=
        \sum_{l=0}^\infty A_l(q, q') \f{d+2l-2}{d-2} \binom{l+d-3}l
\end{align*}
where the second equality follows from \cref{fact:gegpoly1}.
By \cref{lemma:continuousKernelDiagonalEntriesBound} again, $A_l(q, q) \ge 0$ for all $q \in \R^+$, so that in the RHS, all summands are nonnegative.
When $q = q'$, we then have the trivial bound
\begin{align*}
\Phi(1, q, q) \ge A_l(q, q) \f{d+2l-2}{d-2} \binom{l+d-3}l
\end{align*}
for each $l \ge 0$.

Combined with \cref{eqn:AlDiagonalBound}, this yields
\begin{align*}
|A_l(q, q')| \le \sqrt{A_l(q, q) A_l(q', q')}
\le
    \lp \f{d+2l-2}{d-2} \binom{l+d-3}l \rp^{-1}
    \sqrt{\Phi(1, q, q)\Phi(1, q', q')}
\end{align*}
as desired.

\end{proof}

\section{The \texorpdfstring{$\{0, 1\}^d$}{\{0, 1\}^d} vs the \texorpdfstring{$\{\pm 1\}^d$}{\{1, -1\}^d} Boolean Cube}
\label{sec:01booleancube}

\citet{valle-perezDeepLearningGeneralizes2018arXiv.org} actually did their experiments on the $\{0, 1\}^d$ boolean cube, whereas here, we have focused on the $\{\pm 1\}^d$ boolean cube.
As datasets are typically centered before feeding into a neural network (for example, using Pytorch's \texttt{torchvision.transform.Normalize}), $\{\pm 1\}^d$ is much more natural.
In comparison, using the $\{0, 1\}^d$ cube is equivalent to adding a bias in the input of a network and reducing the weight variance in the input layer, since any $x \in \{\pm 1\}^d$ corresponds to $\frac 1 2 (x + 1) \in \{0, 1\}^d$.
As such, one would expect there is \emph{more bias toward low frequency components with inputs from $\{0, 1\}^d$.}

Nevertheless, here we verify that our observations of \cref{sec:clarifySimplicityBias} above still holds over the $\{0, 1\}^d$ cube by repeating the same experiments as \cref{fig:simplicitybias1} in this setting (\cref{fig:simplicitybias01}).
Just like over the $\{\pm 1\}^d$ cube, the relu network biases significantly toward certain functions, but with erf, and with increasing $\sigma_w^2$, this lessens.
With depth 32 and $\sigma_w^2$, the boolean functions obtained from erf network see no bias at all.

\begin{figure}
\centering
\includegraphics[width=0.4\textwidth]{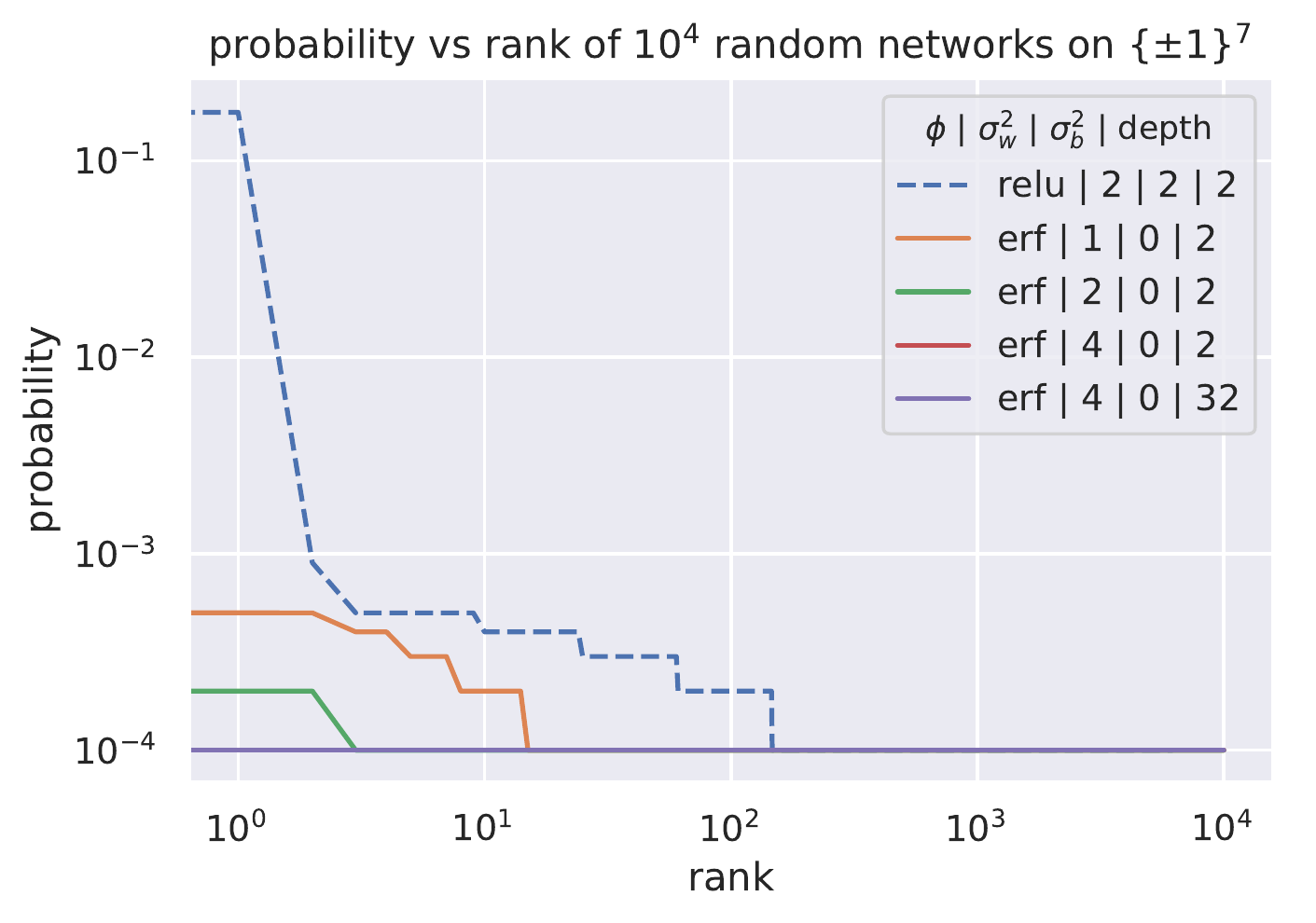}
\caption{\textbf{The same experiments as \cref{fig:simplicitybias1} but over $\{0, 1\}^7$.}}
\label{fig:simplicitybias01}
\end{figure}
\end{document}